\begin{document}

\title{Geometric Insights into Support Vector Machine Behavior using the KKT Conditions}

\author{\name Iain Carmichael \email iain@unc.edu \\
       \addr Department of Statistics and Operations Research\\
       University of North Carolina\\
       Chapel Hill, NC 27516, USA
       \AND
       \name \name J.S. Marron  \email marron@unc.edu \\
       \addr Department of Statistics and Operations Research\\
       University of North Carolina\\
       Chapel Hill, NC 27516, USA}

\editor{}

\maketitle

\begin{abstract}%   <- trailing '%' for backward compatibility of .sty file

The \textit{support vector machine} (SVM) is a powerful and widely used classification algorithm. This paper uses the Karush-Kuhn-Tucker conditions to provide rigorous mathematical proof for new insights into the behavior of SVM. These insights provide perhaps unexpected relationships between SVM and two other linear classifiers: the \textit{mean difference} and the \textit{maximal data piling direction}. For example, we show that in many cases SVM can be viewed as a cropped version of these classifiers. By carefully exploring these connections we show how SVM tuning behavior is affected by characteristics including: balanced vs. unbalanced classes, low vs. high dimension, separable vs. non-separable data. These results provide further insights into tuning SVM via cross-validation by explaining observed pathological behavior and motivating improved cross-validation methodology. Finally, we also provide new results on the geometry of \textit{complete data piling directions} in high dimensional space. 

\end{abstract}

\begin{keywords}
support vector machine, high-dimensional data, KKT conditions, data piling
\end{keywords}

\section{Introduction}

The \textit{support vector machine} (SVM) is a popular and well studied classification algorithm (for an overview see \citealt{scholkopf2002learning}; \citealt{shawe2004kernel}; \citealt{steinwart2008support}; \citealt{mohri2012foundations}; \citealt{murphy2012machine}). Classical classification algorithms, such as \textit{logistic regression} and \textit{Fisher linear discrimination} (FLD) are motivated by fitting a statistical distribution to the data. Hard margin SVM on the other hand is motivated directly as an optimization problem based on the idea that a good classifier should maximize the margin between two classes of separable data. Soft margin SVM balances two competing objectives; maximize the margin while penalizing points on the wrong side of the margin. 

Interpretability, explainability, and more broadly understanding why a model makes its decisions are active areas of research in machine learning \cite{guidotti2018survey, doshi2017towards}. There is a large body of research providing theoretical guarantees and computational advances for studying SVM \cite{vapnik2013nature, steinwart2008support}. Several papers have shed some light on SVM by placing it in a probabilistic framework \cite{sollich2002bayesian, polson2011data, franc2011support}. Here we take a different approach based on optimization and geometry, to understand the inner workings of SVM.

The setting of this paper is the two class classification problem. We focus on linear classifiers, but the results extend to corresponding kernel classifiers. We consider a wide range of data analytic regimes including: high vs. low dimension, balanced vs. unbalanced class sizes and separable vs. non-separable data. 

Using the \textit{KKT conditions}, this paper demonstrates novel insights into how SVM's behavior is related to a given dataset and furthermore how this behavior is affected by the tuning parameter. We discover a number of connections between SVM and two other classifiers: the \textit{mean difference} (MD) and \textit{maximal data piling classifier} (MDP). These connections are summarized in Figure \ref{fig:svm_relations}. In particular, when $C$ is small, soft margin SVM behaves like a (possibly cropped) MD classifier (Theorem \ref{thm:soft_large_C}). When the data are high dimensional, hard SVM (and soft margin with large $C$) behaves like a cropped MDP classifier (Theorem \ref{thm:hm_mdp}, Corollary \ref{cor:svm_cropped_mdp}, Theorem \ref{thm:soft_large_C}). The connection between SVM and the MD further implies connections between SVM, after a data transformation, and a variety of other classifiers such as \textit{naive Bayes} (NB) (see Section \ref{ss:md}). The connection between SVM and MDP provides novel insights into the geometry of the MDP classifier (Sections \ref{ss:data_piling_intro}, \ref{ss:data_piling_geometry_dis}). These insights explain several observed, surprising SVM behaviors which motivated this paper (Section \ref{ss:motivation}). They furthermore have applications to improving SVM cross-validation methodology and lead us to propose a modified SVM intercept term which can improve test set performance (Section \ref{s:applications}).

\begin{figure}
\centering
\textbf{Relations Between SVM and Other Classifiers}\par\medskip
\includegraphics[width=\textwidth]{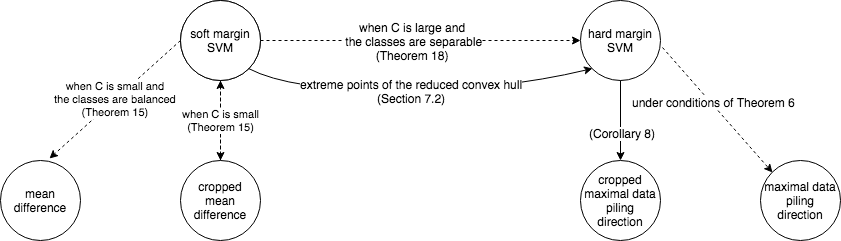}
\caption{SVM reduces to another classifier under the condition stated in the arrow. Solid line means the relation always holds. Dashed line means the relation may or may not hold depending on the data. For example, SVM reduces to the mean difference when the classes are balanced and $C$ is sufficiently small ($C \le C_{\text{small}}$) which is shown in Theorem \ref{thm:soft_small_C}.}
\label{fig:svm_relations}
\end{figure}

\subsection{Motivating Example} \label{ss:motivation}

The motivation for this paper is to understand surprising, observed SVM behavior. This section uses a simple, two dimensional example to demonstrate a number of instances of pathological or surprising SVM behavior which the rest of the paper explains and then builds upon.

Figures \ref{fig:svm_bal} and \ref{fig:svm_unbal} show the result of fitting SVM for a range of tuning parameters. The data in both figures are generated from a two dimensional Gaussian with identity covariance such that the class means are 4 apart. In Figure \ref{fig:svm_bal} the classes are balanced (20 points in each class). The data points in Figure \ref{fig:svm_unbal} are the same points as the first figure, but one additional point is added to the positive class (blue squares) so the classes are unbalanced. In both cases the classes are linearly separable. 

\begin{figure}
\centering
\begin{subfigure}[b]{0.3\textwidth}
\includegraphics[width=\textwidth]{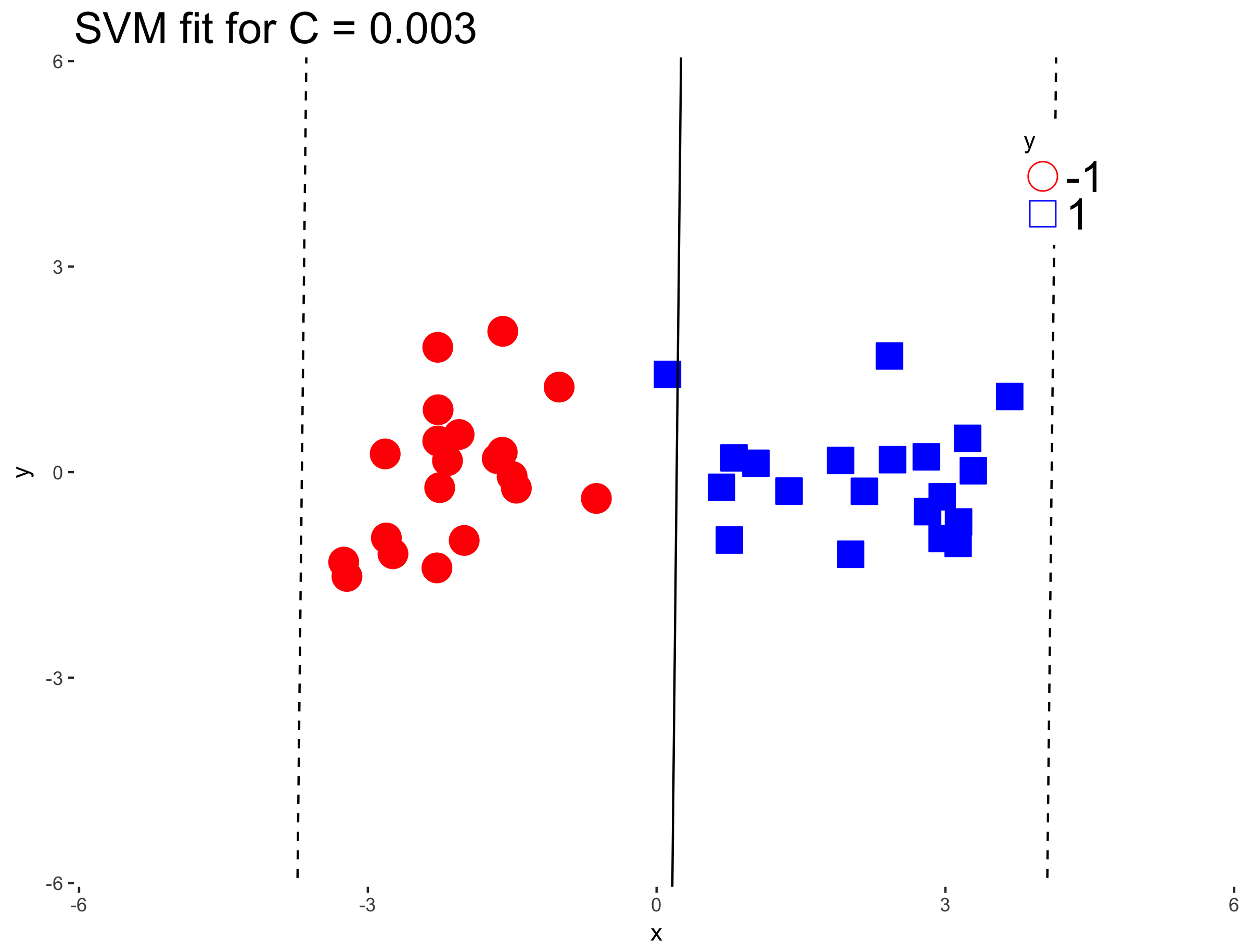}
\caption{Small C}
\label{fig:bal_large_C}
\end{subfigure}
~ %add desired spacing between images, e. g. ~, \quad, \qquad, \hfill etc. 
%(or a blank line to force the subfigure onto a new line)
\begin{subfigure}[b]{0.3\textwidth}
\includegraphics[width=\textwidth]{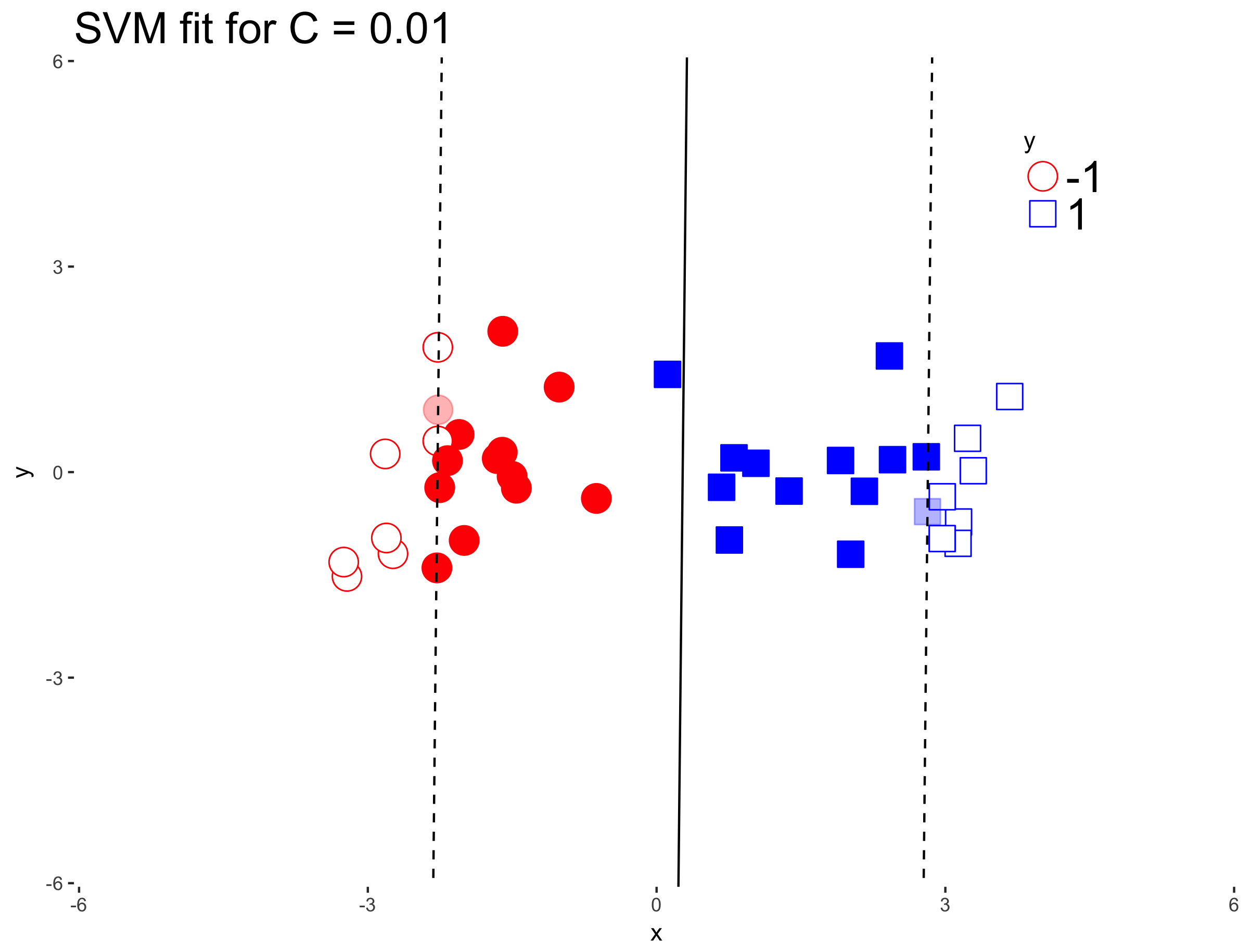}
\caption{Moderate C}
\label{fig:bal_moderate_C}
\end{subfigure}
 ~ %add desired spacing between images, e. g. ~, \quad, \qquad, \hfill etc. 
%(or a blank line to force the subfigure onto a new line)
\begin{subfigure}[b]{0.3\textwidth}
\includegraphics[width=\textwidth]{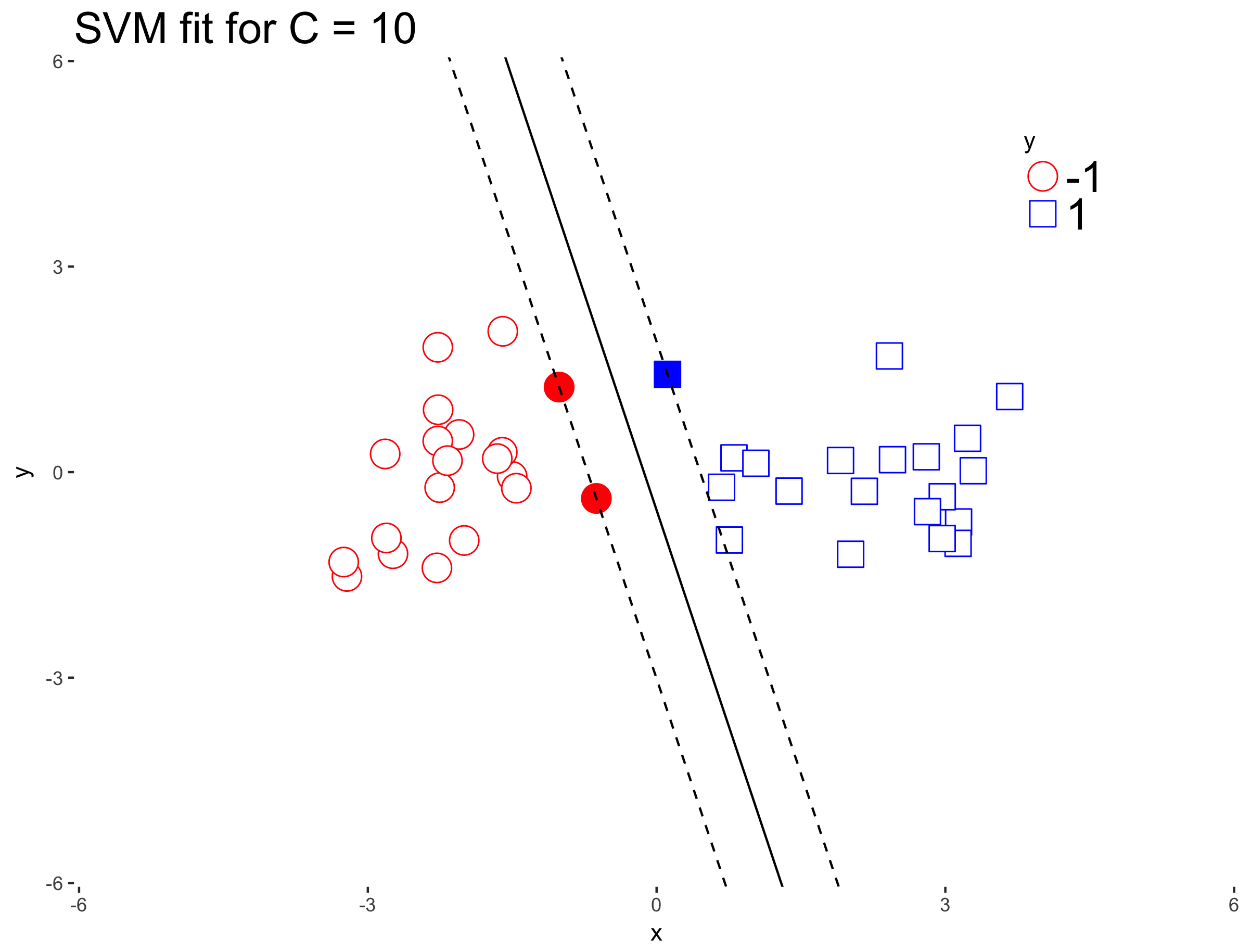}
\caption{Large C}
\label{fig:bal_small_C}
\end{subfigure}
 ~
\begin{subfigure}[b]{0.3\textwidth}
\includegraphics[width=\textwidth]{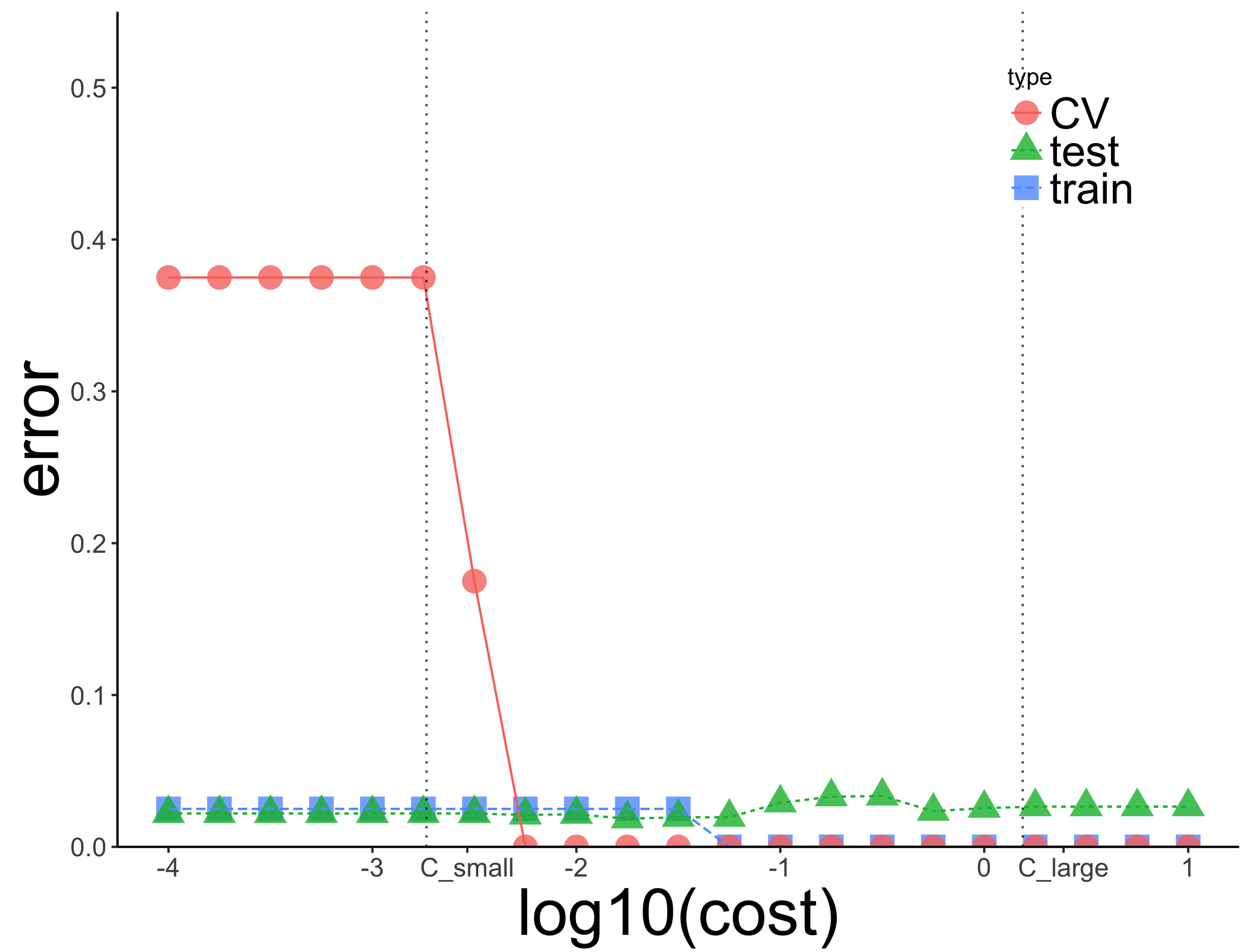}
\caption{Error rate}
\label{fig:bal_error}
\end{subfigure}
\begin{subfigure}[b]{0.3\textwidth}
\includegraphics[width=\textwidth]{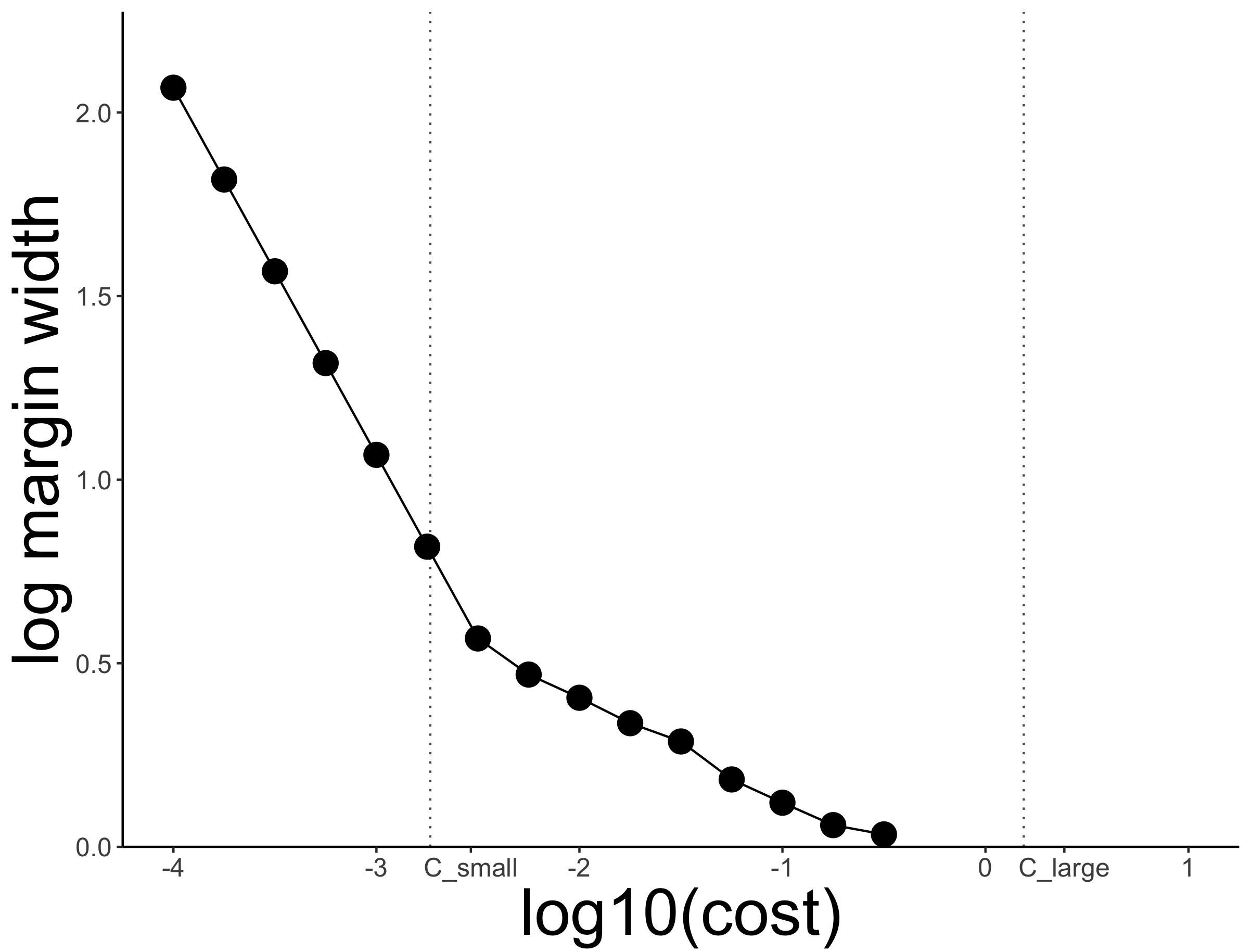}
\caption{Margin}
\label{fig:bal_margin}
\end{subfigure}
\begin{subfigure}[b]{0.3\textwidth}
\includegraphics[width=\textwidth]{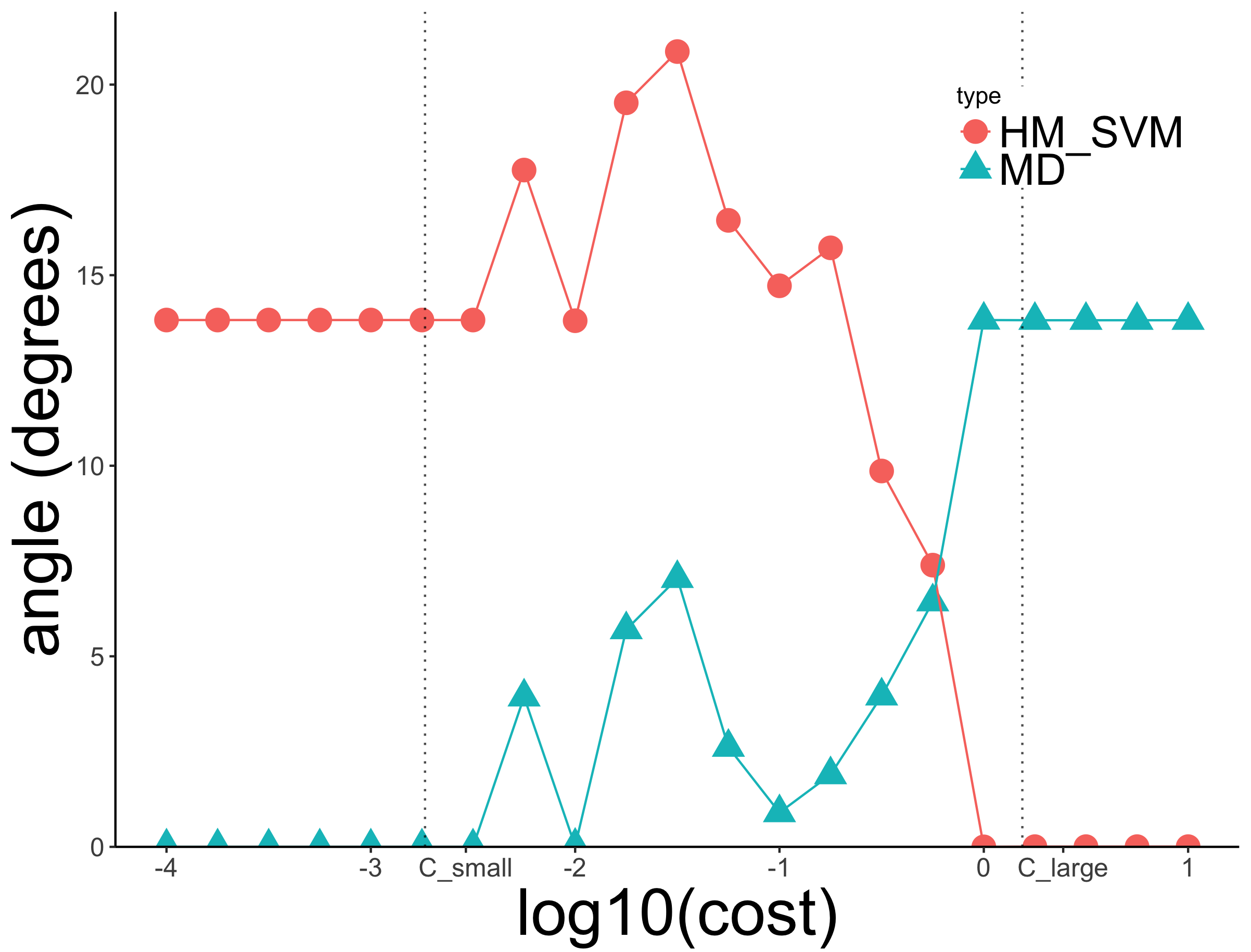}
\caption{SVM vs. MD, HM-SVM}
\label{fig:bal_angle}
\end{subfigure}
\caption{(Balanced classes) The top rows show the SVM fit for various values of $C$. The bottom row shows diagnostics which are described in the text. Figure \ref{fig:bal_error} shows that the cross-validation error curve can be very different from the training and test error. Figure \ref{fig:bal_angle}  shows that for small enough values of $C$, the SVM and MD directions are the same.}
\label{fig:svm_bal}
\end{figure}

\begin{figure}
\centering
\begin{subfigure}[b]{0.3\textwidth}
\includegraphics[width=\textwidth]{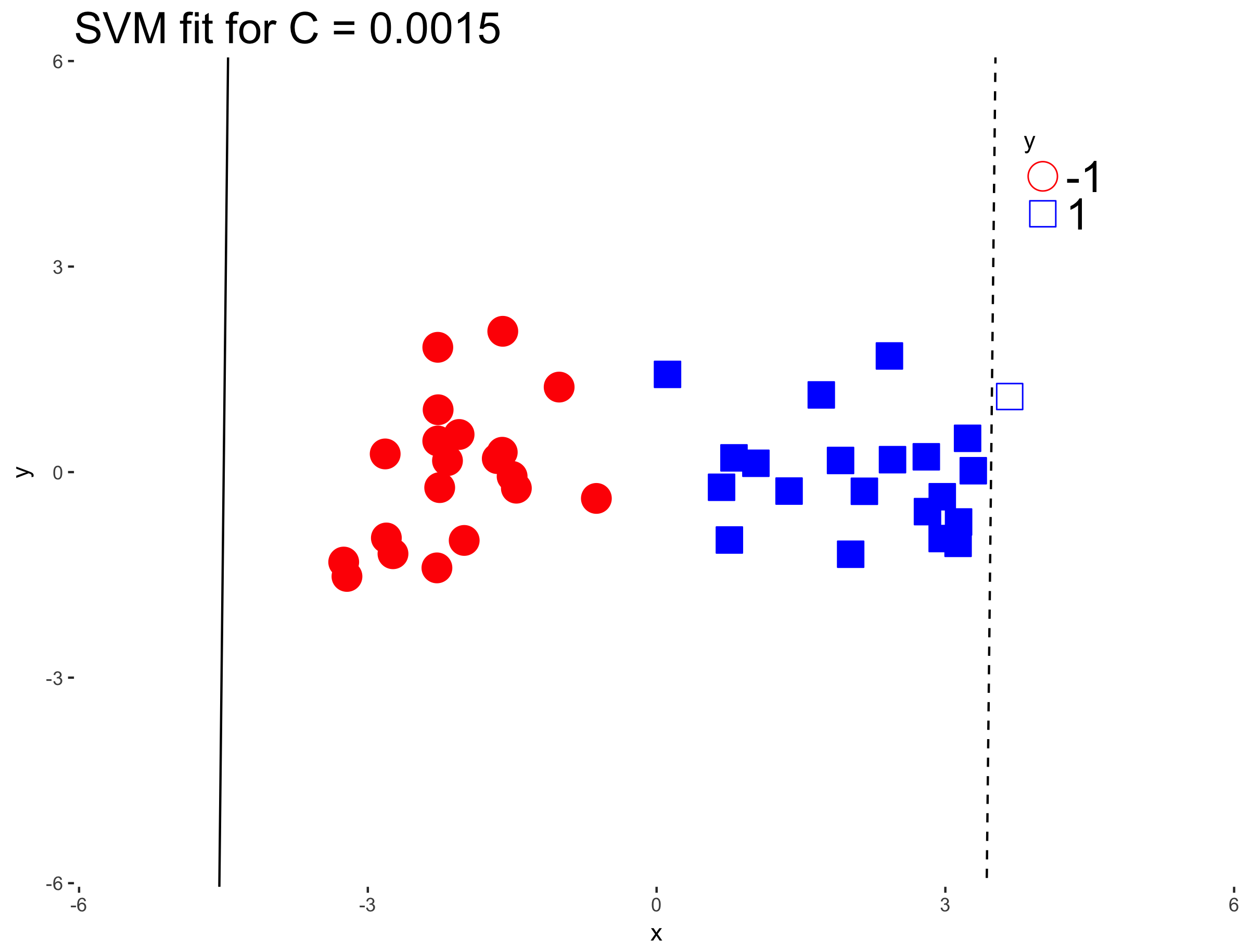}
\caption{Small C}
\label{fig:unbal_large_C}
\end{subfigure}
~ %add desired spacing between images, e. g. ~, \quad, \qquad, \hfill etc. 
%(or a blank line to force the subfigure onto a new line)
\begin{subfigure}[b]{0.3\textwidth}
\includegraphics[width=\textwidth]{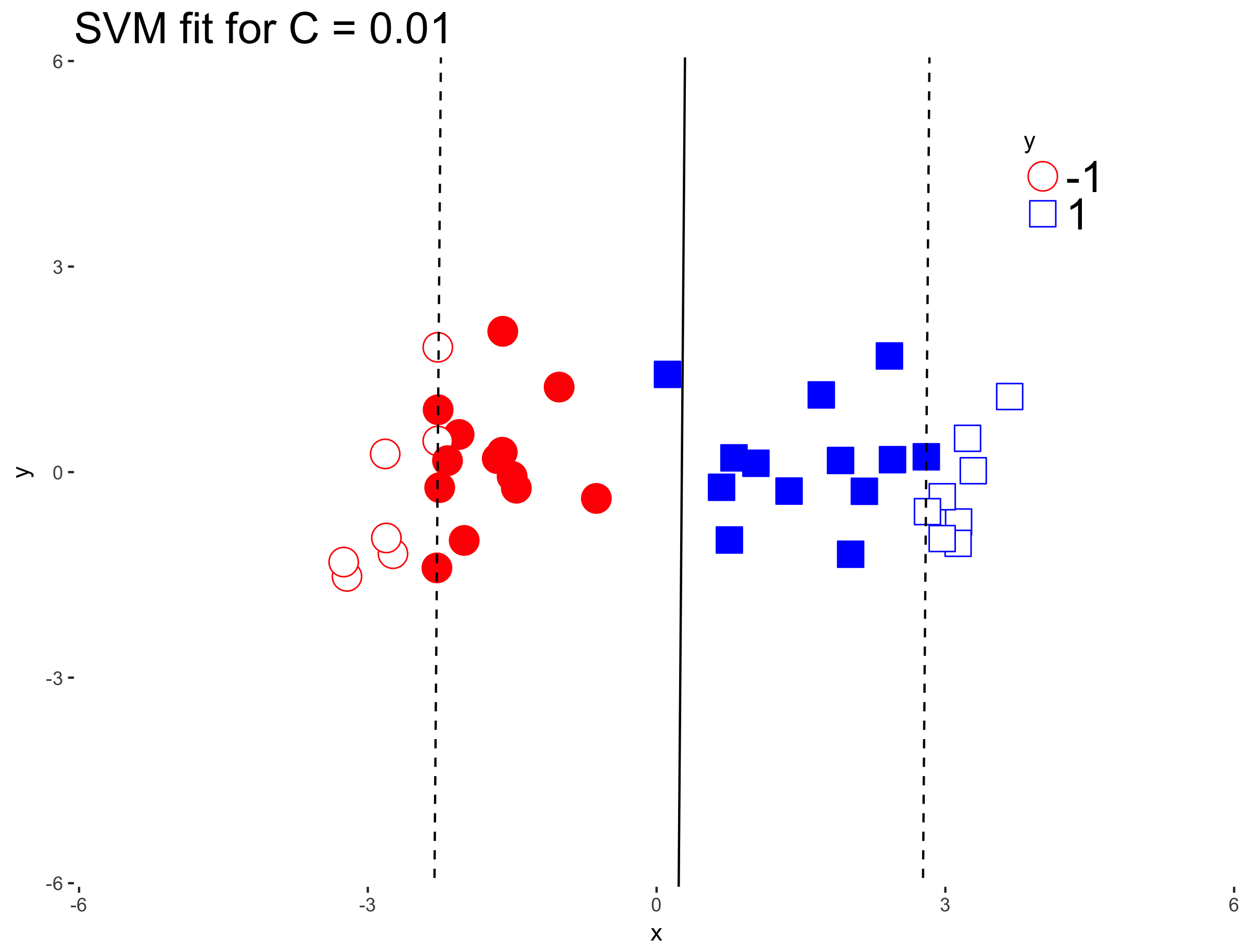}
\caption{Moderate C}
\label{fig:unbal_moderate_C}
\end{subfigure}
 ~ %add desired spacing between images, e. g. ~, \quad, \qquad, \hfill etc. 
%(or a blank line to force the subfigure onto a new line)
\begin{subfigure}[b]{0.3\textwidth}
\includegraphics[width=\textwidth]{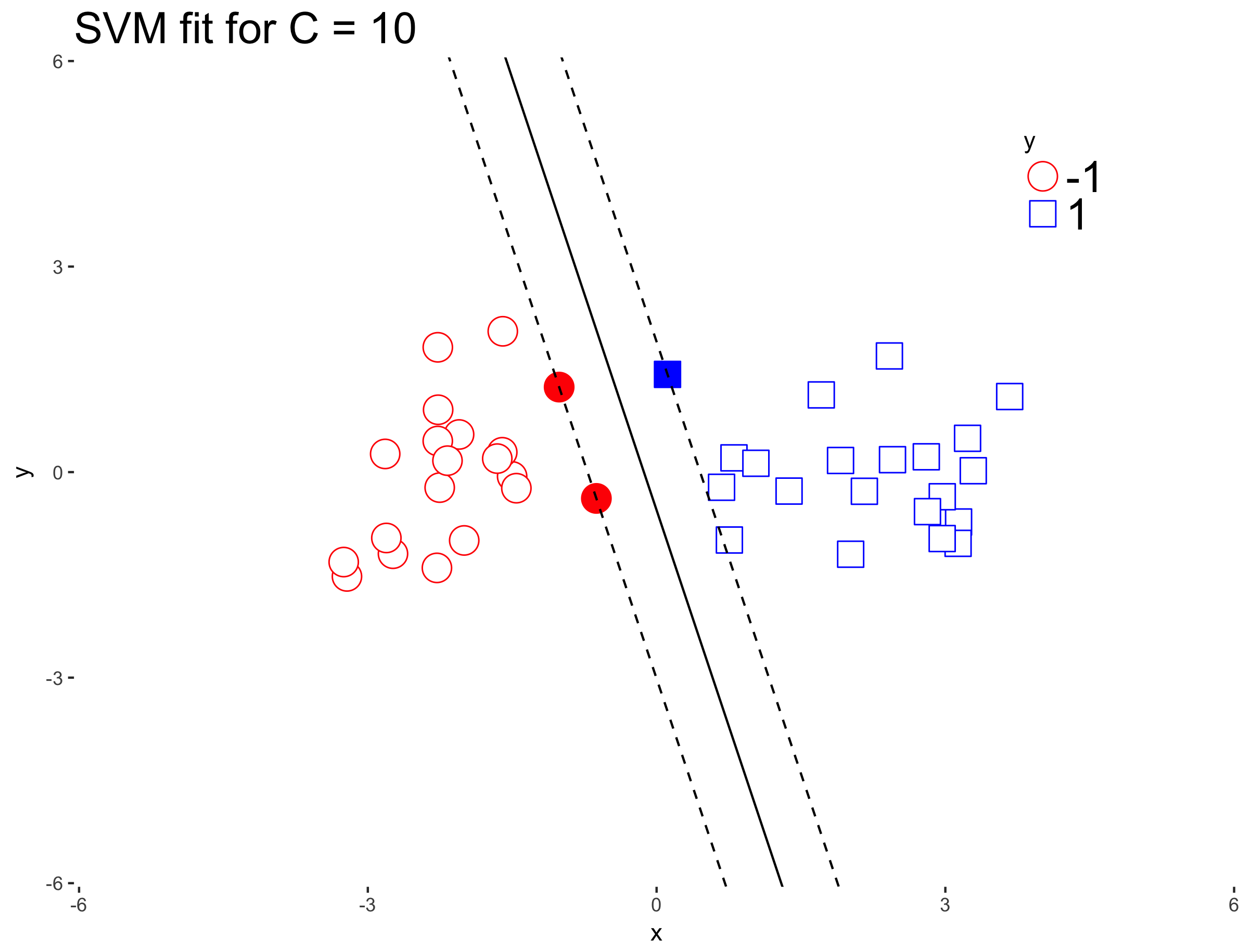}
\caption{Large C}
\label{fig:unbal_small_C}
\end{subfigure}
 ~
\begin{subfigure}[b]{0.3\textwidth}
\includegraphics[width=\textwidth]{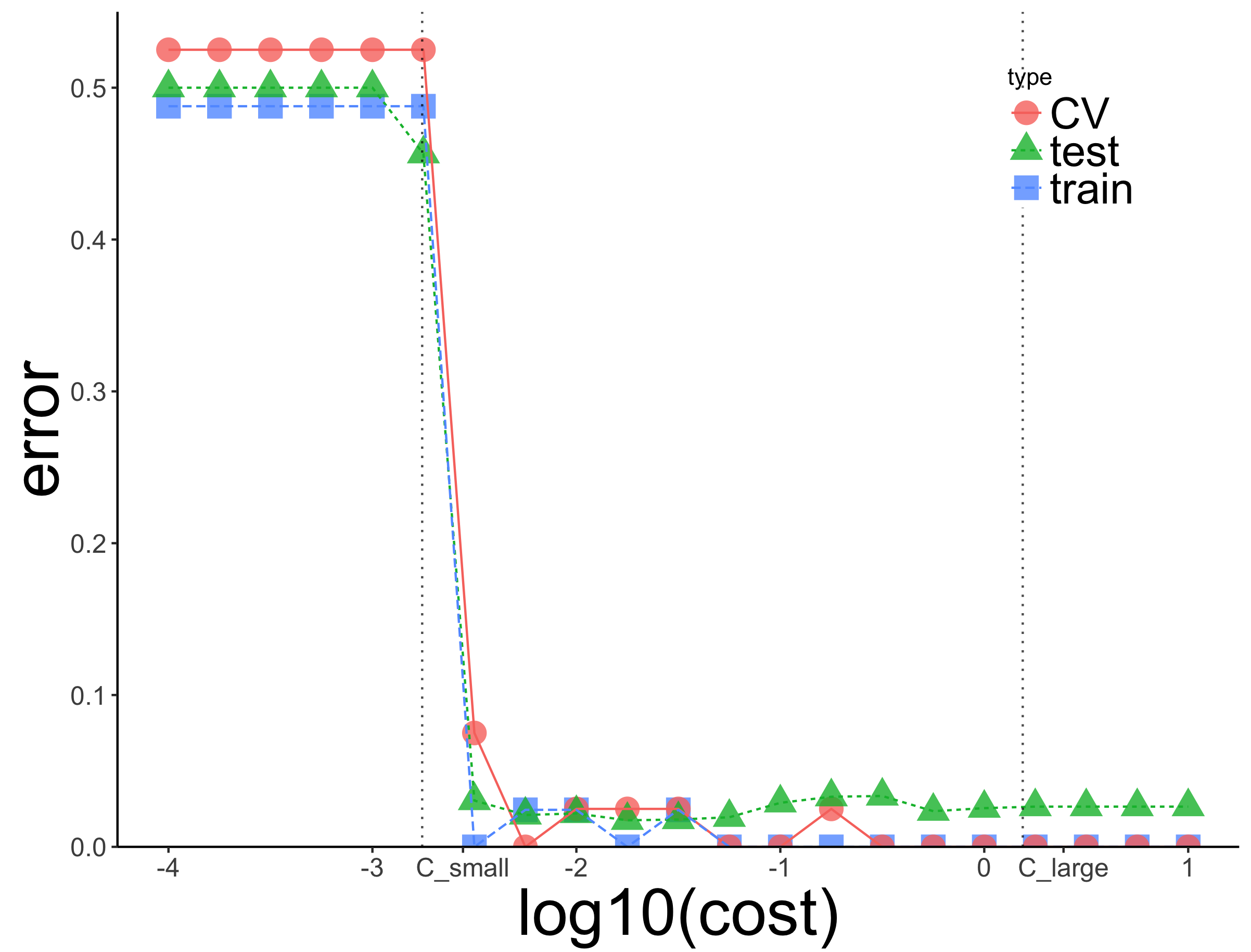}
\caption{Error rate}
\label{fig:unbal_error}
\end{subfigure}
\begin{subfigure}[b]{0.3\textwidth}
\includegraphics[width=\textwidth]{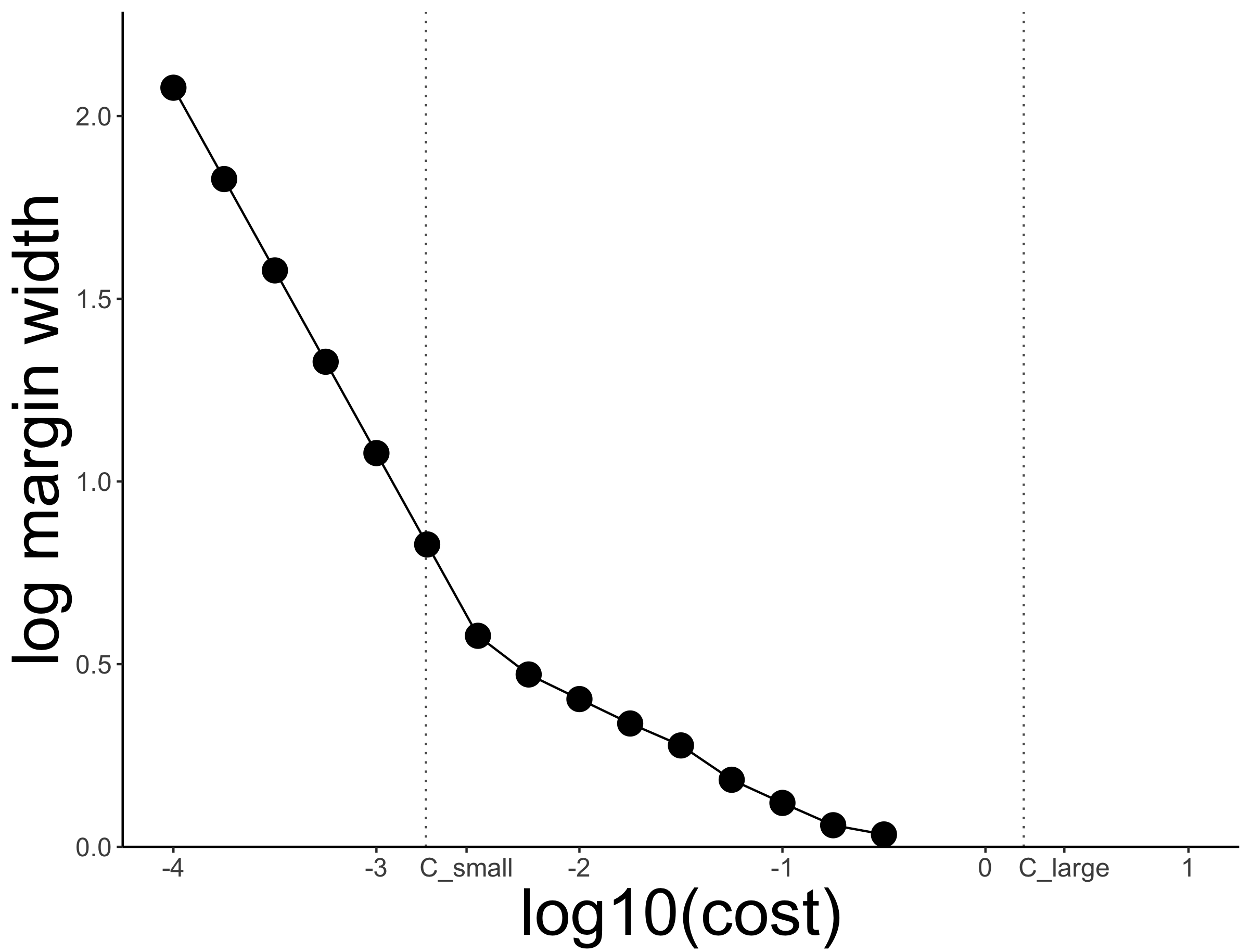}
\caption{Margin}
\label{fig:unbal_margin}
\end{subfigure}
\begin{subfigure}[b]{0.3\textwidth}
\includegraphics[width=\textwidth]{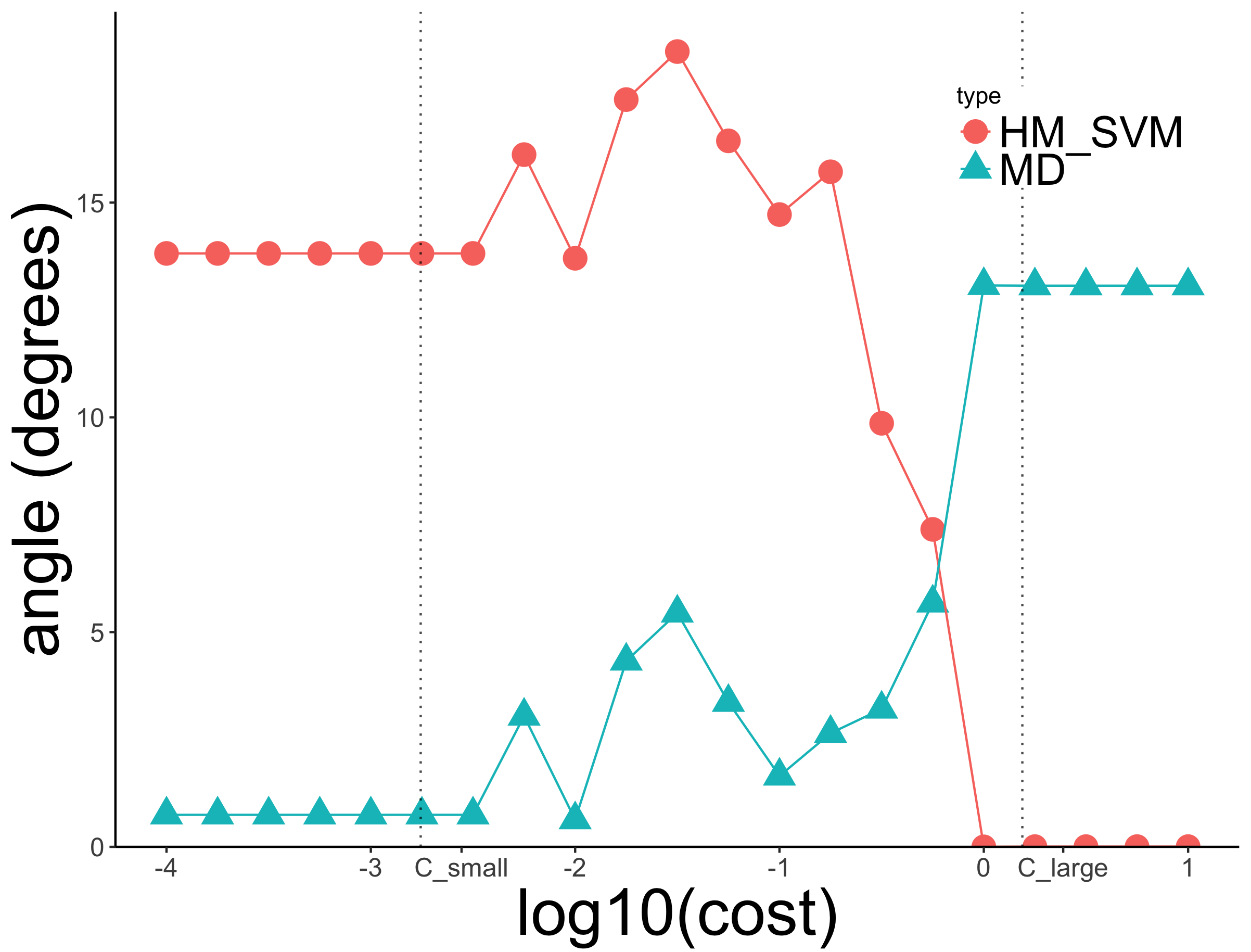}
\caption{SVM vs. MD, HM-SVM}
\label{fig:unbal_angle}
\end{subfigure}
\caption{(Unbalanced classes). The panels are the same as in Figure \ref{fig:svm_bal}, but the data now have one additional point added. When $C$ is small, the top left panel shows SVM classifies every point to the larger class (the separating hyperplane is pushed past the smaller class). For this unbalanced example the cross-validation, train and test error all behave similarly, unlike the balanced case (compare Figure \ref{fig:unbal_error} to \ref{fig:bal_error}). When $C$ is small, the angle between SVM and the MD is small but not exactly zero (compare Figure \ref{fig:unbal_angle} to \ref{fig:bal_angle}).}
\label{fig:svm_unbal}
\vspace{-.27in}
\end{figure}

The top row of panels show the data along with the SVM separating hyperplane (solid line) for three different values of $C$. The \textit{marginal hyperplanes} are shown as dashed lines and the filled in symbols are \textit{support vectors}. The bottom three panels show various functions of $C$. The bottom left panel shows three error curves: training, cross-validation (5-folds), and test set error. The bottom middle panel shows the margin width. Finally, the bottom right panel shows the angle between the soft margin SVM direction and both the hard margin SVM direction and the mean difference direction. The vertical dashed lines indicate the values of $C_{small}$ and $C_{large}$ which are discussed below. See references above or Sections \ref{ss:hm_kkt}, \ref{ss:soft_kkt} for definitions of the margin and support vectors.

Important features of these plots include:
\begin{enumerate}
\item For balanced classes (Figure \ref{fig:svm_bal}), the training, cross-validation and test set error is low for most values of $C$, then suddenly shoots up to around 50\% error for a small enough values of $C$ (see Figure \ref{fig:bal_error}). For unbalanced classes (Figure \ref{fig:svm_unbal}), this tuning error explosion for small $C$ only happens for cross validation, not the tuning or test sets (see Figure \ref{fig:unbal_error}). This pathological behavior is concerning for a number of reasons, including it demonstrates an example when performance with cross-validation may not reflect test set performance. Moreover, it is not clear why this behavior is happening.

\item Figure \ref{fig:bal_angle} show that the SVM decision boundary can be parallel to the mean difference decision boundary when the data are balanced. This behavior is surprising because the SVM optimization problem is not immediately connected to the means of the two classes. Similarly, Figure \ref{fig:unbal_angle} demonstrates an example when the SVM and MD decision boundaries are almost parallel for unbalanced classes. 

\item Both Figures \ref{fig:bal_angle}  and \ref{fig:unbal_angle} show that soft margin SVM becomes exactly equivalent to hard margin SVM for some finite value of $C$ when the data are separable.
\end{enumerate}

Theorem \ref{thm:soft_small_C} gives a complete answer to why and when the first two of these behaviors occur. Moreover, it demonstrates that this behavior occurs for every dataset. For the first example, if the data are unbalanced then the intercept term will always go off to infinity for small enough values of the tuning parameter; while SVM finds a good direction, its performance is betrayed by its intercept. For the second example, when $C$ is smaller than a threshold value $C_{\text{small}}$ (Definition \ref{def:c_small}), the SVM direction will be exactly equivalent to the MD direction when the data are balanced. Similarly, when the data are unbalanced and $C < C_{\text{small}}$ the SVM direction is close to the MD direction. In this latter case, Equations  \ref{eq:cropped_md},  \ref{eq:cropped_md_constraint} show the SVM direction must satisfy constraints that make it a cropped mean difference direction.

A formula for this threshold $C_{\text{small}}$ governing when SVM behaves like the MD is given in Definition \ref{def:c_small} as a function of the diameter of the training data. Similarly, a formula for a threshold $C_{\text{large}}$ governing when soft margin SVM becomes hard margin SVM is given in Definition \ref{def:c_large} as a function of the gap between the two training classes. These two thresholding values are shown as dotted vertical lines in the bottom three panels of Figures \ref{fig:svm_bal} and \ref{fig:svm_unbal}. 

Careful study of these behaviors, including the given formulas for the two thresholds, shows ways in which soft margin SVM's behavior can change depending on characteristics of the data including: balanced vs. unbalanced classes, whether $d \ge n - 1$, the two class diameter, whether the classes are separable and the gap between the two classes when they are separable. These results then lead to new insights into SVM tuning (Section \ref{s:applications}).

\subsection{Related Literature}\label{ss:lit_review}

\cite{hastie2004entire} show how to efficiently compute the entire SVM tuning path. While a consequence of their technical results shows that for small enough $C$, SVM behaves like the MD, they don't make the explicit connection to the MD classifier. For balanced classes they prove SVM is equivalent to the MD. For unbalanced we give a stronger, more specific characterization as a cropped MD (see Theorem \ref{thm:soft_small_C} and Lemma \ref{lem:margin_dim_bound}). Additionally, they did not find the important, general threshold values $C_{\text{small}}$ or $C_{\text{large}}$ which depend on the diameter (gap) of the data which have useful consequences for cross-validation.

Connections between SVM and other classifiers have been studied before, for example, \cite{jaggi2014equivalence} studies connections between SVM and logistic regression with an L1 penalty.

We thank the reviewers for pointing us to the nu-SVM literature \cite{scholkopf2000new, crisp2000geometric, bennett2000duality, chen2005tutorial, mavroforakis2006geometric, barbero2015geometric}. These papers re-parameterize the SVM optimization problem in a way which also provides geometric insights into the SVM solution and makes the tuning parameter more interpretable (roughly controlling the number of support vectors). Section \ref{ss:nu_svm} discusses how we can use the nu-SVM formulation and our results to provide additional insights into SVM. The nu-SVM formulation could also be used to prove some of our results (e.g. a weaker version of parts of Theorem \ref{thm:soft_small_C}) in a different way (however, we believe our proof techniques require less background work). The nu-SVM literature is mostly focused on computation and we did not find much overlap with our results.

SVM robustness properties have been previously studied (\cite{scholkopf2000new, steinwart2008support}), however, the cropped MD characterization of SVM for small $C$ appears to be new.

We find the gap and diameter (Definitions \ref{def:gap}, \ref{def:diam}) of the dataset are important quantities for SVM tuning. These quantities show up in other places in the SVM literature, for example, their ratio is an important quantity in statistical learning theory \cite{vapnik1999overview}.

Some previous papers have suggested modifying SVM's intercept \cite{crisp2000geometric}. We suggest a particular modification (Section \ref{ss:better_svm_intercept}) which addresses the \textit{margin bounce} phenomena (Section \ref{ss:margin_bounce}).  

SVM tuning has been extensively studied (\cite{steinwart2008support}[Chapter 11]). Some papers focus on computational aspects of SVM tuning e.g. cheaply computing the full tuning path \cite{hastie2004entire}. Other papers focus on tuning kernel parameters \cite{sun2010analysis}.  Some papers optimize alternative metrics which attempt to better approximate the test set error \cite{chapelle2000model, ayat2005automatic}. Some papers propose default values for tuning parameters \cite{mattera1999support, cherkassky2004practical}. Our tuning results provide different kinds of insights whose applications are discussed in more detail in Section \ref{s:applications}.

%%%%%%%%%%%%%%%%%%%%%%%%%%%%%%%%%%%%%%%%%%%%%%%%%%%%%%%%

\section{Setup and Notation}\label{s:setup}

A linear classifier is defined via the \textit{normal vector} to its discriminating hyperplane and an \textit{intercept} (or \textit{offset}). A key idea in this paper is to compare \textit{directions} of linear classifiers. Comparing the direction between two classifiers means comparing their normal vector directions; we say two directions are equivalent if one is a scalar multiple of the other (see Section 2). Note that two classifiers may have the same direction, but lead to different classification algorithms (i.e. the intercepts may differ).

Suppose we have $n$ labeled data points $\{(\mathbf{x}_i, y_i) \}_{i=1}^n$  and index sets $I_+, I_-$ such that $y_i = 1$ if $i \in I_+$, $y_i = -1$ if $i \in I_-$ and $\mathbf{x}_i \in \mathbb{R}^d$. Let $n_+ = |I_+|$ and $n_- = |I_-|$ be the class sizes (i.e. $n_- + n_+ = n$). We consider linear classifiers whose decision function is given by 
$$
f(\mathbf{x}) = \mathbf{w}^T \mathbf{x} + b,
$$
where $\mathbf{w} \in \mathbb{R}^d$ is the normal vector and $b \in \mathbb{R}$ is the intercept (classification rule sign$(f(x))$). 

Given two vectors $\mathbf{v}, \mathbf{w} \in \mathbb{R}^d$ we consider their \textit{directions to be equivalent} if there exists $a \in \mathbb{R}, a \neq 0$ such that $a \mathbf{w} = \mathbf{v}$ (and we will write $\mathbf{w} \propto \mathbf{v}$). Using this equivalence relation we can quotient $\mathbb{R}^d$ into the space of directions (formally real projective space). Intuitively, this is the space of lines through the origin.

In this paper we consider the following linear classifiers: hard margin SVM, soft margin SVM (which we refer to as SVM),  mean difference (also called \textit{nearest centroid}), and the maximal data piling direction. 

Often linear classification algorithms can be extended to a wide range of non-linear classification algorithms using the \textit{kernel trick} \cite{scholkopf2002learning}. While a kernlized linear classifier is no longer linear in the original data, it is a linear classifier in some transformed space (often called the \textit{feature space}). Therefore, in this paper we focus on the linear case, but our mathematical results extend to the kernel case.

\subsection{Mean Difference and Convex Classifiers}\label{ss:md}
The \textit{mean difference} (MD) classifier selects the hyperplane that lies half way between the two class means. In particular the vector $\mathbf{w}_{md}$ is given by the difference of the class means
\begin{equation}
\label{eq:md}
\begin{aligned}
 \mathbf{w}_{md} & := \frac{1}{n_+} \sum_{i \in I_+} \mathbf{x}_i - \frac{1}{n_-} \sum_{i \in I_-} \mathbf{x}_i \\
&  := \bar{\mathbf{x}}_+ - \bar{\mathbf{x}}_-.
\end{aligned}
\end{equation}
By replacing the mean with another measure of center (e.g. the \textit{spatial median} \citealt{brown1983statistical}) we can motivate a number of other classifiers. 

We say a linear classifier is a \textit{convex classifier} if its normal vector, $\mathbf{w}$ is given as the difference of points lying in the convex hulls of the two classes (i.e. $\mathbf{w}  = \mathbf{c}_+ - \mathbf{c}_-$ where $\mathbf{c}_{\pm} \in \text{conv}(\{\mathbf{x}_i | i \in I_{\pm}\})$). These classifiers are sometimes refered to as \textit{nearest centroid} classifiers because they classify test points by assigning them to the class with the nearest centroid, $\mathbf{c}_+$ or $\mathbf{c}_-$ .

We define  \textit{convex directions}, $C$, to be the set of directions such a classifier can take.
\begin{definition}
\label{def:cvx_dir}
Let $C$ denote the set of all vectors associated with the directions that go between the convex hulls of the two classes i.e.
\end{definition}
$$C := \{ a\left( \mathbf{c}_+ - \mathbf{c}_-  \right)| a \in \mathbb{R}, a \neq 0, \text{ and } \mathbf{c}_j \in \text{conv}(\{\mathbf{x}_i \}_{i \in I_j}), j=\pm \}.$$
The set $C$ may be all of $\mathbb{R}^d$ if, for example, the two convex hulls intersect. When the data are linearly separable $C$ is a strict subset of $\mathbb{R}^d$. This set of directions will play an important role in later sections.

\subsection{Data Transformation}\label{ss:data_transform}

It is common to transform the data before fitting a linear classifier, for example, the analyst may mean center the variables then scale them by the standard deviation. A number of classifiers can be viewed as either: apply a data transformation then fit a more simple classifier (such as MD) or as a distinct classifier. These classifiers include: \textit{naive Bayes}, \textit{Fisher linear discrimination}, \textit{nearest shrunken centroid}, \textit{regularized discriminant analysis}, and more \cite{friedman2001elements}. 

For example, when $d < n -1$  the Fisher's linear discriminant direction is given by
\begin{equation}
\label{eq:fld}
\mathbf{w}_{fld} :=  \widehat{\Sigma}_{pool}^{-1}(\bar{\mathbf{x}}_+ - \bar{\mathbf{x}}_-),
\end{equation}
letting $X_-$ and $X_+$ be the data matrix for the respective classes and the \textit{pooled sample covariance} is $\widehat{\Sigma}_{pool} := \frac{1}{n-2} \left[(X_+ -\overline{X}_+)^T (X_+ -\overline{X}_+) + (X_- -\overline{X}_-)^T (X_- -\overline{X}_-)\right]$. Note the inevitability of $\widehat{\Sigma}_{pool}$ plays an important role in the next section.

It is easy to see FLD is equivalent to transforming the data by the pooled sample covariance matrix (i.e. multiplied each data point by $\widehat{\Sigma}_{pool}^{- 1/2}$) then computing the MD classifier (where we apply the same transformation to the test data). More generally, if we have a simple, convex classifier (e.g. the MD) given by $\mathbf{w}$ and we apply a data transformation in the form of $\Sigma^{-1/2}$ to the data we obtain the same classifier as $\Sigma^{-1} \mathbf{w}$. 

The technical results of this paper connect SVM to MD (and various other convex classifiers), however, they apply more generally. If the analyst first transforms the data before fitting SVM, as is common in practice, then our results connect SVM to the more general classifier. For example, naive Bayes is equivalent to first transforming the data by a certain diagonal covariance matrix; in this case, our results connect SVM to naive Bayes.

\subsection{Maximal Data Piling Direction} \label{ss:mdp_intro}
 For linear classifiers one frequently projects the data onto the one dimensional subspace spanned by the normal vector.  \textit{Data piling}, first discussed by \cite{marron2007distance}, is when multiple points have the same projection on the line spanned by the normal vector. For example, all points on SVM's margin have the same image under the projection map. \cite{ahn2010mdp} showed that when $d \ge n - 1$ there are directions such that each class is projected to a single point i.e. there is \textit{complete data piling}. 
\begin{definition}
\label{def:cdp_directions}
A vector $\mathbf{w} \in \mathbb{R}^d$ gives complete data piling for two classes of data if there exist $a, b \in \mathbb{R}$, with $a \neq 0$ such that 
$$\mathbf{w}^T \mathbf{x}_i = a y_i + b \text{ for each } i=1, \dots, n,$$
where $b$ is the midpoint of the projected classes and $a$ is half the distance between the projected classes. 
\end{definition}
The \textit{maximal data piling} (MDP) direction, as its name suggests, searches around all directions of complete data piling and finds the one that maximizes the distance between the two projected class images. This classifier has been studied in a number of papers such as \cite{ahn2012clustering}, \cite{lee2013hdlss}, and \cite{ahn2010mdp}. The MDP direction can be computed analytically
\begin{equation}
\label{eq:mdp}
\mathbf{w}_{mdp} = \widehat{\Sigma}^-(\bar{\mathbf{x}}_+ - \bar{\mathbf{x}}_-),
\end{equation}
where $A^-$ is the Moore-Penrose inverse of a matrix $A$ and $\widehat{\Sigma} := \frac{1}{n-1} (X -\bar{X})^T (X -\bar{X})$ is the \textit{global sample covariance} matrix (in contrast with the pooled sample covariance of FLD given above).

The MDP direction has an interesting relationship to Fisher linear discrimination. Recall the formula for FLD show in Equation \ref{eq:fld} above. \cite{ahn2010mdp} showed that in low dimensional settings FLD and the MDP formula are the same (though in low dimensional settings MDP does not give complete data piling); when $d < n -1$ the above two equations are equivalent.

Another view of this relation comes from the optimization perspective. FLD attempts to find the direction that maximizes the ratio of the projected ``between-class variance to the within-class variance," \cite{bishop2006pattern}. This problem is well defined only in low dimensions; in high dimensions when $d \ge n -1$ there exist directions of complete data piling where the within class projected variance is zero. In the high dimensional setting MDP searches around these directions of zero within class variance to find the one that maximizes the distance between the two classes (i.e. the between-class variance). 

\subsection{Support Vector Machine}
% [TODO: maybe add intuitive explanation of the margin]
Hard margin support vector machine is only defined when the data are linearly separable; it seeks to find the direction that maximizes the margin separating the two classes. It is defined as the solution to the following optimization problem,

\begin{equation}
\label{eq:svm_hard}
\begin{aligned}
& \underset{\mathbf{w} \in \mathbb{R}^d, b \in \mathbb{R}}{\text{minimize}}  & &  \frac{1}{2}||\mathbf{w}||^2 \\
& \text{subject to} & &    y_i(\mathbf{x}_i \cdot \mathbf{w} + b) \ge 1, \text{ for } i = 1, \dots, n.
\end{aligned}
\end{equation}

When the data are not separable Problem (\ref{eq:svm_hard}) can be modified to give soft margin SVM by adding a tuning parameter $C$ and slack variables $\xi_i$  which allow points to be on the wrong side of the margin,

\begin{equation}
\label{eq:svm_soft}
\begin{aligned}
& \underset{\mathbf{w} \in \mathbb{R}^d, b \in \mathbb{R}}{\text{minimize}}  & &  \frac{1}{2}||\mathbf{w}||^2 + C \sum_{i} \xi_i\\
& \text{subject to} & &   y_i(\mathbf{x}_i \cdot \mathbf{w} + b) \ge 1- \xi_i, \text{ for } i = 1, \dots, n\\
& & & \xi_i \ge 0, \text{ for } i = 1, \dots, n.
\end{aligned}
\end{equation}
For a detailed introduction to SVM see \cite{mohri2012foundations}.

In both cases the direction is a linear combination of the training data points
 $$\mathbf{w}_{svm}  = \sum_{i \in I_+} \alpha_i  \mathbf{x}_i  -  \sum_{i \in I_-} \alpha_i  \mathbf{x}_i.$$
It turns out this linear combination always gives a direction that points between the convex hull of the two classes (see Definition \ref{def:cvx_dir}).

%%%%%%%%%%%%%%%%%%%%%%%%%%%%%%%%%%%%%%%%%%%%%%%%%%%%%%%%

\section{Hard Margin SVM in High Dimensions} \label{s:hmsvm-hd}
In this section we provide novel insights into the geometry of complete data piling  which are then used to characterize the relationship between hard margin SVM and MDP in high dimensions. The results are stated in the first two subsections then proved in the remaining two subsection and appendix.  % Additionally, we study the connection between all three of hard margin SVM, MDP and MD (or more generally the set of convex classifiers) when the data are in high dimensions.

For this section we assume $d \ge n -1$.  We further assume the data are in general position and separable, which implies the data are linearly independent if $d \ge n$ and affine independent if $d = n -1$. The data are in general position with probability 1 if they are generated by an absolutely continuous distribution in high dimensions. Typically the phenomena studied here happens in the $n - 1$ dimensional affine space generated by the data.

% Using geometric insights derived from the KKT conditions we give a mathematical characterization for when the hard margin SVM direction is equivalent to the MDP direction (Theorem \ref{thm:hm_mdp}).  Furthermore, we show that if the MDP direction is a \textit{strict convex direction} (Definition \ref{def:strict_cvx_dir}) then hard margin SVM also points in this direction (Theorem \ref{thm:TP_intersection}). This says, for example, if the MD gives complete data piling then all three of MD, MDP and hard margin SVM are equivalent \ref{svm_md}.

\subsection{Complete Data Piling Geometry}\label{ss:data_piling_intro}
% This section provides a novel result about the geometry of complete data piling which will then be used in the following sections. 
Define the set  $P$ of \textit{complete data piling directions} using ideas from Definition \ref{def:cdp_directions}.
\begin{definition}
Let $P$ denote the vectors associated with directions that give complete data piling i.e.
\end{definition}
% $$P = \{\mathbf{v} \in \mathbb{R}^d | \exists a_-, a_+ \in \mathbb{R}, a_- \neq a_+ \text{ s.t. } \mathbf{v}^T\mathbf{x}_i = a_-, \mathbf{v}^T\mathbf{x}_j = a_+ \forall i \in I_-, j \in I_+  \}$$
$$P := \{\mathbf{v} \in \mathbb{R}^d | \exists a, b \in \mathbb{R}, a \neq 0 \text{ s.t. } \mathbf{v}^T\mathbf{x}_i = a\cdot y_i + b \text{ for each } i=1, \dots, n \}.$$
Note the set of complete data piling directions can be empty, however, if the data are in general position then $P \neq \emptyset$ when $d \ge n -1$. In this case, \cite{ahn2010mdp} point out there are infinitely many of such directions in the ($n$ dimensional) subspace generated by the data that give complete data piling; in fact there is a great circle of directions in this subspace (if we parameterize directions by points on the unit sphere). 

Theorem \ref{thm:affine_piling_directions} shows there is a single complete data piling direction that is also within the ($n-1$ dimensional) affine hull of the data. The remaining directions in $P$ come from linear combinations of this unique direction in the affine hull and any vector normal to that hull. 
\begin{theorem}
\label{thm:affine_piling_directions}
The set of complete data piling directions, $P$, intersects the affine hull of the data in a single direction which is the maximal data piling direction.
\end{theorem}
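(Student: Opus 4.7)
The plan is to translate complete data piling into a linear system in the centered data and exploit the orthogonal decomposition $\mathbb{R}^d = \text{row}(\widetilde{X}) \oplus \text{null}(\widetilde{X})$. Let $\bar{\mathbf{x}}$ be the grand mean, $\widetilde{\mathbf{x}}_i = \mathbf{x}_i - \bar{\mathbf{x}}$, and $\widetilde{X}$ the centered data matrix whose $i$th row is $\widetilde{\mathbf{x}}_i^T$; the linear subspace parallel to the affine hull of the data is exactly $\text{span}\{\widetilde{\mathbf{x}}_i\} = \text{row}(\widetilde{X})$, which is where a direction ``inside the affine hull'' must lie. Substituting $\mathbf{x}_i = \widetilde{\mathbf{x}}_i + \bar{\mathbf{x}}$ into $\mathbf{v}^T\mathbf{x}_i = ay_i + b$ and averaging over $i$ pins down $b$ and reduces the piling condition to the linear system $\widetilde{X}\mathbf{v} = a\widetilde{\mathbf{y}}$, where $\widetilde{\mathbf{y}} = \mathbf{y} - \bar{y}\mathbf{1}$ is the centered label vector.

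Uniqueness is then immediate: if $\mathbf{v}_1, \mathbf{v}_2 \in P \cap \text{row}(\widetilde{X})$ satisfy this system with slopes $a_1, a_2 \neq 0$, rescale one to a common slope; the difference lies in $\text{null}(\widetilde{X}) \cap \text{row}(\widetilde{X}) = \{\mathbf{0}\}$, so $P$ meets the affine-hull direction space in at most one direction. For existence, I parameterize $\mathbf{v} \in \text{row}(\widetilde{X})$ as $\mathbf{v} = \widetilde{X}^T \mathbf{c}$, turning the equation into $\widetilde{X}\widetilde{X}^T \mathbf{c} = a\widetilde{\mathbf{y}}$. General position with $d \ge n-1$ forces $\text{rank}(\widetilde{X}) = n-1$, so the symmetric matrix $\widetilde{X}\widetilde{X}^T$ has rank $n-1$ with null space $\text{span}(\mathbf{1})$ (since $\widetilde{X}^T\mathbf{1} = \mathbf{0}$); as $\widetilde{\mathbf{y}} \perp \mathbf{1}$, the system is solvable, and any two solutions differ by a multiple of $\mathbf{1}$ and hence yield the same $\mathbf{v}$. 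This gives the unique direction $\mathbf{v} = a\,\widetilde{X}^T(\widetilde{X}\widetilde{X}^T)^+ \widetilde{\mathbf{y}}$.

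To match this with $\mathbf{w}_{mdp}$, note that $\bar{\mathbf{x}}_+ - \bar{\mathbf{x}}_- = \widetilde{X}^T \mathbf{u}$, where $u_i = 1/n_+$ for $i \in I_+$ and $u_i = -1/n_-$ for $i \in I_-$, while a short computation from $\bar{y} = (n_+ - n_-)/n$ gives $\widetilde{\mathbf{y}} = (2n_+n_-/n)\,\mathbf{u}$. Applying the standard Moore-Penrose identity $A^T(AA^T)^+ = (A^TA)^+ A^T$ with $A = \widetilde{X}$ yields $\mathbf{v} \propto (\widetilde{X}^T\widetilde{X})^+\widetilde{X}^T\mathbf{u} \propto \widehat{\Sigma}^-(\bar{\mathbf{x}}_+ - \bar{\mathbf{x}}_-) = \mathbf{w}_{mdp}$, completing the identification.

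The main delicate point I expect is the rank claim $\text{rank}(\widetilde{X}) = n-1$, which needs handling in both regimes $d = n-1$ and $d \ge n$: general position gives affine independence of the $\mathbf{x}_i$, so the centered vectors $\widetilde{\mathbf{x}}_i$ span an $(n-1)$-dimensional subspace subject to the single relation $\sum_i \widetilde{\mathbf{x}}_i = \mathbf{0}$. Once this rank is pinned down, everything else is elementary linear algebra plus the two pseudoinverse identities above.
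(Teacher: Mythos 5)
Your proof is correct, but it takes a genuinely different route from the paper's in both halves. For existence and uniqueness, the paper translates the data so the affine hull contains the origin, reduces WLOG to $d=n-1$, and invokes invertibility of the ones-augmented data matrix $\widetilde{X}=[X,\mathbf{1}_n]$ under affine independence; you instead work with the mean-centered system $\widetilde{X}\mathbf{v}=a\widetilde{\mathbf{y}}$ and the decomposition $\mathbb{R}^d=\mathrm{row}(\widetilde{X})\oplus\mathrm{null}(\widetilde{X})$, which avoids the dimension-reduction step and cleanly isolates the one delicate fact, $\mathrm{rank}(\widetilde{X})=n-1$, that both arguments ultimately rest on. The larger divergence is in identifying the affine piling direction with the MDP: the paper proves \emph{maximality directly}, building an orthonormal basis adapted to the affine hull and showing that any unit-norm piling direction with a component along $\mathbf{t}_n$ (or orthogonal to the data) strictly shrinks the projected class separation by a factor $\alpha<1$; you instead verify that your solution $\widetilde{X}^T(\widetilde{X}\widetilde{X}^T)^+\widetilde{\mathbf{y}}$ coincides, via the identity $A^T(AA^T)^+=(A^TA)^+A^T$ and the computation $\widetilde{\mathbf{y}}\propto\mathbf{u}$ with $\widetilde{X}^T\mathbf{u}=\bar{\mathbf{x}}_+-\bar{\mathbf{x}}_-$, with the closed form $\widehat{\Sigma}^-(\bar{\mathbf{x}}_+-\bar{\mathbf{x}}_-)$ of Equation \ref{eq:mdp}. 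Your route buys an explicit formula and brevity, but it imports from \cite{ahn2010mdp} the fact that this pseudoinverse formula is the maximizer of the projected class separation; the paper's orthogonal-decomposition argument establishes that optimality from scratch and so is self-contained with respect to the variational definition of the MDP in Section \ref{ss:mdp_intro}. If you intend your proof to stand on the variational definition rather than on the cited formula, you would need to append an argument like the paper's (any piling direction off the affine hull wastes norm on components that do not contribute to the class separation).
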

Theorem \ref{thm:affine_piling_directions} is proved in the appendix. 

\subsection{Hard Margin SVM and Complete Data Piling}\label{ss:hm_cdp}

A simple corollary of Theorem \ref{thm:affine_piling_directions} is:
\begin{corollary}
\label{bp_intersection}
The intersection of the convex directions, $C$, and the complete data piling directions, $P$, is either empty or a single direction i.e. 
$$C \cap P = \emptyset \text{ or } C \cap P  = \{a \mathbf{v} | a \in \mathbb{R}\}.$$
\end{corollary}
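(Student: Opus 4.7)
The plan is to leverage Theorem \ref{thm:affine_piling_directions} together with the observation that every convex direction automatically lies in the linear subspace parallel to the affine hull of the data. Combined with scalar invariance of both $C$ and $P$, this will force $C \cap P$ to sit inside the single line identified by the theorem, so it either equals that line or is empty.

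First I would verify that $C$ is contained in the linear subspace $V$ parallel to the affine hull of the training data. Fixing any data point $\mathbf{x}_1$, any convex combination $\mathbf{c}$ of the data satisfies $\mathbf{c} - \mathbf{x}_1 \in V$, so a difference of convex combinations
\[
\mathbf{c}_+ - \mathbf{c}_- = (\mathbf{c}_+ - \mathbf{x}_1) - (\mathbf{c}_- - \mathbf{x}_1)
\]
lies in $V$, and hence $C \subseteq V$. Next I would invoke Theorem \ref{thm:affine_piling_directions}, reading the statement that $P$ meets the affine hull in a single direction as the assertion that $P \cap V$ is the punctured line spanned by $\mathbf{w}_{mdp}$. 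Combining the two,
\[
C \cap P \;\subseteq\; V \cap P \;=\; \{a\, \mathbf{w}_{mdp} : a \in \mathbb{R},\, a \neq 0\}.
\]

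Finally, both $C$ and $P$ are closed under multiplication by any nonzero scalar: scaling $\mathbf{v} \in P$ by $t \neq 0$ replaces the pair $(a,b)$ in the defining equation by $(ta, tb)$ with $ta \neq 0$, while the leading scalar in the definition of $C$ absorbs any factor. Thus $C \cap P$ is itself a union of punctured lines through the origin and, being contained in a single such line, is either empty or equal to the whole line $\{a \mathbf{w}_{mdp} : a \in \mathbb{R}\}$. I do not anticipate a serious obstacle: the containment $C \subseteq V$ and the scalar-invariance step are both brief, and the only subtlety is interpreting the phrase ``intersects the affine hull in a single direction'' from the theorem as the line statement above, which is forced by the setup.
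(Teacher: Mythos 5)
Your proposal is correct and matches the paper's intended derivation: the paper presents this as an immediate consequence of Theorem \ref{thm:affine_piling_directions}, and the implicit argument is precisely your chain $C \subseteq V$, hence $C \cap P \subseteq V \cap P = \{a\,\mathbf{w}_{mdp}\}$, with scalar invariance of both sets closing the case analysis. The only cosmetic quibble is that, like the paper, you slide between the punctured line and the full line containing the origin, but since neither $C$ nor $P$ contains $\mathbf{0}$ this is just the paper's own notational convention for ``a single direction.''
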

In other words, if a convex classifier gives complete data piling then it has to also be the MDP; furthermore, there can be at most one convex classifier which gives complete data piling.

The core results for hard margin SVM are summarized in the following theorem. Note that this theorem also characterizes when SVM has complete data piling
\begin{theorem} 
\label{thm:hm_mdp}
The hard margin SVM and MDP directions are equivalent if and only if there is a non-empty intersection between the convex directions, $C$, and the complete data piling directions, $P$. In this case, the intersection is a single direction which is the hard margin SVM direction and the MDP direction i.e. 
$$\mathbf{w}_{hm-svm} \propto \mathbf{w}_{mdp} \iff P \cap C \neq \emptyset \iff \mathbf{w}_{hm-svm} \propto \mathbf{w}_{mdp} = C \cap P$$
%If this intersection is non-empty then it is a single direction and $\mathbf{w}_{hm-svm} \in C \cap P$.
\end{theorem}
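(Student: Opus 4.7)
The plan is to exploit the well-known geometric dual of hard margin SVM: the direction $\mathbf{w}_{hm-svm}$ coincides (up to scaling) with $\mathbf{c}_+^* - \mathbf{c}_-^*$, where $(\mathbf{c}_+^*, \mathbf{c}_-^*)$ achieves the minimum of $\|\mathbf{c}_+ - \mathbf{c}_-\|$ over all pairs with $\mathbf{c}_\pm \in \text{conv}(\{\mathbf{x}_i\}_{i \in I_\pm})$. This is consistent with the paper's remark that $\mathbf{w}_{hm-svm} \in C$ always. The second unconditional fact I will use is $\mathbf{w}_{mdp} \in P$, immediate from the definition of MDP as a complete data piling direction. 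Throughout, note that any $\mathbf{w} \in C$ is parallel to the affine hull of the data, since it is a difference of two points in that hull.

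For the forward direction, if $\mathbf{w}_{hm-svm} \propto \mathbf{w}_{mdp}$ then this common direction lies in both $C$ and $P$, so $C \cap P \neq \emptyset$; Corollary \ref{bp_intersection} then identifies $C \cap P$ as the single ray they share.

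For the reverse direction, fix $\mathbf{w} \in C \cap P$. Since $\mathbf{w} \in C$ it is parallel to the affine hull of the data, so Theorem \ref{thm:affine_piling_directions} forces $\mathbf{w} \propto \mathbf{w}_{mdp}$. To upgrade this to $\mathbf{w} \propto \mathbf{w}_{hm-svm}$, write $\mathbf{w} = \mathbf{c}_+ - \mathbf{c}_-$ with $\mathbf{c}_\pm \in \text{conv}(\{\mathbf{x}_i\}_{i \in I_\pm})$. Because $\mathbf{w}$ is a complete data piling direction, $\mathbf{w}^T \mathbf{c}_+' = a + b$ and $\mathbf{w}^T \mathbf{c}_-' = -a + b$ for every $\mathbf{c}_\pm' \in \text{conv}(\{\mathbf{x}_i\}_{i \in I_\pm})$ (since the property propagates from vertices to arbitrary convex combinations by linearity), hence $\mathbf{w}^T(\mathbf{c}_+' - \mathbf{c}_-') = 2a$ is the same constant across all competing pairs. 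Taking $\mathbf{c}_\pm' = \mathbf{c}_\pm$ gives $\|\mathbf{w}\|^2 = 2a$, and Cauchy-Schwarz then yields
$$ \|\mathbf{w}\|^2 = \mathbf{w}^T(\mathbf{c}_+' - \mathbf{c}_-') \le \|\mathbf{w}\| \cdot \|\mathbf{c}_+' - \mathbf{c}_-'\|, $$
so $\|\mathbf{c}_+' - \mathbf{c}_-'\| \ge \|\mathbf{w}\| = \|\mathbf{c}_+ - \mathbf{c}_-\|$ for every competing pair. Thus $(\mathbf{c}_+, \mathbf{c}_-)$ minimizes the distance between the convex hulls, and by the dual characterization $\mathbf{w} \propto \mathbf{w}_{hm-svm}$. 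Combined with the forward direction and Corollary \ref{bp_intersection}, this delivers $\mathbf{w}_{hm-svm} \propto \mathbf{w}_{mdp} = C \cap P$.

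The main obstacle is the last link, going from ``$\mathbf{w}$ is a convex direction giving complete data piling'' to ``$\mathbf{w}$ is the SVM direction.'' The Cauchy-Schwarz step resolves it in one line, but it relies critically on the CDP-plus-linearity observation that $\mathbf{w}^T \mathbf{c}_+'$ is constant across the entire convex hull, which is what makes the upper bound $2a$ uniform in $\mathbf{c}_\pm'$. I would also double-check under the general-position hypothesis that the minimum-distance pair in the two convex hulls is unique, so that ``the'' closest-pair direction is well defined and no pathological decomposition of $\mathbf{w}$ can break the argument.
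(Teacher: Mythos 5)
Your proof is correct, and the overall skeleton matches the paper's: the forward implication is immediate from the two unconditional facts $\mathbf{w}_{hm\text{-}svm} \in C$ (via the KKT weight-balance condition) and $\mathbf{w}_{mdp} \in P$, and uniqueness of the intersection comes from Theorem \ref{thm:affine_piling_directions} exactly as in Corollary \ref{bp_intersection}. Where you genuinely diverge is the key step showing that any $\mathbf{w} \in C \cap P$ must be the hard margin SVM direction. The paper's Lemma \ref{bp_attracts_svm} does this by direct verification: writing $\mathbf{w}$ as a scaled difference of convex combinations and using membership in $P$ to choose $b$ with $y_i(\mathbf{x}_i \cdot \mathbf{w} + b) = 1$ for all $i$, it checks that the KKT conditions (\ref{eq:kkt_hm1})--(\ref{eq:kkt_hm3}) hold with $\alpha_i = a\lambda_i$, and concludes by sufficiency of KKT. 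You instead invoke the nearest-points-in-convex-hulls dual characterization of hard margin SVM and close the gap with Cauchy--Schwarz, using the observation that complete data piling forces $\mathbf{w}^T(\mathbf{c}_+' - \mathbf{c}_-')$ to be constant over all competing pairs. Both arguments are sound; the paper's is more self-contained given that it has already stated the KKT conditions, while yours is more geometric but leans on the duality result of \cite{bennett2000duality}, which the paper cites but does not prove. On your final worry: you do not need uniqueness of the minimizing \emph{pair}, only of the difference vector $\mathbf{c}_+^* - \mathbf{c}_-^*$, and that follows from ordinary convexity (it is the projection of the origin onto the compact convex Minkowski difference of the two hulls), with no appeal to general position. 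One cosmetic point: since $C$ allows negative scalars $a$, you should fix the representative $\mathbf{w} = \mathbf{c}_+ - \mathbf{c}_-$ explicitly (both $C$ and $P$ are closed under nonzero scaling, so this is harmless), after which $2a = \|\mathbf{w}\|^2 > 0$ comes for free.
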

Where we use the equality sign to indicated $C \cap P$ is a single direction. Theorem \ref{thm:hm_mdp} is a consequence of Corollary \ref{bp_intersection}, Lemma \ref{svm_data_piling}, Lemma \ref{bp_attracts_svm} and the KKT conditions.

Appendix \ref{app:hm_mdp_linprog} gives an alternate characterization of the event $P \cap C \neq \emptyset$ through a linear program which is of theoretical interest.

As a corollary of this theorem we can characterized when MD/MDP or SVM/MD are equivalent.
\begin{corollary}
\label{svm_md}
The hard margin SVM and MD directions are equivalent if and only if all three of hard margin SVM, MD and MDP are equivalent i.e.
$$\mathbf{w}_{hm-svm} \propto \mathbf{w}_{md} \iff \mathbf{w}_{md} \propto \mathbf{w}_{mdp}.$$
\end{corollary}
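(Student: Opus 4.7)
The plan is to route both implications through Theorem~\ref{thm:hm_mdp}, leveraging two structural facts that hold unconditionally: $\mathbf{w}_{md} \in C$ (each class mean lies in its own convex hull, so their difference is an admissible convex direction) and $\mathbf{w}_{mdp} \in P$ (by construction MDP is a complete data piling direction, per Definition~\ref{def:cdp_directions}). A third background fact, noted at the end of the SVM subsection, is that $\mathbf{w}_{hm-svm} \in C$.

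For the ($\Leftarrow$) implication, suppose $\mathbf{w}_{md} \propto \mathbf{w}_{mdp}$. Then this common direction has a representative in $C$ (namely $\mathbf{w}_{md}$) and another in $P$ (namely $\mathbf{w}_{mdp}$), so $C \cap P \neq \emptyset$. Theorem~\ref{thm:hm_mdp} immediately yields $\mathbf{w}_{hm-svm} \propto \mathbf{w}_{mdp} \propto \mathbf{w}_{md}$.

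For the ($\Rightarrow$) implication, suppose $\mathbf{w}_{hm-svm} \propto \mathbf{w}_{md}$. Since $\mathbf{w}_{hm-svm} \in C$ already, it suffices to show $\mathbf{w}_{hm-svm} \in P$: that puts $\mathbf{w}_{hm-svm}$ into $C \cap P$, Theorem~\ref{thm:hm_mdp} then forces $\mathbf{w}_{hm-svm} \propto \mathbf{w}_{mdp}$, and combining with the hypothesis gives $\mathbf{w}_{md} \propto \mathbf{w}_{mdp}$. To exhibit complete data piling for $\mathbf{w}_{hm-svm}$, I would start from its KKT dual expansion $\mathbf{w}_{hm-svm} = \sum_{i \in I_+} \alpha_i \mathbf{x}_i - \sum_{i \in I_-} \alpha_i \mathbf{x}_i$ with $\alpha_i \ge 0$ and $\sum_{i \in I_+} \alpha_i = \sum_{i \in I_-} \alpha_i$, set it equal to $c(\bar{\mathbf{x}}_+ - \bar{\mathbf{x}}_-)$ for some scalar $c > 0$, and rearrange to obtain $\sum_i \gamma_i \mathbf{x}_i = 0$ where $\gamma_i := \alpha_i - c/n_+$ on $I_+$, $\gamma_i := c/n_- - \alpha_i$ on $I_-$, together with $\sum_i \gamma_i = 0$ coming from the dual constraint.

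The main obstacle is concluding that every $\gamma_i$ vanishes. This is exactly where the standing hypotheses of Section~\ref{s:hmsvm-hd} earn their keep: general position with $d \ge n - 1$ yields linear independence of $\{\mathbf{x}_i\}$ when $d \ge n$ and affine independence when $d = n - 1$, either of which forces $\gamma_i = 0$ for all $i$ (the affine case additionally using the constraint $\sum_i \gamma_i = 0$). Hence $\alpha_i = c/n_+ > 0$ on $I_+$ and $\alpha_i = c/n_- > 0$ on $I_-$, so every training point is a support vector, and each therefore satisfies $y_i(\mathbf{w}_{hm-svm}^T \mathbf{x}_i + b) = 1$. This exhibits $\mathbf{w}_{hm-svm}$ as a complete data piling direction in the sense of Definition~\ref{def:cdp_directions} (with $a = 1$ and piling midpoint $-b$), completing the argument.
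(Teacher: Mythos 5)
Your proof is correct and follows the route the paper intends: both implications reduce to Theorem~\ref{thm:hm_mdp} via nonemptiness of $C \cap P$, with the forward direction's key step being that the standing general-position assumption (affine independence, using $\sum_i \gamma_i = 0$) forces the KKT dual weights to coincide with the mean-difference weights, so every point is a support vector, complementary slackness puts every point on a marginal hyperplane, and hence $\mathbf{w}_{hm-svm} \in P$. The only cosmetic point is that you need not assume $c > 0$ when writing $\mathbf{w}_{hm-svm} = c(\bar{\mathbf{x}}_+ - \bar{\mathbf{x}}_-)$: if $c < 0$ the same uniqueness argument would yield $\alpha_i = c/n_+ < 0$, contradicting dual feasibility, so positivity is a conclusion rather than a hypothesis.
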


Another corollary of this theorem is that hard margin SVM is always the MDP of the support vectors.
\begin{corollary}\label{cor:svm_cropped_mdp}
Let $V$ be the set of support vectors for hard margin SVM, then $\mathbf{w}_{hm-svm}$ is the MDP of $V$.
\end{corollary}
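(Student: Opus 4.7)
The plan is to apply Theorem \ref{thm:hm_mdp} to the restricted dataset $V$ of support vectors. Concretely, I will show that $\mathbf{w}_{hm-svm}$ lies in both $C(V)$ and $P(V)$ (the sets of convex directions and complete data piling directions defined relative to $V$ rather than the full dataset), and then invoke Theorem \ref{thm:hm_mdp} with $V$ as the underlying dataset to conclude that $\mathbf{w}_{hm-svm} \propto \mathbf{w}_{mdp}(V)$.

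For membership in $P(V)$, I will use the KKT complementary slackness condition for the hard margin problem \eqref{eq:svm_hard}: every support vector $\mathbf{x}_i \in V$ saturates its margin constraint, so $y_i(\mathbf{w}_{hm-svm}^T \mathbf{x}_i + b) = 1$, equivalently $\mathbf{w}_{hm-svm}^T \mathbf{x}_i = y_i - b$. Taking $a = 1$ (nonzero) and intercept $-b$ in Definition \ref{def:cdp_directions} matches this exactly, so $\mathbf{w}_{hm-svm}$ gives complete data piling on $V$. For membership in $C(V)$, I will use KKT stationarity, which gives $\mathbf{w}_{hm-svm} = \sum_{i \in V} \alpha_i y_i \mathbf{x}_i$ with $\alpha_i \ge 0$, combined with dual feasibility $\sum_i \alpha_i y_i = 0$. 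The latter yields $S := \sum_{i \in I_+ \cap V} \alpha_i = \sum_{i \in I_- \cap V} \alpha_i$, and since the data are separable with positive margin we have $\mathbf{w}_{hm-svm} \neq 0$ and hence $S > 0$. Dividing by $S$ inside each class expresses $\mathbf{w}_{hm-svm} = S(\mathbf{c}_+ - \mathbf{c}_-)$ with $\mathbf{c}_\pm$ a convex combination of the support vectors in the respective class, placing $\mathbf{w}_{hm-svm}$ in $C(V)$.

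To invoke Theorem \ref{thm:hm_mdp} on $V$ I need the three standing hypotheses of Section \ref{s:hmsvm-hd}: separability, general position, and $d \ge |V| - 1$. Separability of $V$ is inherited from the full data (the same hyperplane separates). General position (linear independence if $d \ge |V|$, affine independence if $d = |V| - 1$) is inherited by subsets, since any subfamily of linearly/affinely independent vectors is itself independent. The dimension bound follows from $d \ge n - 1 \ge |V| - 1$. Having shown $C(V) \cap P(V)$ contains $\mathbf{w}_{hm-svm}$ and is therefore non-empty, Theorem \ref{thm:hm_mdp} applied to $V$ gives $C(V) \cap P(V) = \{a\, \mathbf{w}_{mdp}(V) \mid a \in \mathbb{R}\}$, whence $\mathbf{w}_{hm-svm} \propto \mathbf{w}_{mdp}(V)$.

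The main obstacle is a conceptual rather than technical one: one must cleanly shift perspective and treat $V$ as a standalone labeled dataset for which the hypotheses and conclusion of Theorem \ref{thm:hm_mdp} apply, rather than as a marked subset of the original data. The one calculation requiring care is verifying $S > 0$ so that the convex combination representation is legitimate; this in turn rests on the elementary fact that hard margin SVM always has at least one support vector in each class (otherwise one could translate the separating hyperplane to strictly enlarge the margin, contradicting optimality).
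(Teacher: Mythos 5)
Your proof is correct and follows exactly the route the paper intends: the corollary is presented as a direct consequence of Theorem \ref{thm:hm_mdp}, obtained by using the KKT conditions to place $\mathbf{w}_{hm-svm}$ in both the convex directions and the complete-data-piling directions of the support-vector subset $V$ and then invoking the theorem with $V$ as the dataset. One small imprecision: your claim that linear independence is inherited by $V$ when $d \ge |V|$ can fail if $d = n-1$ (the full data are then only affine independent, and a proper subset need not be linearly independent), but this is harmless since the underlying machinery of Theorem \ref{thm:affine_piling_directions} only requires affine independence, which subsets do inherit.
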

This Corollary says that we can interpret hard margin SVM as a cropped MDP (i.e. it ignores points which are far away from the separating hyperplane).

\subsection{Hard Margin KKT Conditions}\label{ss:hm_kkt}
Derivation and discussion of the KKT conditions can be found in \cite{mohri2012foundations}. From the Lagrangian of Problem (\ref{eq:svm_hard}) we can derive the KKT conditions
\begin{align}
\mathbf{w}_{hm-svm} = \sum_{i=1}^n \alpha_i y_i \mathbf{x}_i, \label{eq:kkt_hm1}  \\ 
\sum_{i=1}^n \alpha_i y_i = 0, \label{eq:kkt_hm2}\\ 
\alpha_i = 0 \text{ or } y_i(\mathbf{w}  \cdot \mathbf{x}_i + b) = 1, \label{eq:kkt_hm3}
\end{align}

with $\alpha_i \ge 0$ for each $i = 1, \dots, n$.  

Condition (\ref{eq:kkt_hm2}) says that the sum of the weights in both classes has to be equal. Combining this with (\ref{eq:kkt_hm1})  we find that the hard margin SVM direction is given by
\begin{equation}
\label{eq:kkt_hm_cnvx}
\mathbf{w}_{hm-svm} \propto \sum_{i \in I_+} \frac{\alpha_i}{A}  \mathbf{x}_i - \sum_{i \in I_-} \frac{\alpha_i}{A} \mathbf{x}_i,
\end{equation}
where $\sum_{i \in I_+}\alpha_i = \sum_{i \in I_-}\alpha_i := A$. Thus $\mathbf{w}_{hm-svm}  \in C$ i.e. the hard margin SVM direction is always a convex direction. As discussed in \cite{bennett2000duality,phamthesis} hard margin SVM is equivalent to finding the nearest points in the convex hulls of the two classes.

The last KKT condition (\ref{eq:kkt_hm3})  says that a point $ \mathbf{x}_i $ either lies on one of the marginal hyperplanes $\{ \mathbf{x}  | \mathbf{w}_{hm-svm} ^T \mathbf{x} = \pm 1\}$ or receives zero weight. In the former case when $\alpha_i \neq 0$, $\mathbf{x}_i$ is called a \textit{support vector}. 

The margin $\rho$ is defined as the minimum distance from a training point to the separating hyperplane; $\rho$ is also the orthogonal distance from the marginal hyperplanes to the separating hyperplane. The margin width is given by the magnitude of the normal vector
\begin{equation}
\rho^2 = \frac{1}{||\mathbf{w}_{hm-svm} ||_2^2}  = \frac{1}{\sum_{i=1}^n \alpha_i}:= \frac{1}{||\mathbf{\alpha}||_1}.
\end{equation}

\subsection{Proofs for Hard Margin SVM}

The following lemma about SVM and MDP is a consequence of the fact that complete data piling directions satisfy the SVM KKT conditions.
\begin{lemma}
\label{svm_data_piling}
If hard margin SVM has complete data piling then the SVM direction is equivalent to the MDP direction i.e.
$$\mathbf{w}_{hm-svm} \in P \implies \mathbf{w}_{hm-svm} \propto \mathbf{w}_{mdp}.$$
\end{lemma}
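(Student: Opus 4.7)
The plan is to combine the KKT structure of hard margin SVM with the already established results on the intersection $C \cap P$. The hypothesis is $\mathbf{w}_{hm-svm} \in P$, and I want to conclude $\mathbf{w}_{hm-svm} \propto \mathbf{w}_{mdp}$.

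First I would recall from Equation (\ref{eq:kkt_hm_cnvx}), which is an immediate consequence of KKT conditions (\ref{eq:kkt_hm1}) and (\ref{eq:kkt_hm2}), that the hard margin SVM direction has the form
$$\mathbf{w}_{hm-svm} \propto \sum_{i \in I_+}\tfrac{\alpha_i}{A}\mathbf{x}_i - \sum_{i \in I_-}\tfrac{\alpha_i}{A}\mathbf{x}_i,$$
with $\sum_{i \in I_+}\alpha_i/A = \sum_{i \in I_-}\alpha_i/A = 1$. Hence $\mathbf{w}_{hm-svm} \in C$, the set of convex directions (Definition \ref{def:cvx_dir}). This is the only real use of the KKT conditions in the proof; the rest is a clean application of previous geometric results.

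Next I would combine this with the hypothesis $\mathbf{w}_{hm-svm} \in P$ to conclude $\mathbf{w}_{hm-svm} \in C \cap P$. By Corollary \ref{bp_intersection}, the intersection $C \cap P$ is either empty or a single direction; since we have exhibited an element in it, $C \cap P$ is a single direction. The proof of Corollary \ref{bp_intersection} (via Theorem \ref{thm:affine_piling_directions}) identifies this direction explicitly as $\mathbf{w}_{mdp}$: every convex direction, being a difference of convex combinations of data points, has data-point coefficients summing to zero, so it lies in the linear subspace parallel to the affine hull of the data; by Theorem \ref{thm:affine_piling_directions}, the unique element of $P$ with this property is $\mathbf{w}_{mdp}$. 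Therefore $\mathbf{w}_{hm-svm} \propto \mathbf{w}_{mdp}$, as desired.

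I do not anticipate a genuine obstacle: once Equation (\ref{eq:kkt_hm_cnvx}) and Theorem \ref{thm:affine_piling_directions} are in hand, the argument is essentially a set-theoretic inclusion. The one step that warrants explicit mention, rather than being swept under a citation to Corollary \ref{bp_intersection}, is the observation that the convex-direction representation of $\mathbf{w}_{hm-svm}$ has zero-sum coefficients and hence lies in the direction space of the affine hull; this is what lets us invoke the single-direction conclusion of Theorem \ref{thm:affine_piling_directions} and pin the common direction down as $\mathbf{w}_{mdp}$ rather than merely knowing $|C \cap P| \le 1$.
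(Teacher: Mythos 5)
Your proof is correct and follows essentially the route the paper intends: the KKT conditions (Equation \ref{eq:kkt_hm_cnvx}) place $\mathbf{w}_{hm-svm}$ in $C$, the hypothesis places it in $P$, and Corollary \ref{bp_intersection} together with Theorem \ref{thm:affine_piling_directions} pins $C \cap P$ down to the single MDP direction. Your explicit note that convex directions have zero-sum coefficients and hence lie in the subspace parallel to the affine hull is exactly the right justification for identifying that single direction with $\mathbf{w}_{mdp}$.
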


%\begin{proof}
%If there is complete data piling then all training points are on the margin hyperplane: each class is projected to a single point. Thus the margin, $\rho$ is equal to the distance between the two projected classes. Since complete data piling directions satisfy the KKT conditions and SVM maximizes $\rho$ the SVM direction is the MDP direction.
%\end{proof}

\begin{lemma}
\label{bp_attracts_svm}
If $P \cap C \neq \emptyset$ then $\mathbf{w}_{svm} \in P \cap C$.
\end{lemma}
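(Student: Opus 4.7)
The plan is to reduce the lemma to a short optimality argument using two facts already in hand. Equation~(\ref{eq:kkt_hm_cnvx}) (derived from the hard margin KKT conditions) places $\mathbf{w}_{svm} \in C$ unconditionally, so the real task is to show $\mathbf{w}_{svm} \in P$. By Corollary~\ref{bp_intersection}, the hypothesis $P \cap C \neq \emptyset$ makes this intersection a single direction; pick any representative $\mathbf{v}$ and it suffices to show $\mathbf{w}_{svm} \propto \mathbf{v}$.

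The core step is to rescale $\mathbf{v}$ into a feasible SVM solution and then show it is optimal. Using $\mathbf{v} \in P$, write $\mathbf{v}^T \mathbf{x}_i = a y_i + b$; the rescaled pair $(\mathbf{v}/a,\, -b/a)$ then satisfies every margin constraint $y_i(\mathbf{w}^T \mathbf{x}_i + b') \ge 1$ with equality, so it is feasible with every training point sitting on a marginal hyperplane. For the matching lower bound, use $\mathbf{v} \in C$ to write $\mathbf{v} = \mathbf{c}_+ - \mathbf{c}_-$ for convex combinations $\mathbf{c}_\pm$ of the two classes; taking the same convex combinations of the constraints $y_i(\mathbf{w}^T \mathbf{x}_i + b) \ge 1$ and subtracting gives $\mathbf{w}^T \mathbf{v} \ge 2$ for every feasible $\mathbf{w}$. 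A direct computation plugging the piling identity into $\mathbf{v}^T(\mathbf{c}_+ - \mathbf{c}_-)$ yields $\|\mathbf{v}\|^2 = 2a$, so Cauchy--Schwarz forces $\|\mathbf{w}\|^2 \ge 4/\|\mathbf{v}\|^2 = \|\mathbf{v}/a\|^2$. Since $\mathbf{v}/a$ attains this bound, it solves the SVM problem; strict convexity of $\|\mathbf{w}\|^2$ on the convex feasible set makes the minimizer unique, so $\mathbf{w}_{svm} = \mathbf{v}/a \propto \mathbf{v}$, and both $P$ and $C$ being closed under nonzero scaling gives $\mathbf{w}_{svm} \in P \cap C$.

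The main obstacle is identifying the right scaling. The constants $a$ and $b$ in the piling identity are at first sight arbitrary, and the rescaling $\mathbf{v}/a$ only looks natural in hindsight. The payoff is the identity $\|\mathbf{v}\|^2 = 2a$, which is exactly what makes the Cauchy--Schwarz bound tight and forces the two pieces of information $\mathbf{v} \in P$ and $\mathbf{v} \in C$ to interact; without noticing it, the feasibility step and the lower-bound step do not line up, and the optimality argument does not close.
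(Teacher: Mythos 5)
Your proof is correct but takes a genuinely different route from the paper's. The paper simply verifies that any $\mathbf{v} \in P \cap C$ satisfies the hard margin KKT conditions (\ref{eq:kkt_hm1})--(\ref{eq:kkt_hm3}) with multipliers $\alpha_i = a\lambda_i$ built from the convex-combination weights, and then invokes sufficiency of the KKT conditions (the constraints being qualified) to conclude that $\mathbf{v}$ is the hard margin solution. You instead produce an explicit optimality certificate: after rescaling so that $\mathbf{v} = \mathbf{c}_+ - \mathbf{c}_-$, the piling identity gives $\|\mathbf{v}\|^2 = 2a$ and makes $(\mathbf{v}/a,\,-b/a)$ feasible with every constraint tight, while averaging the constraints over the convex weights and applying Cauchy--Schwarz gives the matching lower bound $\|\mathbf{w}\|^2 \ge 4/\|\mathbf{v}\|^2 = \|\mathbf{v}/a\|^2$ for every feasible $\mathbf{w}$; strict convexity of the objective in $\mathbf{w}$ then forces $\mathbf{w}_{svm} = \mathbf{v}/a$. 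This is essentially a hand-rolled weak-duality argument, so it is more elementary (no appeal to KKT sufficiency or constraint qualification) at the cost of the extra tightness computation; the paper's version is shorter because the KKT machinery is already in place from Section \ref{ss:hm_kkt}. Two small points to tidy up: membership in $C$ only gives $\mathbf{v} \propto \mathbf{c}_+ - \mathbf{c}_-$, so you should say explicitly that you renormalize the representative (harmless, since $P$ and $C$ are closed under nonzero scaling, and $2a = \|\mathbf{v}\|^2 > 0$ by separability then pins down the sign of $a$); and the appeal to Corollary \ref{bp_intersection} is not actually needed, since proving $\mathbf{w}_{svm} = \mathbf{v}/a$ for your chosen $\mathbf{v}$ already places $\mathbf{w}_{svm}$ in $P \cap C$.
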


\begin{proof}
Let $\mathbf{v} \in P \cap C$. We show $\mathbf{v}$ satisfies the KKT conditions. The lemma then follows since the KKT conditions necessary and sufficient for hard margin SVM (the constraints are qualified, see Chapter 4 of \citealt{mohri2012foundations}).

Since $\mathbf{v} \in C$ we have that $\mathbf{v} \propto \mathbf{c}_+ - \mathbf{c}_-$ where $\mathbf{c}_j \in  \text{conv}(\{\mathbf{x}_i \}_{i \in I_j})$. For some constant $a >0$ 
$$\mathbf{v} = a \left(\sum_{i \in I_+} \lambda_i \mathbf{x}_i - \sum_{i \in I_-} \lambda_i \mathbf{x}_i \right),$$
where 
$$ \sum_{i \in I_+} \lambda_i = \sum_{i \in I_+} \lambda_i = 1 \text{ and } \lambda_i \ge 0.$$
Since $\mathbf{v} \in P$ we can select $b, \mathbf{v}$ such that
$$ y_i(\mathbf{x}_i \cdot \mathbf{v} + b) = 1 \text{ } \forall i.$$
But these three equations are the KKT conditions with $\alpha_i = a \lambda_i$.

\end{proof}

%%%%%%%%%%%%%%%%%%%%%%%%%%%%%%%%%%%%%%%%%%%%%%%%%%%%%%%%

\section{Soft Margin SVM Small and Large $C$ Regimes}\label{s:softsvm}

This section characterizes the behavior of SVM for the small and large regimes of the cost parameter $C$. We make no assumptions about the dimension of the data $d$. We state the main results for the small and large $C$ regimes, provide the KKT conditions, then prove the tuning regimes results. 

We first make two geometric definitions that play an important role in characterizing SVM's tuning behavior. The two class \textit{diameter} measures the spread of the data.

\begin{definition}
\label{def:diam}
Let the two class diameter be
$$D := \max_{\mathbf{x}_+ \in I_+, \mathbf{x}_- \in I_-} ||\mathbf{x}_+ - \mathbf{x}_-||.$$
\end{definition}
The \textit{gap} measures the separation between the two data classes.

\begin{definition}
\label{def:gap}
Let the two class gap $G$ be the minimum distance between points in the convex hulls of the two classes i.e.
$$G := \min_{\mathbf{c}_j \in \text{conv}(\{\mathbf{x}_i \}_{i \in I_j})} ||\mathbf{c}_+ - \mathbf{c}_-||.$$
\end{definition}
If the data are not linearly separable then $G = 0$. 

Using the above geometric quantities we define two threshold values of $C$ which determine when the SVM enters its different behavior regimes. 
\begin{definition}
\label{def:c_small}
For two classes of data let
\begin{equation}
C_{\text{small}} := \frac{2}{\max{(n_+, n_-)} D^2},
\end{equation}
where $D$ is the diameter of the training data. 
\end{definition}

\begin{definition}
\label{def:c_large}
If the two data classes are linearly separable let
\begin{equation}
C_{\text{large}} := \frac{2}{G^2},
\end{equation}
 where $G$ is the gap between the classes.
\end{definition}

As illustrated in Figures \ref{fig:svm_bal} and \ref{fig:svm_unbal}, the main result for the small $C$ regime is given by Theorem \ref{thm:soft_small_C} and Corollary \ref{cor:small_c_explosion}. We call the support vectors lying strictly within the margin \textit{slack vectors} (Definition \ref{def:slack_vec}).
\begin{theorem}
\label{thm:soft_small_C}
When every point in the smaller (negative) class is a slack vector,
\begin{itemize}
\item if the classes are balanced then the SVM direction becomes the mean difference direction i.e. $\mathbf{w}_{svm} \propto \mathbf{w}_{md}$.

\item if the classes are unbalanced then the SVM direction satisfies the constraints in Equations \ref{eq:cropped_md},  \ref{eq:cropped_md_constraint} making it a cropped mean difference.

\begin{equation}
\label{eq:cropped_md}
\mathbf{w}_{svm} = \sum_{i \in M_+} \alpha_i \mathbf{x}_i  + C\sum_{i \in L_+} \mathbf{x}_i - C \sum_{i \in I_-} \mathbf{x}_i,
\end{equation}
subject to 
\begin{equation}
\label{eq:cropped_md_constraint}
\sum_{i \in M_+} \alpha_i = C(|L_+| - n_- ).
\end{equation}

\end{itemize}
Furthermore, $C < C_{\text{small}}$ is a sufficient condition such that every point in the smaller class is a slack vector.
\end{theorem}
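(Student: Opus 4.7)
The plan is to derive the soft-margin KKT conditions for Problem (\ref{eq:svm_soft}) in the same style as Section \ref{ss:hm_kkt}, extract both structural conclusions by algebraic manipulation, and then establish the sufficient condition via matching upper and lower bounds on $\|\mathbf{w}_{svm}\|$. The key KKT ingredients are $\mathbf{w}_{svm} = \sum_i \alpha_i y_i \mathbf{x}_i$, the sum constraint $\sum_i \alpha_i y_i = 0$, the box constraint $0 \le \alpha_i \le C$, and the complementary slackness pair $\alpha_i(y_i(\mathbf{w}^T\mathbf{x}_i + b) - 1 + \xi_i) = 0$ together with $(C - \alpha_i)\xi_i = 0$. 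The second slackness identity is the critical new feature relative to the hard-margin case: every slack vector ($\xi_i > 0$) must have $\alpha_i = C$.

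For the structural statements, I would assume every $i \in I_-$ is a slack vector so that $\alpha_i = C$ for all $i \in I_-$, and then read off $\sum_{i \in I_+}\alpha_i = n_- C$ from the sum constraint. In the balanced case $n_+ = n_-$, the box constraint $\alpha_i \le C$ combined with $|I_+| = n_+$ forces $\alpha_i = C$ for every $i \in I_+$ as well; substituting back collapses $\mathbf{w}_{svm}$ to $C n_+ (\bar{\mathbf{x}}_+ - \bar{\mathbf{x}}_-) \propto \mathbf{w}_{md}$. In the unbalanced case, I would partition $I_+$ according to the value of $\alpha_i$ into margin supports $M_+ := \{i : 0 < \alpha_i < C\}$ (which have $\xi_i = 0$ by the box slackness), slack supports $L_+ := \{i : \alpha_i = C\}$ (which have $\xi_i > 0$), and the non-support remainder ($\alpha_i = 0$). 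Substituting these values into $\mathbf{w}_{svm} = \sum_i \alpha_i y_i \mathbf{x}_i$ immediately gives Equation (\ref{eq:cropped_md}), and rewriting the sum constraint gives the side condition (\ref{eq:cropped_md_constraint}).

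For the sufficient condition I would argue by contrapositive: suppose some $\mathbf{x}_n \in I_-$ is not a slack vector, so $\xi_n = 0$ and the primal constraint gives $-(\mathbf{w}^{*T}\mathbf{x}_n + b^*) \ge 1$, and then deduce $C \ge C_{\text{small}}$. An upper bound $\|\mathbf{w}^*\|^2 \le 2 C n_-$ comes from strong duality: the dual objective equals $\|\alpha\|_1 - \tfrac12\|\mathbf{w}(\alpha)\|^2$, matching it against the primal yields the identity $\|\alpha^*\|_1 = \|\mathbf{w}^*\|^2 + C\|\xi^*\|_1$, and the KKT equality $\sum_{I_+}\alpha_i^* = \sum_{I_-}\alpha_i^*$ combined with $\alpha_i^* \le C$ bounds $\|\alpha^*\|_1 \le 2 C n_-$. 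For the matching lower bound, once one produces some $\mathbf{x}_p \in I_+$ with $\xi_p = 0$, subtracting the two margin inequalities and applying Cauchy--Schwarz gives $\|\mathbf{w}^*\|\cdot D \ge \mathbf{w}^{*T}(\mathbf{x}_p - \mathbf{x}_n) \ge 2$, so $\|\mathbf{w}^*\|^2 \ge 4/D^2$. Combining the two inequalities produces $4/D^2 \le 2Cn_- \le 2Cn_{\max}$, hence $C \ge 2/(n_{\max} D^2) = C_{\text{small}}$, contradicting $C < C_{\text{small}}$.

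The main obstacle is producing the non-slack positive $\mathbf{x}_p$ needed for the lower bound. If every point in $I_+$ happens to be slack, the sum constraint together with $\alpha_i \le C$ forces $n_+ C = \sum_{I_+}\alpha_i = \sum_{I_-}\alpha_i \le n_- C$, hence $n_+ = n_-$ and $\alpha_i = C$ uniformly, so $\mathbf{w}^* = Cn_+\mathbf{w}_{md}$. In this balanced degenerate case the intercept $b^*$ is not uniquely pinned down by the KKT conditions, and the objective is flat over a closed interval of feasible $b$ values; a non-slack $\mathbf{x}_n$ can only arise at an endpoint of that interval, which one can rule out by a general position argument or by the standard convention of selecting $b^*$ from the interior. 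Outside this boundary configuration, the subtraction argument produces the required lower bound and completes the contradiction uniformly across the balanced and unbalanced cases.
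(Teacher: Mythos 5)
Your structural arguments (balanced $\Rightarrow$ MD, unbalanced $\Rightarrow$ Equations \ref{eq:cropped_md}--\ref{eq:cropped_md_constraint}) coincide with the paper's: both hinge on $\xi_i>0\Rightarrow\alpha_i=C$ for the whole negative class, the weight balance $\sum_{I_+}\alpha_i=\sum_{I_-}\alpha_i=n_-C$, and the box constraint forcing $\alpha_i=C$ throughout $I_+$ when $n_+=n_-$. (Your partition of $I_+$ by the value of $\alpha_i$ rather than by $\xi_i$ differs from the paper's Definitions \ref{def:margin_vec}--\ref{def:slack_vec} only on the measure-zero event $\alpha_i=C,\ \xi_i=0$, and both conventions make the two displayed identities true.) Where you genuinely diverge is the sufficiency of $C<C_{\text{small}}$. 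The paper argues directly: $\|\mathbf{w}_{svm}\|\le AD\le n_+CD$ via the convex-combination/diameter bound (Corollary \ref{margin_explode}), so the slab between the marginal hyperplanes has width $2\rho>D$, whence at least one class is entirely slack, and the weight balance rules out the larger class being that one. You argue by contrapositive with a strong-duality upper bound $\|\mathbf{w}^*\|^2\le\|\alpha^*\|_1\le 2n_-C$ against the lower bound $\|\mathbf{w}^*\|^2\ge 4/D^2$ obtained from one non-slack point per class. Your lower bound is literally the paper's geometric step in analytic clothing, but your upper bound is different and in fact slightly sharper (it yields $C\ge 2/(n_-D^2)\ge C_{\text{small}}$, and avoids invoking Lemma \ref{diam}). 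Both are correct.

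The one place your route costs you something is the case you flag at the end: if every point of $I_+$ is slack you cannot manufacture the non-slack positive point, and you fall back on a genericity/convention argument about $b^*$. Two remarks. First, complementary slackness ($\alpha_n=C>0$ forces $y_n(\mathbf{w}^T\mathbf{x}_n+b)=1-\xi_n=1$) means your putative non-slack negative point sits exactly on the marginal hyperplane, so your claim that it can only occur at an endpoint of the flat interval of optimal intercepts is correct, but "rule out by convention" is not a proof of the theorem as literally stated for every KKT point. Second, you should be aware that the paper's own proof of Lemma \ref{lem:totalslack} has exactly the same wrinkle: in the balanced case it only establishes that \emph{some} class is entirely slack and explicitly leaves open which one. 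So this is a shared degeneracy rather than a defect specific to your argument; in both treatments the substantive conclusion $\mathbf{w}_{svm}\propto\mathbf{w}_{md}$ survives (all $\alpha_i=C$ regardless), and only the labeling of which balanced class is "the smaller one" is at stake. I would accept your proof with that caveat stated honestly rather than papered over by "general position."
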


Theorem \ref{thm:soft_small_C} characterizes a kind of cropped mean difference. The mean difference direction points between the mean of the first class and the mean of the second class. Recall $\mathbf{w}_{svm}$ always goes between points in the convex hulls of the two classes. Equation \ref{eq:cropped_md} says that in the small $C$ regime $\mathbf{w}_{svm} $ points between the mean of the smaller (negative) class (the third term) and a point that is close to the mean in the larger (positive) class. The cropping happens by ignoring non-support vectors. While points on the margin do not necessarily receive equal weight, Equation \ref{eq:cropped_md_constraint} bounds the amount of weight put on points on margin points.  Note Equations \ref{eq:cropped_md}, \ref{eq:cropped_md_constraint}  are stronger than the simple constraint that $\sum_{i \in I_+} \alpha_i = n_- C$  (Lemma 2 from \cite{hastie2004entire}) since  all of the slack vectors in the positive class receive the same weight.  

Lemma \ref{lem:margin_dim_bound} strengthens Lemma \ref{thm:soft_small_C} in the case $n_+ >> d$ (i.e. there can't be too many margin vectors in Equation \ref{eq:cropped_md} )
\begin{lemma}
\label{lem:margin_dim_bound}
If the data are in general position the larger class can have at most $n_- +d - 1$ support vectors.
\end{lemma}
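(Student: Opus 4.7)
The plan is to bound the two kinds of positive-class support vectors separately and then combine them via the KKT balance condition. Write $M_+$ and $L_+$ for the positive-class margin and slack vectors that appear in Equation~\ref{eq:cropped_md}; since these are disjoint subsets of $I_+$, the number of positive-class support vectors is $|M_+| + |L_+|$. Under the hypothesis of Theorem~\ref{thm:soft_small_C} every $i \in I_-$ is a slack vector, so $\alpha_i = C$ on all of $I_-$, $\alpha_i = C$ on $L_+$, and $0 < \alpha_i \le C$ on $M_+$.

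Step one is to extract $|L_+| \le n_-$ from the KKT equation $\sum_i \alpha_i y_i = 0$. Splitting the sum across the two classes gives
\[
\sum_{i \in M_+} \alpha_i \;+\; C|L_+| \;=\; C n_-,
\]
so $\sum_{i \in M_+} \alpha_i = C(n_- - |L_+|) \ge 0$. In particular $|L_+| \le n_-$, and when equality holds the left-hand sum vanishes, which (since each $\alpha_i$ on $M_+$ is strictly positive) forces $M_+ = \emptyset$.

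Step two is a dimension bound: $|M_+| \le d$. Every margin vector satisfies $\mathbf{w}^T \mathbf{x}_i + b = 1$, so all points of $M_+$ lie in a common $(d-1)$-dimensional affine hyperplane (the marginal hyperplane; $\mathbf{w} \ne 0$ in the nondegenerate setting). General position then precludes $d+1$ data points from lying in any $(d-1)$-dimensional affine subspace, so $|M_+| \le d$. Combining via a case split finishes the argument: if $|L_+| = n_-$, step one gives $M_+ = \emptyset$ and so $|M_+| + |L_+| = n_- \le n_- + d - 1$; otherwise $|L_+| \le n_- - 1$, and $|M_+| + |L_+| \le d + (n_- - 1) = n_- + d - 1$.

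The main obstacle is precisely this final tightening by one. The naive combination of $|M_+| \le d$ and $|L_+| \le n_-$ only yields $|M_+| + |L_+| \le d + n_-$, so the content of the lemma reduces to the observation that $|L_+| = n_-$ and $M_+ \ne \emptyset$ cannot both hold. The KKT balance equality, combined with the strict positivity of $\alpha_i$ on margin vectors, is what removes that last unit of slack; no deeper geometry beyond general position is needed.
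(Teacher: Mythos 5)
Your proof is correct. The paper itself does not print a proof of Lemma \ref{lem:margin_dim_bound} (full proofs are deferred to the online supplement; the printed text only records the weaker Corollary \ref{bounce}), so there is no line-by-line comparison to make, but your argument is surely the intended one: the KKT weight balance caps the number of positive-class slack vectors at $n_-$, general position caps the number of margin vectors at $d$ because they all lie on the affine hyperplane $\{\mathbf{x} : \mathbf{w}^T\mathbf{x}+b=1\}$, and the equality case of the first bound is what removes the last unit. Two remarks. First, you invoke the hypothesis of Theorem \ref{thm:soft_small_C} (all of $I_-$ slack, so $\alpha_i=C$ there), but this needlessly narrows the statement: using only $0\le\alpha_i\le C$ on $I_-$ the balance condition gives $\sum_{i\in M_+}\alpha_i + C|L_+| = \sum_{i\in I_-}\alpha_i \le Cn_-$, which yields the same two conclusions ($|L_+|\le n_-$, and $|L_+|=n_-$ forces $M_+=\emptyset$) for \emph{every} value of $C$ --- consistent with the paper's assertion in Section \ref{ss:class_imbal} that the bound holds for all $C$. (Your sign, $\sum_{M_+}\alpha_i = C(n_- - |L_+|)\ge 0$, is the correct one; the paper's Equation \ref{eq:cropped_md_constraint} has the sign reversed.) Second, your parenthetical ``$\mathbf{w}\ne 0$'' is doing real work: if $\mathbf{w}=0$ the marginal ``hyperplane'' is not $(d-1)$-dimensional and the dimension count collapses, so that degenerate case should be excluded explicitly rather than waved at. Neither point is a gap in substance.
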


As $C$ continues to shrink past $C_{\text{small}}$ the margin width continues to grow. Eventually the separating hyperplane will be pushed past the smaller class and every training point will be classified to the larger class (see Figure \ref{fig:unbal_error}). Note this results follows from the proofs in Section \ref{ss:small_c_proofs}. 
\begin{corollary}
\label{cor:small_c_explosion}
If the classes are unbalanced and $C < \frac{1}{2}C_{\text{small}}$ then every training point is classified to the larger (positive) class.
\end{corollary}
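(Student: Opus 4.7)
The plan is to combine Theorem \ref{thm:soft_small_C} with a diameter-based bound on $\|\mathbf{w}_{svm}\|$ and push the separating hyperplane past every training point via Cauchy--Schwarz. Since $C < \tfrac{1}{2} C_{\text{small}} < C_{\text{small}}$, Theorem \ref{thm:soft_small_C} applies: every $\mathbf{x}_i \in I_-$ is a slack vector, so by KKT $\alpha_i = C$ for all $i \in I_-$ and the dual balance $\sum_i \alpha_i y_i = 0$ forces $\sum_{i \in I_+} \alpha_i = C n_-$. Using the cropped mean difference representation in Equation \ref{eq:cropped_md}, I would rewrite $\mathbf{w}_{svm} = C n_-(\mathbf{c}_+ - \bar{\mathbf{x}}_-)$, where $\mathbf{c}_+ \in \mathrm{conv}(\{\mathbf{x}_i\}_{i \in I_+})$. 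By Definition \ref{def:diam}, $\|\mathbf{c}_+ - \bar{\mathbf{x}}_-\| \le D$, yielding the crucial norm bound $\|\mathbf{w}_{svm}\| \le C n_- D$.

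In the unbalanced case $n_+ > n_-$, combining $\sum_{i \in I_+}\alpha_i = Cn_-$ with $\alpha_i \le C$ forces at least one positive class point $\mathbf{x}_k^+$ with $\alpha_k < C$, hence a margin or non-support vector satisfying $\mathbf{w}_{svm}^T \mathbf{x}_k^+ + b \ge 1$. For any $\mathbf{x}_i^- \in I_-$, Cauchy--Schwarz then yields
\[
\mathbf{w}_{svm}^T \mathbf{x}_i^- + b \;=\; \bigl(\mathbf{w}_{svm}^T \mathbf{x}_k^+ + b\bigr) + \mathbf{w}_{svm}^T(\mathbf{x}_i^- - \mathbf{x}_k^+) \;\ge\; 1 - C n_- D^2,
\]
which is strictly positive because $C < \tfrac{1}{2}C_{\text{small}} = 1/(n_+ D^2) \le 1/(n_- D^2)$, using $n_+ \ge n_-$. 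Hence every negative training point sits on the positive side of the separating hyperplane and is classified to the larger class. Positive class non-support and margin vectors trivially satisfy $\mathbf{w}_{svm}^T \mathbf{x}_j^+ + b \ge 1 > 0$, so they are also classified positively.

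The hard part will be handling positive class slack vectors, i.e., ruling out $\xi_j > 1$ for any $j \in L_+$. A naive Cauchy--Schwarz with the triangle inequality $\|\mathbf{x}_j^+ - \mathbf{x}_k^+\| \le 2D$ (routing through any negative point) yields only $\xi_j \le 2 C n_- D^2$, which is not automatically less than $1$ under $C < \tfrac{1}{2}C_{\text{small}}$ when the imbalance is mild. I expect the sharper argument needed here to exploit either the specific alignment $\mathbf{w}_{svm} \propto \mathbf{c}_+ - \bar{\mathbf{x}}_-$ (a convex direction, not an arbitrary one) or primal stationarity in $b$: the subgradient condition $0 \in \partial_b J$ under the all-negatives-slack hypothesis forces $|L_+| \le n_-$, and comparing the positive and negative slack contributions should pin down $b$ tightly enough to conclude $\mathbf{w}_{svm}^T \mathbf{x}_j^+ + b > 0$ for every $j \in L_+$. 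This last step is the decisive one; once it is in place, every training point lies strictly on the positive side of the hyperplane and the corollary follows.
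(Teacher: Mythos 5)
Your treatment of the smaller class is correct and is essentially the paper's intended argument: the paper gives no explicit proof of this corollary, deferring to Section \ref{ss:small_c_proofs}, where Corollary \ref{margin_explode} and Lemma \ref{lem:totalslack} supply exactly your chain of reasoning --- all of $I_-$ is slack, $\|\mathbf{w}_{svm}\| \le C n_- D$, some positive point has score at least $1$ because the larger class cannot be entirely slack, and one cross-class application of Cauchy--Schwarz puts every negative point strictly on the positive side. (Your norm bound $\|\mathbf{w}_{svm}\|\le Cn_-D$ is in fact slightly sharper than the $\le Cn_+D$ stated in the paper's lemma.)

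The step you flagged as ``the decisive one'' --- ruling out $\xi_j>1$ for $j\in L_+$ --- is not merely hard; it fails, so no refinement of the subgradient-in-$b$ argument will close it. Counterexample in $d=1$: put $n_-=10$ negative points at $0$ and $n_+=11$ positive points, ten at $+1$ and one at $-1$. Then $D=1$ and $C_{\text{small}}=2/11$, and for $C=1/12<\tfrac12 C_{\text{small}}$ the KKT conditions (which are sufficient here) are satisfied by $w=8C=2/3$, $b=1/3$: the ten positives at $+1$ are margin vectors with total weight $9C$, while the outlier and all ten negatives are slack with weight $C$ each, giving $w=9C-C=8C$. The positive outlier has score $-2/3+1/3=-1/3<0$ and is classified to the \emph{smaller} class. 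So the two-hop cost $2\|\mathbf{w}_{svm}\|D$ you computed is essentially tight when $n_-<n_+<2n_-$, and the corollary only holds as written under an extra hypothesis such as $n_+\ge 2n_-$ (which makes your bound $\xi_j\le 2n_-CD^2<2n_-/n_+\le 1$ go through), or with ``every training point'' weakened to ``every training point of the smaller class'' --- which is all that the paper's margin-bounce narrative actually uses, and for which your proof is already complete.
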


If the data are separable then in the large $C$ regime soft margin SVM becomes equivalent to hard margin SVM for sufficiently large $C$.
\begin{theorem}
\label{thm:soft_large_C}
If the training data are separable then when $C > C_{\text{large}}$, soft margin SVM is equivalent to the hard margin SVM solution i.e. $\mathbf{w}_{svm} = \mathbf{w}_{hm-svm} $.
\end{theorem}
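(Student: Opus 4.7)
The plan is to exhibit the hard margin SVM primal solution together with its dual weights as a feasible point for the soft margin KKT system whenever $C > C_{\text{large}}$, and then invoke sufficiency of the KKT conditions (soft margin SVM is convex with qualified linear constraints, cf.\ \citealt{mohri2012foundations}) to conclude that the two optimization problems share an optimal solution. Strict convexity of $\tfrac{1}{2}\|\mathbf{w}\|^2$ in $\mathbf{w}$ then forces $\mathbf{w}_{svm} = \mathbf{w}_{hm-svm}$.

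The crucial step is to bound each hard margin dual weight $\alpha_i^{hm}$ above by $C_{\text{large}}$. First I would invoke the duality between hard margin SVM and the nearest points problem on the two convex hulls (e.g., \citealt{bennett2000duality}): the total margin width $2/\|\mathbf{w}_{hm-svm}\|$ equals the gap $G$, giving $\|\mathbf{w}_{hm-svm}\|^2 = 4/G^2 = 2 C_{\text{large}}$. Next, I would pair Equation \ref{eq:kkt_hm1} with $\mathbf{w}_{hm-svm}$, use the support vector identity from Equation \ref{eq:kkt_hm3} to replace $y_i \mathbf{w}_{hm-svm} \cdot \mathbf{x}_i$ by $1 - y_i b_{hm-svm}$ on indices where $\alpha_i^{hm} > 0$, and discard the resulting intercept contribution via Equation \ref{eq:kkt_hm2}; this yields $\|\mathbf{w}_{hm-svm}\|^2 = \sum_i \alpha_i^{hm}$. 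Equation \ref{eq:kkt_hm2} in turn splits this sum equally across the two classes, so $\sum_{i \in I_+} \alpha_i^{hm} = \sum_{i \in I_-} \alpha_i^{hm} = C_{\text{large}}$, and since all $\alpha_i^{hm} \ge 0$ one obtains $\alpha_i^{hm} \le C_{\text{large}} < C$ for every $i$.

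With this bound in hand, verifying the soft margin KKT conditions for the candidate $(\mathbf{w}_{hm-svm}, b_{hm-svm}, \xi = 0, \alpha = \alpha^{hm})$ becomes essentially mechanical: stationarity, the class balance equation, and the primal margin constraints $y_i(\mathbf{w}\cdot\mathbf{x}_i + b) \ge 1 \ge 1 - \xi_i$ are inherited from the hard margin solution; the margin complementary slackness reduces to Equation \ref{eq:kkt_hm3} since $\xi_i = 0$; the new slack complementary slackness $(C - \alpha_i)\xi_i = 0$ is trivial with $\xi_i = 0$; and the only new dual feasibility constraint $\alpha_i \le C$ is exactly what the previous paragraph established. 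The main obstacle I anticipate is the gap-margin identity $\|\mathbf{w}_{hm-svm}\|^2 = 4/G^2$, which is where the specific formula $C_{\text{large}} = 2/G^2$ from Definition \ref{def:c_large} originates; although this identity is well known, I would want to give a brief self-contained argument (via weak duality between margin maximization and the nearest-points problem on the convex hulls) so that the appearance of the factor of $2$ in the threshold $C_{\text{large}}$ is fully traced to its geometric origin.
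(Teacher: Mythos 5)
Your proof is correct, but it takes a different route from the paper's. The paper argues by contradiction via Lemma \ref{lem:gap_ineq}: if the soft margin solution had even one slack vector, then the convex-combination structure of the KKT conditions forces $\|\mathbf{w}_{svm}\| \ge CG$, so the soft margin objective would exceed $\tfrac{1}{2}C^2G^2 > \tfrac{2}{G^2}$ once $C > C_{\text{large}}$, which is beaten by the feasible point $(\mathbf{w}_{hm-svm}, b_{hm-svm}, \xi = 0)$ of objective value $\tfrac{1}{2}\|\mathbf{w}_{hm-svm}\|^2 = \tfrac{2}{G^2}$; hence all $\xi_i = 0$ and the two problems coincide. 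You instead produce an explicit primal--dual certificate: you bound the hard margin dual weights by $\sum_i \alpha_i^{hm} = \|\mathbf{w}_{hm-svm}\|^2 = 4/G^2$, split this equally across classes via Equation \ref{eq:kkt_hm2} to get $\alpha_i^{hm} \le C_{\text{large}} < C$, and then verify the soft margin KKT system directly. Both arguments ultimately rest on the same geometric fact (the gap--margin duality $\|\mathbf{w}_{hm-svm}\| = 2/G$, or at least the upper bound $\|\mathbf{w}_{hm-svm}\| \le 2/G$ coming from the nearest-points characterization, which the paper takes from \citealt{bennett2000duality}), and your intermediate computations ($\|\mathbf{w}_{hm-svm}\|^2 = \sum_i \alpha_i^{hm}$ via Equations \ref{eq:kkt_hm1}--\ref{eq:kkt_hm3}, uniqueness of the $\mathbf{w}$-component by convexity) all check out. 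What each buys: the paper's route yields Lemma \ref{lem:gap_ineq} as a standalone quantitative bound on the margin for any $C$ with a slack vector, which is used elsewhere in its tuning discussion; your route gives the stronger explicit statement that the hard margin dual variables are themselves soft-margin dual feasible once $C > C_{\text{large}}$, which makes the origin of the factor $2$ in $C_{\text{large}} = 2/G^2$ transparent. One small note: you should indeed include the brief weak-duality argument for $\|\mathbf{w}_{hm-svm}\| = 2/G$ that you flag, since the whole threshold hinges on it, but that is a standard and short addition.
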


Note that $C_{\text{small}}$ and $C_{\text{large}}$ are lower and upper bounds---their respective limiting behavior may happen for $C$ larger that $C_{\text{small}}$ and $C$ smaller than $C_{\text{large}}$. In practice, these threshold values are a reasonable approximation. Furthermore, the $\frac{1}{D^2}$ scales is important for small values of $C$  (this can be seen in the proofs of Corollary \ref{margin_explode} and Lemma \ref{lem:totalslack})

\subsection{Soft Margin SVM KKT Conditions}\label{ss:soft_kkt}
The KKT conditions for soft margin SVM are (see \citealt{mohri2012foundations} for derivations)

\begin{equation}
\label{eq:kkt_sm1}
\mathbf{w}_{svm} = \sum_{i \in I_+} \alpha_i \mathbf{x}_i - \sum_{i \in I_-} \alpha_i \mathbf{x}_i,
\end{equation}

\begin{equation}
\label{eq:kkt_sm2}
\sum_{i \in I_+} \alpha_i = \sum_{i \in I_-} \alpha_i := A,
\end{equation}

\begin{equation}
\label{eq:kkt_sm3}
\alpha_i + \mu_i = C \text{ for } i= 1, \dots, n,
\end{equation}

\begin{equation}
\label{eq:kkt_sm4}
\alpha_i = 0 \text{ or } y_i(\mathbf{w} \cdot \mathbf{x}_i + b) = 1 - \xi_i \text{ for } i = 1, \dots, n,
\end{equation}

\begin{equation}
\label{eq:kkt_sm5}
\xi_i = 0 \text{ or } \mu_i = 0 \text{ for each } i,
\end{equation}

For soft margin SVM we define the marginal hyper planes to be $\{\mathbf{x} | \mathbf{x}^T \mathbf{w}_{svm} = \pm 1\}$ and the margin width (or just margin), $\rho$ the  distance from the separating hyperplane to the marginal hyperplanes. By construction $\rho = \frac{1}{||\mathbf{w}_{svm}||}$. For soft margin SVM, the margin does not have the same meaning as in the hard margin case, but still plays an important role. In particular, a points is a support vector if and only if it is contained within the marginal hyperplanes.

As with hard margin SVM, the soft margin direction is always a convex direction. Again points $\mathbf{x}_i$ such that $\alpha_i \neq 0$ are called support vectors. We further separate support vectors into two types.

\begin{definition}\label{def:margin_vec}
Margin vectors are support vectors $\mathbf{x}_i$ such $\alpha_i \neq 0$ and  $\xi_i = 0$.
\end{definition}

\begin{definition} \label{def:slack_vec}
Slack vectors are support vectors $\mathbf{x}_i$ such $\alpha_i \neq 0$ and  $\xi_i > 0$.
\end{definition}

Margin vectors are support vectors lying on one of the two marginal hyperplanes. Slack vectors are support vectors lying strictly on the inside of the marginal hyperplanes. Call the set of margin vectors in each class $M_j$ and the set of slack vectors $L_j$ for $j= \pm$. 

The KKT conditions imply
\begin{itemize}
\item all support vectors receive weight upper bounded by $C$  ($\mathbf{x}_i \in M_j \implies 0 < \alpha_i \le C$)
\item slack vectors receive weight exactly $C$ ($\mathbf{x}_i \in L_j \implies \alpha_i = C$)
\end{itemize}

Furthermore, the following constraint balances the weights between the two classes%  weight balance constraint holds
\begin{equation}
\label{eq:sm_weight_balance}
C |L_+|  + \sum_{i \in M_+} \alpha_i = C |L_-|  + \sum_{i \in M_-} \alpha_i.
\end{equation}
We assume that the positive class is the larger of the two classes i.e. $n_+ \ge n_-$. Unbalanced classes means $n_+ > n_-$.

\subsection{Proofs for Small $C$ Regime} \label{ss:small_c_proofs}

As $C \to 0$ the margin width increases to infinity ($\rho \to \infty$). As the margin width grows as many points as possible become slack vectors and all slack vectors get the same weight $\alpha_i = C$. Hence if the classes are balanced the SVM direction will be equivalent to the mean difference. If the classes are unbalanced then there will be some margin vectors which receive weight $ \alpha_i \le C$. The number of margin vectors is bounded by the class sizes and the dimension. 

Note the diameter, $D$, does not change if we consider the convex hull of the two classes (proof of Lemma \ref{diam} is a straightforward exercise).

\begin{lemma}
\label{diam}
$$ \max_{\mathbf{c}_j \in \text{conv}(\{\mathbf{x}_i \}_{i \in I_j})}||\mathbf{c}_+ - \mathbf{c}_-|| = \max_{\mathbf{x}_j \in I_+} ||\mathbf{x}_+ - \mathbf{x}_-|| =: D.$$
\end{lemma}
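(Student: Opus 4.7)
The plan is to prove the identity by reducing the maximization over the convex hulls to a maximization over the (finitely many) vertices, using convexity of the norm. The easy direction is $\max_{\mathbf{x}_+ \in I_+, \mathbf{x}_- \in I_-}\|\mathbf{x}_+-\mathbf{x}_-\| \le \max_{\mathbf{c}_j \in \text{conv}(I_j)}\|\mathbf{c}_+-\mathbf{c}_-\|$, which is immediate since each data point lies in its own class's convex hull.

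For the reverse inequality I would exploit that $\mathbf{c}_+ \mapsto \|\mathbf{c}_+-\mathbf{c}_-\|$ is a convex function (as the composition of the convex norm with an affine map) for any fixed $\mathbf{c}_-$. A convex function on a compact convex polytope attains its maximum at an extreme point, and the extreme points of $\text{conv}(\{\mathbf{x}_i\}_{i \in I_+})$ form a subset of $\{\mathbf{x}_i\}_{i \in I_+}$. Thus for every fixed $\mathbf{c}_- \in \text{conv}(\{\mathbf{x}_i\}_{i \in I_-})$ there exists some $\mathbf{x}_{i^\ast} \in I_+$ with $\|\mathbf{x}_{i^\ast}-\mathbf{c}_-\| \ge \|\mathbf{c}_+-\mathbf{c}_-\|$ for all $\mathbf{c}_+ \in \text{conv}(I_+)$.

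Applying the same argument with the roles of the two classes swapped, there is then some $\mathbf{x}_{j^\ast} \in I_-$ with $\|\mathbf{x}_{i^\ast}-\mathbf{x}_{j^\ast}\| \ge \|\mathbf{x}_{i^\ast}-\mathbf{c}_-\|$ for all $\mathbf{c}_- \in \text{conv}(I_-)$. Chaining the two inequalities and taking the supremum over $\mathbf{c}_+, \mathbf{c}_-$ yields the reverse inequality, so the two maxima coincide and both equal $D$.

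There is essentially no obstacle: existence of the maxima is guaranteed by continuity of the norm and compactness of the convex hulls of finite sets, and the reduction to vertices is a standard consequence of the maximum principle for convex functions on polytopes. The only thing to be careful about is invoking the principle twice (once per class) rather than trying to jointly maximize, since $(\mathbf{c}_+,\mathbf{c}_-) \mapsto \|\mathbf{c}_+-\mathbf{c}_-\|$ is jointly convex but one still wants to land at a pair of original data points, which the sequential argument cleanly delivers.
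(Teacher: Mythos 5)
Your proof is correct: the easy inclusion plus the maximum principle for convex functions on polytopes (applied once per class) is exactly the standard argument, and your care in applying it sequentially so as to land on original data points is the right touch. The paper itself omits the proof, stating only that it is a straightforward exercise, and your argument is precisely the intended one.
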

As $C \to 0$ the magnitude of $\mathbf{w}_{svm}$ goes to zero. In particular, the KKT conditions give the following bound.
\begin{lemma}
For a given $C$ the magnitude of the SVM solution is
$$||\mathbf{w}_{svm}|| \le n_+ C \cdot D.$$
\end{lemma}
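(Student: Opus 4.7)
The plan is to combine the KKT representation of $\mathbf{w}_{svm}$ with the convex-hull characterization of the diameter from Lemma \ref{diam}.

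First, I would rewrite $\mathbf{w}_{svm}$ from KKT condition (\ref{eq:kkt_sm1}) by factoring out the common weight sum $A := \sum_{i \in I_+}\alpha_i = \sum_{i \in I_-}\alpha_i$ from (\ref{eq:kkt_sm2}), obtaining $\mathbf{w}_{svm} = A(\mathbf{c}_+ - \mathbf{c}_-)$ where $\mathbf{c}_\pm := \frac{1}{A}\sum_{i \in I_\pm}\alpha_i \mathbf{x}_i$ lies in the convex hull of the corresponding class (the coefficients $\alpha_i/A$ are nonnegative and sum to one). The degenerate case $A=0$ gives $\mathbf{w}_{svm}=\mathbf{0}$ and the bound is trivial, so we may assume $A>0$.

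Next, I would apply Lemma \ref{diam}, which states that the diameter $D$ between points of the two classes is unchanged when we enlarge each class to its convex hull. This gives $\|\mathbf{c}_+ - \mathbf{c}_-\| \le D$, hence $\|\mathbf{w}_{svm}\| = A\,\|\mathbf{c}_+-\mathbf{c}_-\| \le A \cdot D$.

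Finally, I would bound $A$ using the dual-feasibility consequence of KKT condition (\ref{eq:kkt_sm3}): since $\alpha_i + \mu_i = C$ and $\mu_i \ge 0$, we have $\alpha_i \le C$ for every $i$. Therefore $A = \sum_{i \in I_+}\alpha_i \le n_+ C$, which combined with the previous inequality yields $\|\mathbf{w}_{svm}\| \le n_+ C \cdot D$, as claimed.

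There is really no conceptual obstacle here; the only mild subtlety is noting that the stated bound uses $n_+$, whereas the same argument applied to the negative-class sum in (\ref{eq:kkt_sm2}) gives the (tighter) bound $A \le n_- C$, so in fact $\|\mathbf{w}_{svm}\| \le \min(n_+,n_-)\,C\,D$. The weaker form $n_+ C D$ is nevertheless what is needed downstream (e.g., in the margin-explosion argument leading to Corollary \ref{cor:small_c_explosion}), and it follows immediately from the chain above.
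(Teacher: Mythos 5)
Your proof is correct and follows essentially the same route as the paper: factor out $A$ to express $\mathbf{w}_{svm}$ as $A(\mathbf{c}_+ - \mathbf{c}_-)$ with $\mathbf{c}_\pm$ in the convex hulls, bound the difference by $D$ via Lemma \ref{diam}, and bound $A \le n_+ C$ from $\alpha_i \le C$. Your added remarks (the $A=0$ edge case and the tighter $\min(n_+,n_-)\,C\,D$ bound) are correct refinements but do not change the argument.
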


\begin{proof}
From the KKT conditions we have
$$\mathbf{w}_{svm} = \sum_{i \in I_+ } \alpha_i \mathbf{x}_i - \sum_{i \in I_- } \alpha_i \mathbf{x}_i$$
and 
$$\sum_{i \in I_+ } \alpha_i = \sum_{i \in I_- } \alpha_i =: A.$$
Computing the magnitude of $\mathbf{w}_{svm}$

$$||\mathbf{w}_{svm} || = A \left| \left| \sum_{i \in I_+ } \frac{\alpha_i}{A} \mathbf{x}_i  -  \sum_{i \in I_- } \frac{\alpha_i}{A} \mathbf{x}_i  \right|\right|.$$
Since the two terms are convex combinations we get 
$$|| \mathbf{w}_{svm} ||  \le A \sup_{\mathbf{c}_j \in \text{conv}(\{\mathbf{x}_i \}_{i \in I_j})} ||\mathbf{c}_+ - \mathbf{c}_-||.$$
applying Lemma \ref{diam}
$$|| \mathbf{w}_{svm} || = A \max_{\mathbf{x}_j \in I_+} ||\mathbf{x}_+ - \mathbf{x}_-|| $$
$$|| \mathbf{w}_{svm} || = AD.$$
Since $0 \le \alpha_i \le C$ we get $A \le n_1 C$ thus proving the bound.
\end{proof}

Since the magnitude of $\mathbf{w}_{svm}$ determines the margin width, using the previous lemma we get the following corollary.
\begin{corollary}
\label{margin_explode}
The margin $\rho$ goes to infinity as $C$ goes to zero. In particular 
$$\rho = \frac{1}{||\mathbf{w}_{svm}||} \ge \frac{1}{n_+C D}.$$
\end{corollary}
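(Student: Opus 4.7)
The plan is to derive this corollary as an almost immediate consequence of the preceding lemma, which already establishes the bound $\|\mathbf{w}_{svm}\| \le n_+ C \cdot D$. The only additional ingredient is the relationship between the magnitude of the SVM normal vector and the margin width, which was recorded earlier as $\rho = 1/\|\mathbf{w}_{svm}\|$ (stated both in the hard margin KKT discussion in Section \ref{ss:hm_kkt} and recalled for the soft margin case in Section \ref{ss:soft_kkt}).

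First I would invoke the definition $\rho = 1/\|\mathbf{w}_{svm}\|$. Then, substituting the upper bound $\|\mathbf{w}_{svm}\| \le n_+ C D$ and taking reciprocals (which reverses the inequality because both quantities are positive---note $D > 0$ for nondegenerate data with two classes, and $C > 0$ by assumption), I obtain
\begin{equation*}
\rho \;=\; \frac{1}{\|\mathbf{w}_{svm}\|} \;\ge\; \frac{1}{n_+ C D},
\end{equation*}
which is the stated inequality. The limiting statement follows by observing that as $C \to 0$, the lower bound $\frac{1}{n_+ C D}$ tends to $+\infty$, and hence $\rho \to \infty$ as well.

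There is essentially no obstacle here; the work has been done in the previous lemma. The only small subtlety worth noting is that one should implicitly assume the two classes are nontrivial (so $D > 0$) so that the reciprocal is well defined, and one should be comfortable applying the reciprocal to an inequality of positive quantities. Because the result is this direct, I would present the proof as a short two-line derivation rather than a multi-step argument.
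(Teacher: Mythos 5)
Your proposal is correct and is exactly the argument the paper intends: the corollary is stated as an immediate consequence of the preceding lemma's bound $\|\mathbf{w}_{svm}\| \le n_+ C D$ together with the identity $\rho = 1/\|\mathbf{w}_{svm}\|$, with the reciprocal reversing the inequality. Your added remark that $D > 0$ is needed for the reciprocal to make sense is a reasonable (if minor) point of care that the paper leaves implicit.
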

Since the margin width increases, for small enough $C$ the smaller class becomes all slack variables.

\begin{lemma} \label{lem:totalslack}
If $C < C_{\text{small}}$ then all points in the smaller class become slack vectors ($\xi_i > 0$ for all $i \in I_-$).
\end{lemma}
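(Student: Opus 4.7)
I would argue by contradiction: assume $C < C_{\text{small}}$ yet some $\mathbf{x}_j \in I_-$ has $\xi_j = 0$, so the primal constraint forces $\mathbf{w}\cdot\mathbf{x}_j + b \le -1$. The quantitative tool is the magnitude bound $\|\mathbf{w}_{svm}\| \le n_+ C D$ from the preceding lemma, which combined with $C < C_{\text{small}} = 2/(n_+ D^2)$ yields the strict inequality $\|\mathbf{w}_{svm}\| < 2/D$.

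I would then case-split on the positive class. In Case A, there exists $\mathbf{x}_k \in I_+$ with $\mathbf{w}\cdot\mathbf{x}_k + b \ge 1$; subtracting from the negative-side inequality gives $\mathbf{w}\cdot(\mathbf{x}_k - \mathbf{x}_j) \ge 2$, so by Cauchy--Schwarz and $\|\mathbf{x}_k - \mathbf{x}_j\| \le D$ we obtain $\|\mathbf{w}_{svm}\| \ge 2/D$, directly contradicting the bound above. In Case B, every positive point satisfies $\mathbf{w}\cdot\mathbf{x}_k + b < 1$, is therefore a slack vector, and receives $\alpha_k = C$ by the KKT bullet points in Section \ref{ss:soft_kkt}, giving $A = n_+ C$. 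The KKT balance $A \le n_- C$ then forces $n_+ \le n_-$, which together with the convention $n_+ \ge n_-$ yields $n_+ = n_-$; and then $A = n_- C$ with every $\alpha_i \le C$ forces $\alpha_i = C$ for every $i \in I_-$ as well. Thus every $\alpha_i = C$, and by Equation \ref{eq:kkt_sm1} we obtain $\mathbf{w}_{svm} = C n_+ \mathbf{w}_{md}$.

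The main obstacle is closing Case B: in this degenerate balanced all-$\alpha_i = C$ regime the intercept $b$ is not uniquely determined by the KKT conditions, and a particular optimal $b$ could place $\mathbf{x}_j$ exactly on the negative marginal hyperplane (so $\xi_j = 0$ while $\mathbf{w}\cdot\mathbf{x}_j + b = -1$). I would close the argument via an intercept-shift: pick $\delta \in (0, \min_{k \in I_+} \xi_k)$, which is strictly positive because every positive point is a strict slack vector, and replace $b$ by $b + \delta$, each $\xi_k$ ($k \in I_+$) by $\xi_k - \delta$, and each $\xi_i$ ($i \in I_-$) by $\xi_i + \delta$. All primal constraints remain satisfied with non-negative slacks; since $n_+ = n_-$, the change in $\sum_i \xi_i$ is $(n_- - n_+)\delta = 0$, so the perturbed triple $(\mathbf{w}_{svm}, b + \delta, \xi')$ achieves the same objective and is also SVM-optimal. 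In this optimal solution every $i \in I_-$ is a strict slack vector (in particular $\xi'_j = \delta > 0$), contradicting the initial assumption and completing the proof.
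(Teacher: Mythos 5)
Your argument is essentially the paper's own proof. The paper bounds $\|\mathbf{w}_{svm}\| \le n_+ C D$ (equivalently $\rho \ge \frac{1}{n_+ C D}$), observes that $C < C_{\text{small}}$ makes the margin wider than $D/2$ so that at least one class must be entirely slack---your Case A via Cauchy--Schwarz is exactly the explicit form of that geometric step---and then uses the weight balance $\sum_{i\in I_+}\alpha_i = \sum_{i\in I_-}\alpha_i$ together with $\alpha_i \le C$ to rule out the larger class being the all-slack one, which is precisely your Case B in the unbalanced situation. So the substance and the route match.

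The one place you go beyond the paper, the balanced sub-case of Case B, contains a logical slip: the intercept shift produces a \emph{different} optimal triple $(\mathbf{w}_{svm}, b+\delta, \xi')$ in which every point of $I_-$ is strictly slack, but the originally assumed optimal triple with $\xi_j = 0$ is not thereby eliminated, so no contradiction is actually reached. Indeed, when the classes are balanced and all $\alpha_i = C$ the intercept is genuinely non-unique, and at the endpoints of its optimal interval some point sits exactly on a marginal hyperplane with zero slack, so the literal statement ``$\xi_i>0$ for all $i\in I_-$'' can fail for particular optimal choices of $b$. The paper quietly concedes this: its proof only claims that in the balanced case ``either class can become complete slack (or both),'' and the downstream balanced-class lemma proceeds by a without-loss-of-generality choice; what matters there is that $\alpha_i = C$ for every $i$, hence $\mathbf{w}_{svm}\propto\mathbf{w}_{md}$, which your Case B derives correctly. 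So your proof is sound everywhere the paper's is, and the residual issue is a boundary case the paper itself does not resolve.
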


\begin{proof}
By Corollary \ref{margin_explode} the margin width goes to infinity as $C \to 0$ since
$$\rho \ge \frac{1}{n_+C D}.$$
Recall the margin width, $\rho$, is the distance from the separating hyperplane to the marginal hyperplanes. Note that if $\rho > \frac{1}{2}D$ then at least one class must be complete slack. Thus if $C < \frac{2}{ n_1 D^2}$ at least one class must be complete slack i.e. $\xi_i > 0$ for all $i \in I_j$ for $j = +$ and/or $j=-$. If the classes are balanced then either class can become complete slack (or both classes).

If the classes are unbalanced i.e. $n_- < n_+$ then the smaller class becomes complete slack. To see this, assume for the sake of contradiction that the larger class becomes complete slack i.e. $\xi_i \neq 0$ for each $i \in I_+$. Then the KKT conditions imply $\alpha_i = C$ for each $i \in I_+$. KKT condition \ref{eq:kkt_sm2} says
$$\sum_{i \in I_+ } \alpha_i = \sum_{i \in I_-} \alpha_i $$
$$ n_+ C = \sum_{i \in I_- } \alpha_i.$$
But $\alpha_i \le C$ and $n_- < n_+$ by assumption therefore this constraint cannot be satisfied.
\end{proof}

If the classes are balanced then the margin swallows both classes and the SVM direction becomes the mean difference direction.
\begin{lemma}
If the classes are balanced and $C < C_{\text{small}}$ the SVM direction is equivalent to the mean difference direction i.e. $\mathbf{w}_{svm} \propto \mathbf{w}_{md}$.
\end{lemma}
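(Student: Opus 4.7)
The plan is to leverage Lemma \ref{lem:totalslack} together with the KKT weight-balance condition and the definition of $\mathbf{w}_{md}$ to show that every dual variable equals $C$, which immediately gives $\mathbf{w}_{svm}$ as a scalar multiple of $\mathbf{w}_{md}$.

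First, I would apply Lemma \ref{lem:totalslack} to conclude that when $C < C_{\text{small}}$ at least one of the two classes is entirely composed of slack vectors. By the KKT conditions (\ref{eq:kkt_sm3})--(\ref{eq:kkt_sm5}), every slack vector carries weight $\alpha_i = C$. Without loss of generality suppose $I_-$ is the all-slack class, so $\sum_{i \in I_-} \alpha_i = n_- C$. The weight-balance condition (\ref{eq:kkt_sm2}) then forces
\begin{equation*}
\sum_{i \in I_+} \alpha_i \;=\; n_- C \;=\; n_+ C,
\end{equation*}
where the last equality uses the balanced assumption $n_+ = n_-$. Since $0 \le \alpha_i \le C$ for every $i \in I_+$ and the $n_+$ values must sum to $n_+ C$, a simple pigeonhole argument forces $\alpha_i = C$ for every $i \in I_+$ as well.

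With all dual variables equal to $C$, the KKT representation (\ref{eq:kkt_sm1}) of the SVM direction becomes
\begin{equation*}
\mathbf{w}_{svm} \;=\; C \sum_{i \in I_+} \mathbf{x}_i \;-\; C \sum_{i \in I_-} \mathbf{x}_i \;=\; C n_+ \bar{\mathbf{x}}_+ - C n_- \bar{\mathbf{x}}_- \;=\; C n_+ (\bar{\mathbf{x}}_+ - \bar{\mathbf{x}}_-) \;=\; C n_+ \mathbf{w}_{md},
\end{equation*}
again using $n_+ = n_-$ in the penultimate step. Since $C n_+ \neq 0$, this is exactly the statement $\mathbf{w}_{svm} \propto \mathbf{w}_{md}$.

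The main (and really only) subtlety is the pigeonhole step: Lemma \ref{lem:totalslack} only guarantees that one class is entirely slack, not both, so one might worry that the other class has a mixture of margin and slack vectors with unequal $\alpha_i$. The balanced-class hypothesis is precisely what rescues the argument, because the equality $\sum_{I_+}\alpha_i = n_+ C$ under the pointwise constraint $\alpha_i \le C$ saturates only when every $\alpha_i$ attains its upper bound. No other part of the argument is delicate: everything else is a direct substitution into the KKT identities.
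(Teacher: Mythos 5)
Your proof is correct and follows essentially the same route as the paper's: invoke Lemma \ref{lem:totalslack} to make one class entirely slack, use $\alpha_i = C$ on slack vectors together with the weight-balance condition (\ref{eq:kkt_sm2}) and the saturation argument $\alpha_i \le C$ to force all dual weights in the other class to equal $C$, and then read off $\mathbf{w}_{svm} \propto \mathbf{w}_{md}$ from (\ref{eq:kkt_sm1}). The only difference is cosmetic: you spell out the pigeonhole step that the paper states in one line.
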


\begin{proof}
When $C < C_{\text{small}}$ one of the classes (without loss of generality the negative class) becomes slack i.e. $\xi_i >0$ for each $i \in I_-$ thus $\alpha_i = C$ for each $i \in I_-$. The KKT conditions then require
$$\sum_{i \in I_+} \alpha_i = \sum_{i \in I_-} \alpha_i  = n_- C.$$
Since $\alpha_i \le C$ and $|I_+| = n_-$ this constraint can only be satisfied if $\alpha_i = C$ for each $i \in I_+$. We now have
$$\mathbf{w}_{svm} = \sum_{i \in I_+} C \mathbf{x}_i - \sum_{i \in I_-} C \mathbf{x}_i$$
$$ \mathbf{w}_{svm}  = C \frac{n}{2} (\bar{\mathbf{x}}_+ - \bar{\mathbf{x}}_-) \propto \mathbf{w}_{md}.$$
\end{proof}

\begin{lemma}
\label{lem:cropped_md_char}
If the classes are unbalanced and $C < C_{\text{small}}$ the SVM solution satisfies the the constraints in Equations \ref{eq:cropped_md}, \ref{eq:cropped_md_constraint}.
\end{lemma}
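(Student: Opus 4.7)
The plan is to chain Lemma \ref{lem:totalslack} with a careful partitioning of the positive class index set, and then read off the two equations directly from the KKT conditions (\ref{eq:kkt_sm1}) and (\ref{eq:kkt_sm2}).

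First, I would invoke Lemma \ref{lem:totalslack}: since the classes are unbalanced and $C < C_{\text{small}}$, every point in the smaller (negative) class is a slack vector. Combined with the KKT implication that slack vectors receive weight exactly $C$ (which comes from (\ref{eq:kkt_sm3}) together with (\ref{eq:kkt_sm5}): if $\xi_i > 0$ then $\mu_i = 0$, hence $\alpha_i = C$), this gives $\alpha_i = C$ for all $i \in I_-$. So in particular $\sum_{i \in I_-} \alpha_i = n_- C$.

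Next, I would partition the positive class as $I_+ = M_+ \cup L_+ \cup N_+$, where $N_+$ is the set of non-support vectors (those with $\alpha_i = 0$). Substituting into the KKT expression (\ref{eq:kkt_sm1}) for $\mathbf{w}_{svm}$, the points in $N_+$ drop out, the points in $L_+$ all carry weight $C$ (again by the slack-vector KKT argument above), and the points in $M_+$ carry unknown weights $\alpha_i$ satisfying $0 < \alpha_i \le C$. Using $\alpha_i = C$ on $I_-$ from the previous step, this immediately yields
\begin{equation*}
\mathbf{w}_{svm} \;=\; \sum_{i \in M_+}\alpha_i \mathbf{x}_i \;+\; C\sum_{i \in L_+}\mathbf{x}_i \;-\; C\sum_{i \in I_-}\mathbf{x}_i,
\end{equation*}
which is Equation \ref{eq:cropped_md}.

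Finally, for the weight-balancing constraint (\ref{eq:cropped_md_constraint}), I would apply KKT condition (\ref{eq:kkt_sm2}): $\sum_{i \in I_+}\alpha_i = \sum_{i \in I_-}\alpha_i = n_- C$. Splitting the left side along the same partition $M_+ \cup L_+ \cup N_+$ and using that $N_+$ contributes $0$ and $L_+$ contributes $C|L_+|$ gives $\sum_{i \in M_+} \alpha_i = C(n_- - |L_+|)$, matching the stated constraint up to sign convention. The main obstacle is essentially nonexistent here: once Lemma \ref{lem:totalslack} is in place, the proof is a bookkeeping exercise in unpacking the KKT conditions, and the only subtlety is being careful that slack vectors receive weight exactly $C$ (justified by combining (\ref{eq:kkt_sm3}) and (\ref{eq:kkt_sm5})) and that non-support vectors drop out of all sums.
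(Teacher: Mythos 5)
Your proof is correct and follows essentially the same route as the paper's: invoke Lemma \ref{lem:totalslack} to make the negative class entirely slack, conclude $\alpha_i = C$ on $I_-$ from KKT conditions (\ref{eq:kkt_sm3}) and (\ref{eq:kkt_sm5}), and then read off Equations \ref{eq:cropped_md} and \ref{eq:cropped_md_constraint} from (\ref{eq:kkt_sm1}) together with the weight-balance constraint (the paper uses (\ref{eq:sm_weight_balance}) with $L_- = I_-$ and $M_- = \emptyset$, which is the same bookkeeping you do via the partition $I_+ = M_+ \cup L_+ \cup N_+$). Your sign, $\sum_{i \in M_+}\alpha_i = C(n_- - |L_+|)$, is in fact the correct one: the constraint as printed in Equation \ref{eq:cropped_md_constraint} has the sign reversed, since the left side is a sum of positive weights while $|L_+| \le n_-$ forces $C(|L_+| - n_-) \le 0$.
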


\begin{proof}
Recall for $C < C_{\text{small}}$ we have $\xi_i > 0$ for $i \in I_-$. From the KKT conditions $\xi_i > 0 \implies \mu_i = 0 \implies \alpha_i = 0$ meaning $\alpha_i = C$ for each $i \in I_-$. The weight balance constraint \ref{eq:sm_weight_balance} from the KKT conditions becomes
$$C |L_+|  + \sum_{i \in M_+ } \alpha_i = C |L_-|  + \sum_{i \in M_- } \alpha_i,$$
which then implies the conditions on $\mathbf{w}_{svm}$.
\end{proof}

\begin{corollary}
\label{bounce}
When $C < C_{\text{small}}$ the larger (positive) class can have at most $n_-$ slack vectors. If the larger class has more than $n_-$ support vectors then at least one of them must be a margin vector.
\end{corollary}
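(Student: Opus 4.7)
The plan is to derive Corollary \ref{bounce} directly from the KKT weight-balance constraint together with Lemma \ref{lem:totalslack}. Since $C < C_{\text{small}}$, Lemma \ref{lem:totalslack} tells us every point of the smaller class $I_-$ is a slack vector, so KKT condition \eqref{eq:kkt_sm5} combined with \eqref{eq:kkt_sm3} forces $\alpha_i = C$ for every $i \in I_-$. Summing over the negative class gives
$$\sum_{i \in I_-} \alpha_i \;=\; n_- C.$$

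Next I would invoke the class-balance constraint \eqref{eq:kkt_sm2}, which equates the total positive and negative weights. Splitting the positive sum into its slack-vector and margin-vector contributions and using the fact that slack vectors in $L_+$ carry exactly $\alpha_i = C$ while margin vectors in $M_+$ carry $0 < \alpha_i \le C$, we get
$$C \,|L_+| \;+\; \sum_{i \in M_+} \alpha_i \;=\; n_- C.$$
All terms on the left are nonnegative, so $C|L_+| \le n_- C$, which immediately yields $|L_+| \le n_-$. This gives the first assertion of the corollary.

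For the second assertion, suppose the larger class has more than $n_-$ support vectors, i.e.\ $|L_+| + |M_+| > n_-$. If no support vector in the positive class were a margin vector, then $|M_+| = 0$ and the inequality would force $|L_+| > n_-$, contradicting the bound just established. Hence $|M_+| \ge 1$, completing the proof.

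There is no real obstacle here: the argument is a short bookkeeping consequence of Lemma \ref{lem:totalslack} and the KKT weight-balance identity \eqref{eq:sm_weight_balance} (which was already written out in the proof of Lemma \ref{lem:cropped_md_char}). The only thing to be careful about is recording the strict versus non-strict inequalities correctly, and confirming via \eqref{eq:kkt_sm3}--\eqref{eq:kkt_sm5} that slack vectors indeed saturate the upper bound $\alpha_i = C$, which is what pins down the count $|L_+|$ against the total positive mass $n_- C$.
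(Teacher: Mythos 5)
Your proposal is correct and is essentially the paper's own argument: the corollary is treated there as an immediate consequence of Lemma \ref{lem:totalslack} and the weight-balance identity \eqref{eq:sm_weight_balance}, exactly the bookkeeping you carry out ($C|L_+| + \sum_{i\in M_+}\alpha_i = n_-C$ forces $|L_+|\le n_-$, and the second claim follows by pigeonhole). No gaps.
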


\subsection{Proofs for Large $C$ Regime}

\begin{lemma}
\label{lem:gap_ineq}
If there is at least one slack vector then for a given $C$
$$||\mathbf{w}_{svm}|| \ge C G,$$
or equivalently
$$\rho \le \frac{1}{C G},$$
where $G$ is the class gap.
\end{lemma}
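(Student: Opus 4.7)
The plan is to combine two ingredients: (i) the KKT representation of $\mathbf{w}_{svm}$ as a scaled difference of convex-hull points, and (ii) the observation that the presence of a slack vector forces the common weight sum $A$ to be at least $C$.

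First I would use KKT condition \eqref{eq:kkt_sm1} together with \eqref{eq:kkt_sm2} to write
\begin{equation*}
\mathbf{w}_{svm} \;=\; A\left( \sum_{i \in I_+} \tfrac{\alpha_i}{A}\, \mathbf{x}_i \;-\; \sum_{i \in I_-} \tfrac{\alpha_i}{A}\, \mathbf{x}_i \right) \;=\; A(\mathbf{c}_+ - \mathbf{c}_-),
\end{equation*}
where $A := \sum_{i \in I_+}\alpha_i = \sum_{i \in I_-}\alpha_i$. Since $\alpha_i/A \ge 0$ and the coefficients sum to $1$ on each side, $\mathbf{c}_\pm$ are genuine convex combinations of the points in the corresponding classes, so $\mathbf{c}_\pm \in \mathrm{conv}(\{\mathbf{x}_i\}_{i \in I_\pm})$. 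Taking norms and applying the definition of the class gap $G$ (Definition \ref{def:gap}) gives
\begin{equation*}
\|\mathbf{w}_{svm}\| \;=\; A\,\|\mathbf{c}_+ - \mathbf{c}_-\| \;\ge\; A\,G.
\end{equation*}

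Next I would lower bound $A$ by $C$ using the hypothesis. Let $\mathbf{x}_j$ be a slack vector, so $\xi_j > 0$. KKT condition \eqref{eq:kkt_sm5} then forces $\mu_j = 0$, and \eqref{eq:kkt_sm3} gives $\alpha_j = C$. If $j \in I_-$ then $A = \sum_{i \in I_-}\alpha_i \ge \alpha_j = C$; if $j \in I_+$ then $A = \sum_{i \in I_+}\alpha_i \ge \alpha_j = C$. In either case $A \ge C$, and chaining with the previous inequality yields $\|\mathbf{w}_{svm}\| \ge CG$. The equivalent margin form $\rho \le 1/(CG)$ follows from $\rho = 1/\|\mathbf{w}_{svm}\|$.

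There is no real obstacle here; the only subtlety is making sure the bound $A \ge C$ holds regardless of which class contains the slack vector, which is handled automatically by the weight-balance identity \eqref{eq:kkt_sm2}. Note also that if the data are not separable then $G = 0$ and the inequality is vacuous, while if they are separable $G > 0$ and the bound is nontrivial — consistent with the role this lemma will play in the large-$C$ analysis leading to Theorem \ref{thm:soft_large_C}.
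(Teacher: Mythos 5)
Your proposal is correct and follows essentially the same route as the paper's proof: normalize the KKT representation into a difference of convex combinations to get $\|\mathbf{w}_{svm}\| \ge AG$, then use the fact that a slack vector forces some $\alpha_i = C$ so that $A \ge C$. The only difference is that you spell out the chain $\xi_j > 0 \Rightarrow \mu_j = 0 \Rightarrow \alpha_j = C$ and the class-by-class case check explicitly, which the paper leaves implicit.
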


\begin{proof}
From the KKT conditions
$$||\mathbf{w}_{svm}|| = || \sum_{i \in I_+ } \alpha_i \mathbf{x}_i - \sum_{i \in I_- } \alpha_i \mathbf{x}_i ||,$$
$$ ||\mathbf{w}_{svm}|| = A || \sum_{i \in I_+ } \frac{\alpha_i}{A} \mathbf{x}_i - \sum_{i \in I_- } \frac{\alpha_i}{A} \mathbf{x}_i||,$$
where $A = \sum_{i \in I_+ } \alpha_i = \sum_{i \in I_-} \alpha_i$. Since the two sums are convex combinations, using the definition of $G$ we get
$$||\mathbf{w}_{svm}||  \ge A G.$$
Since there is at least one slack vector there is at least one $i$ such that $\alpha_i = C$ thus $A \ge C$ and the result follows. 
\end{proof}

\section{Summary of SVM Regimes} \label{s:regimes}

For sufficiently small values of $C$, SVM is related to the mean difference. When the data are separable, for sufficiently large values of $C$ soft margin SVM is equivalent to hard margin SVM. We note this discussion applies more broadly than just binary, linear SVM. For example, when a kernel is used, SVM becomes related to the kernel mean difference classifier. Often multi-class classification problems are reduced to a number of binary class problems e.g. using \textit{one vs. one} (OVO) or \textit{one vs. all} (OVA) schemes. Our results apply to each of these binary classification problems. For example, in a multi-class problem, even if the classes are roughly balanced, the OVA scheme may produce unbalanced classes where the behavior discussed in Section \ref{ss:class_imbal} becomes applicable. 

\subsection{Small $C$ Regime and the Mean Difference} \label{ss:small_c_md}

For sufficiently small $C$ (when every point in the smaller class is a slack vector) Theorem \ref{thm:soft_small_C} shows how soft margin SVM is related to the mean difference. 

If the data are unbalanced then the SVM direction becomes a cropped mean difference direction as characterized by Equations \ref{eq:cropped_md},  \ref{eq:cropped_md_constraint}. The direction points from the mean of the smaller class to a cropped mean of a subset of points in the larger class. The cropped mean of the larger class gives equal weight to slack vectors, puts smaller weight on margin vectors and ignores points that are outside the margin (non-support vectors). Furthermore, the number of margin vectors is bounded by the dimension when the data are in general position (Lemma \ref{lem:margin_dim_bound}).

In the small $C$ regime, if the data are balanced then the SVM direction becomes exactly the mean difference direction. Note Lemma 1 from \cite{hastie2004entire} proves this result for balanced classes,  proves a weaker version in the unbalanced case, does not give the threshold $C_{\text{small}}$, and does not discuss the connection between SVM and the MD classifier.

The lower bound $C_{\text{small}}$ is important because it shows SVM's MD like behavior applies for every dataset set. Furthermore, it shows that the value of $C$ where the MD like behavior begins depends on the data diameter and class sizes $\left(\text{i.e. is proportional to } \frac{1}{\max{(n_+, n_-)} D^2}\right)$. This dependence on the data diameter has important consequences for cross-validation which are discussed in Section \ref{s:applications}.

Note the cropped MD interpretation is often valid for a wide range of $C$ (i.e. values of $C$ larger than $C_{\text{small}}$). In particular, as $C$ shrinks, more vectors become slack vectors receiving equal weight (see  proofs and results in Section \ref{s:softsvm}). As $C$ shrinks to $C_{\text{small}}$, the angle between SVM and the cropped MD defined in Theorem \ref{thm:soft_small_C} approaches zero. This can be seen, for example, in Figure \ref{fig:unbal_angle}.

Finally, note that the relation between SVM and the MD also relates SVM to a larger set of classifiers by taking data transformation into account (see Section \ref{ss:data_transform}). It is common to apply a transformation to the data before fitting SVM (e.g. mean centering then scaling by some covariance matrix estimate). In this case, the small $C$ regime of SVM will be a (cropped) version of the transformed MD classifier. This insight connects SVM to, for example, the naive Bayes classifier. Similarly, our results also connect kernel SVM to the kernel (cropped) MD classifier.

SVM's MD behavior discussed in this section raises the question of how much performance gain SVM achieves over (robust, transformed) mean difference classifiers. This is discussed more in Section \ref{ss:dis_svm_other_classifiers}.

\subsection{Class Imbalance and the MD Regime} \label{ss:class_imbal}
Theorem \ref{thm:soft_small_C} gives some insights into SVM when the classes are imbalanced. When SVM is in the MD regime as discussed above (i.e. $C \le C_{\text{small}}$), every point in the smaller (negative) class has to be a support vector receiving equal weight. In some scenarios the MD or a cropped MD may perform very well. However, this result says in the small $C$ regime, SVM cannot crop the smaller class (it can still crop the smaller class when $C > C_{\text{small}}$). This insight can explain some scenarios where SVM performs well for small values of $C$, but then its performance suddenly degrades for even smaller values of $C$ (i.e. an outlier is forced into the smaller class's slack vectors).

Lemma \ref{lem:margin_dim_bound} says that (under weak conditions) the larger (positive) class can have at most $n_- + d + 1$ support vectors ($n_-$ = size of the smaller class). In the case $n_+ >> n_-,d$ then SVM can only use a small number of data points from the larger class to estimate the SVM direction (this is true for all values of C). This means SVM is forced to do a lot of cropping for the larger (positive) class which may be a good thing in some scenarios (i.e. if the larger class has many outliers). 

\subsection{Small $C$ Regime and Margin Bounce} \label{ss:margin_bounce}

As $C$ shrinks, the margin (distance between the marginal hyperplanes) increases. When the classes are unbalanced, the marginal hyperplane of the larger class has to stay within the convex hull of the larger class causing the separating hyperplane to move off to infinity. For small enough values of $C$ ($\le \frac{1}{2} C_{\text{small}}$), this means the separating hyperplane is pushed past the smaller class and every point is classified to the larger class (Corollary \ref{cor:small_c_explosion}). We call this behavior \textit{margin bounce} (see Figure \ref{fig:unbal_large_C} for an example).  In other words, for small values of $C$, SVM picks a reasonable direction, but a bad intercept.

When the classes are exactly balanced, the margin bounc may or may not happen (we have seen data examples of both). It would be an interesting follow up question to determine conditions for when the margin bounce happens for balanced classes. 

This insight has a few consequences. 
\begin{enumerate}
\item For Figure \ref{fig:unbal_error} (unbalanced classes) it explains why the three tuning error curves are large for small values of $C$.
\item For Figure  \ref{fig:bal_error} (balanced classes) it explains why only the cross-validation error curve is bad for small values of $C$, but the tuning and test set error curves are fine (i.e. the cross-validation training sets are typically unbalanced).
\item For small values of $C$ SVM picks a bad intercept, but a fine direction. We exploit this fact in Section \ref{ss:better_svm_intercept} to develop an improved intercept for SVM
\item The value of $C$ when the margin starts exploding depends on the diameter of the two classes. This has important implications for cross-validation which are discussed in Section \ref{ss:cv_tuning_insights} % When doing cross-validation, the diameter of the dataset will be smaller in the sub-sampled datasets. This smaller diameter causes the margin bounce to happen for larger values of $C$ in the cross-validation dataset than in the training set. Therefore, cross-validation will show a bad error rate for values of $C$ where the SVM trained on the full data set would have a much lower test set error. This 
\end{enumerate}

\subsection{Large $C$ Regime and the Hard-Margin SVM}
If the data are separable, Theorem \ref{thm:soft_large_C} says that for sufficiently large values of $C$, soft margin SVM will be equivalent to hard margin SVM. Note that in high-dimensions (i.e. $d > n$) the data are always separable. If the original dataset is non-separable, but a kernel is used the transformed dataset may in fact be separable (for example, if the implicit kernel dimension is larger than $n$).

Furthermore, the value of $C$ above which soft-margin SVM becomes equivalent to hard margin SVM depends on the gap between the two classes (see Definition \ref{def:gap}). This can have important consequences for cross-validation as discussed in Section \ref{ss:cv_tuning_insights}.

\subsection{Hard-Margin SVM and the (cropped) Maximal Data Piling Direction}

In high dimensions, (i.e. $d \ge n -1$)  Theorem \ref{thm:hm_mdp} gives geometric conditions for when hard margin SVM gives complete data piling i.e. when the SVM direction is equivalent to the MDP direction. Hard margin SVM always has some data piling; support vectors in the same class project to the same point. In this case SVM is the MDP direction of the support vectors. In this sense, hard margin SVM can be viewed as a cropped MDP direction where points away from the margin are ignored.

Complete data piling is a strict constraint and the SVM normal vector can usually wiggle away from the MDP direction to find a larger margin.  This raises the question: is complete data piling with hard margin SVM a probability zero event when the data are generated by an absolutely continuous distribution? We suspect the answer is no: it occurs with positive, but typically small probability. For example consider three points in $\mathbb{R}^2$.

Often data piling may not be desirable e.g. the normal vector may be sensitive to small scale noise artifacts \cite{marron2007distance}. Additionally, the projected data have a degenerate distribution since multiple data points lie on top of each other. However there are cases, such as an autocorrelated noise distribution, when the maximal data piling direction performs well, \cite{miao2015class}.

Corollary \ref{cor:svm_cropped_mdp} (SVM is the MDP of the support vectors) also gives an alternative characterization of hard margin SVM. Hard margin SVM searches over every subset of the data points which have a nonempty set of complete data piling directions, computes the MDP of each such subset, and selects the direction giving the largest separation. This characterization is mathematically interesting because it says we can \textit{a priori} restrict the hard margin SVM optimization problem, Equation \ref{eq:svm_hard}, to search over a finite set of directions (i.e. the complete data piling directions of the subsets of the data). Furthermore, in some cases, the MDP (Equation \ref{eq:mdp}) can be cheaply computed or approximated. For example, the analyst may use a low rank approximation to $\widehat{\Sigma}^-$ and/or select a judicious subset of data points. In these scenarios, it may make sense to approximate hard margin SVM with the MDP.

%%%%%%%%%%%%%%%%%%%%%%%%%%%%%%%%%%%%%%%%%%%%%%%%%%%%%%%%
\section{Applications of SVM Regimes} \label{s:applications}

There are are number of ways of tuning soft margin SVM including: heuristic choice, random search, Nelder-Mead and cross-validation (\citealt{nelder1965simplex, mattera1999support, chapelle2000model, hsu2003practical, christmann2005determination, steinwart2008support}). In practice one of the most popular methods is to select $C$ which optimizes the $K$-fold cross-validation error (\citealt{friedman2001elements, hsu2003practical}). Note for very unbalanced classes, the cross-validation error metric can be replaced with other test set error metrics such as F-score, Kappa, precision/recall, balanced error, AUC \cite{tan2005introduction}. This section focuses on test set error, but the discussion is relevant to these other error metrics. The discussion also focuses on cross-validation, but similar conclusions can be drawn when a fixed validation set is used. Furthermore, these insights also apply to using cross-validation to estimate the true test set error. 

% Our results suggest a default tuning parameter i.e. somewhere between $C_{small}$ and $C_{large}$. I.e. the analyst should fit a MD, a SVM with a high value of $C$ (or hard margin SVM), and an SVM with a moderate value of $C$.
\subsection{Tuning SVM via Cross-Validation} \label{ss:cv_tuning_insights}

Tuning SVM using cross-validation means attempting to estimate the tuning curve of the test set (the green line marked with triangles in Figures \ref{fig:bal_error}, \ref{fig:unbal_error}) using the tuning curve from cross-validation (the red line marked with circles). It is known that the optimal hyper-parameter settings for the full training set (of size $n$) may differ from the optimal settings for the cross-validation sets (of size $\left(1 - \frac{1}{k} \right)n$); for example, the smaller dataset often favors larger values of $C$ (more regularization) \cite{steinwart2008support}.

The results of this paper give a number of insights into how features of the data cause the cross-validation tuning curve to differ from the test set tuning curve. In particular, we have show that the tuning curve is sensitive to
\begin{enumerate} 
\item balanced vs. unbalanced classes, %(e.g. the margin bounce may or may not happen, see Section \ref{ss:margin_bounce})
\item the two class diameter $D$, %(influences when SVM behaves like MD as a function of $C$, see Section ),
\item whether or not the classes are separable,
\item whether or not $ d \ge n -1$,
\item the gap between the two classes $G$.
\end{enumerate}

Each of these characteristics can change between the full training set and the cross-validation training sets. When the characteristics change, so can SVM's behavior for small and large values of $C$. Therefore SVM may behave differently for the cross-validation folds than for the full training data. 

One dramatic example of this change in behavior can be seen in Figure \ref{fig:bal_error} as discussed in Sections \ref{ss:margin_bounce}, and \ref{ss:motivation}. In this case, the full dataset is balanced, but the cross-validation folds are typically unbalanced.

Another example of tuning behavior differences between the training and cross-validation data can be seen by looking carefully at Figure \ref{fig:unbal_error}. In this figure we can see the cross-validation error rate shoots up for larger values of $C$ than the train/test error rates. The error increases dramatically for small values of $C$ because of the margin bounce phenomena discussed in Section \ref{ss:margin_bounce}. The value of $C_{\text{small}}$ that guarantees this behavior is a function of the two class diameter $D$ (see Definition \ref{def:c_small}). Since there are fewer points in the cross-validation training set, the diameter is smaller meaning the value of $C_{\text{small}}$ is larger causing the margin to explode for larger values of $C$. 

Different data domains in terms of $n << d, n \sim d$, and $n >> d$ can make the above characteristics more or less sensitive to change induced by subsampling. For example, if $n >> d$ then subsampling is least likely to change whether $d \ge n -1$ or significantly modify the diameter $D$. With a kernel, however, even if the original $n >> d$ then it may no longer be true that $n >> d_{\text{implicit}}$ where $d_{\text{implicit}}$ is the dimension of the implicit kernel space. An interesting, possible exception to this was given by \cite{rahimi2008random} where $d_{\text{implicit}}$ may be small.

When $n$ is larger than $d$, but not by much, then subsampling is likely to change whether or not $d \ge n -1$ and whether or not the data are separable. In this case the full training data may not be separable, but the cross-validation sets may be. This means large values of $C$ will cause soft margin SVM to become hard margin SVM for cross-validation, but never for the full training data. This could result in the SVM direction being very different between cross-validation and training. 

When $d \ge n -1$ soft margin SVM will become hard margin SVM for $C \ge C_{\text{large}}$ which depends on the gap $G$ between the two classes. Subsampling the data will cause this gap to increase meaning $C_{\text{large}}$ decreases. In this case the hard margin behavior will occur for smaller values of $C$ in the cross-validation sets than for the full training set. 

It is desirable to perform cross-validation in a way that is least likely to change some of the above characteristics between the full and the cross-validation training data set. For example, 
\begin{itemize}
\item If the full training data are balanced one should ensure the cross-validation training classes are also balanced.
\item Cross-validation with a large number of folds (e.g. leave one out CV) is least likely to modify the above characteristics of the data. 
\item When $n > d$ it could be judicious to make sure that $n_{cv} > d$ for each cross-validation training set. 
\item \cite{chapelle2000model} (Section 4) suggests re-scaling the data using the covariance matrix. The analyst may modify this idea by additionally rescaling each cross-validation training set such that the diameter is (approximately) the same as the diameter of the full training set.
\item Previous papers have proposed default values for $C$ based on the given dataset \cite{mattera1999support, cherkassky2004practical}. Our results suggest other default values in the interval $[C_{\text{small}}, C_{\text{large}}]$ (when the latter exists) may be reasonable. Furthermore, default values which lie in the middle of this range may be preferable. For example, the analyst may try a simple MD classifier (producing similar results to a small $C$), one moderate and one large  value of $C$ for SVM.
\end{itemize}

\subsection{Improved SVM Intercept for Cross-Validation} \label{ss:better_svm_intercept}

As discussed in Section \ref{ss:margin_bounce}, SVM's intercept can be problematic for small values of $C$; for small values of $C$ the margin bounce causes every point to be classified to the larger of the two classes. This fact alone may not be concerning, however, as Theorem \ref{thm:soft_small_C} and Definition \ref{def:c_small} show, SVM can behave differently, as a function of $C$, for cross-validation and on the full data set.  The subsampled data sets for cross-validation will have a smaller diameter, $D$, meaning the threshold $C_{small}$ is larger for these datasets than for the full dataset. In particular, the margin explosion happens a larger value of $C$ during cross-validation than it does for the full dataset. This will cause the cross-validation test set error to be large for values of $C$ where the test set error may in fact be small.

We can fix this issue by modifying the SVM intercept as follows. Note that previous papers have suggesting modifying SVM's intercept \cite{crisp2000geometric}. Suppose we fit SVM to a dataset and it returns normal vector and intercept $\mathbf{w}_{svm}$ and $b_{svm}$ respectively. Furthermore, define the \textit{SVM centroids} by
$$
\mathbf{m}_{svm, +} = \frac{1}{A}\sum_{i \in I_+} \alpha_i \mathbf{x_i},
$$
where the $\alpha_i$ are the support vectors weights and $A$ is the total weight (Equation \ref{eq:kkt_sm2}). Note this is a convex combination of points in the positive class (hence the name SVM centroid). We define $\mathbf{m}_{svm, -}$ similarly for the negative class.

Next define an new intercept by
\begin{equation}
b_{centroid} := \frac{1}{2}\mathbf{w}_{svm}^T (\mathbf{m}_{svm, +} + \mathbf{m}_{svm, -})
\end{equation}
Note $b_{centroid} $ is the value such that SVM's separating hyperplane sits halfway between $\mathbf{m}_{svm, +} $ and $\mathbf{m}_{svm, -}$. Furthermore, note this quantity can be computed when a kernel is used.

The SVM intercept is only a problem when $C$ is small and one class is entirely support vectors (i.e. $\alpha_i >0 \forall i \in I_+$ or $\forall i \in I_-$). Finally, we define a new intercept as follows
\begin{equation}
b =
\begin{cases}
b_{centroid}, & \text{if one class is entirely support vectors} \\
b_{svm}, & \text{otherwise} 
\end{cases}
\end{equation}

Note that when the optimal value of $C$ is large, the margin explosion discussed in this section is not an issue and $b$ defined above will give the same result as the original $b_{svm}$.

The intercepts $b_{centroid}$ and $b$ defined above are not the only options. One could, for example, replace the SVM centroids with the class means (i.e. replace $\mathbf{m}_{svm, -}$ with $\mathbf{x}_+$ ). Alternatively, one could use cross-validation to select $b$ separately from $\mathbf{w}$. We focus on $b_{centroid}$ because it is simple can be interpreted as viewing SVM as a nearest centroid (as discussed in Section \ref{ss:md}).

Below we demonstrate an example where $b$ defined above improves SVM's test set performance. In this example, there are $n_+ = 51$ and $n_-=50$ points in each class living in $d = 100$ dimensions. The two classes are generated from Gaussians with identity covariance and means which differ only in the first coordinate; the mean of the positive class is the first standard basis vector and the mean of the negative class is negative the first standard basis vector. Note that MD is the Bayes rule in this example.  We tune SVM using using 5-fold cross-validation to select the optimal value of $C$ the compute the resulting test set error for an independent test set of 2000 points.

\begin{figure}
\centering
\begin{subfigure}[b]{0.4\textwidth}
\includegraphics[width=\textwidth]{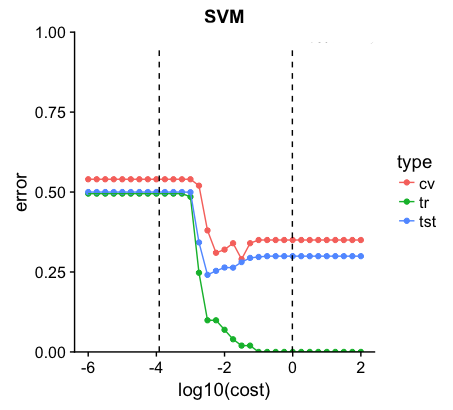}
\caption{Regular SVM intercept}
\label{fig:svm_intercept_tuning_curve_ex}
\end{subfigure}
~ %add desired spacing between images, e. g. ~, \quad, \qquad, \hfill etc. 
%(or a blank line to force the subfigure onto a new line)
\begin{subfigure}[b]{0.4\textwidth}
\includegraphics[width=\textwidth]{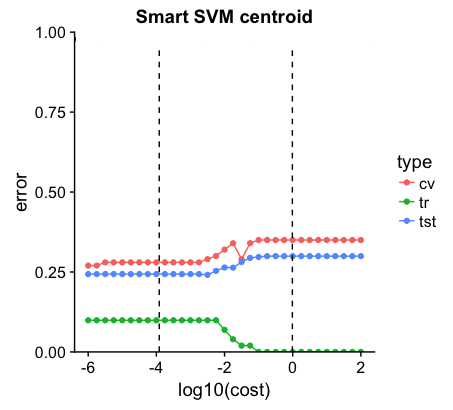}
\caption{Intercept using SVM centroids}
\label{fig:centroid_intercept_tuning_curve_ex}
\end{subfigure}
\caption{Tuning error curves for standard SVM intercept vs. improved SVM intercept.}
\label{fig:svm_interecept_ex}
\end{figure}

Figure \ref{fig:svm_interecept_ex} shows the error tuning curves (as in Figure \ref{fig:bal_error}) for the two choices of SVM intercepts for a single draw of the data. The x-axis is the tuning parameter and the y-axis is the resulting SVM error for training, testing, and 5-fold cross-validation test set error. In the left panel we see each error curve jumps up to around $50\%$ for small values of $C$ for the regular SVM intercept. Furthermore, this error explosion happens for a smaller value of $C$ for the test set error than for the cross-validation error (i.e. the blue test curve is to the left of the red cross validation curve). In the right panel, with the SVM centroid intercept, the error rate does not explode; moreover, the test error curve behaves similarly to the cross-validation curve. The curves on the right and left panels are identical for $C > 10^{-2}$.  For this data set, 5-fold cross-validation  gives a test set error of $28.1\%$ for the regular SVM intercept, but $24.35 \%$ for the SVM centroid intercept.

Over 200 repetitions of this simulation, regular SVM has an mean test set error of  $25.95\%$ (MD gives $23.95 \%$). If we replace the regular SVM intercept, $b_{svm}$ with $b$ defined above we get an average test set error of $24.80 \%$; this intercept gives an average improvement of $1.15 \%$ for this dataset (this difference is statistically significant using a paired t-test which gives a p-value of $2 \times 10^{-16}$). 

When the classes are very unbalanced other error metrics are used (e.g. F-score, AUC, Choen's Kappa, etc). If AUC is used i.e. the intercept is tuned independently of the direction, issues with the intercept discussed in this section will not occur. However, when other metrics are used the improved intercepts will likely be more effective.

The intercept $b$ defined above will not improve SVM's performance in all scenarios, but is not likely to harm the performance. The intercept $b$, however, is simple to implement and can give a better test set error.

\section{Discussion}\label{s:diss}

\subsection{Geometry of Complete Data Piling}\label{ss:data_piling_geometry_dis}

Theorems  \ref{thm:affine_piling_directions} and \ref{thm:hm_mdp} give further insight into the geometry of complete data piling directions. In this section we consider directions to be points on the unit sphere; the equivalence class of a single direction is represented by two antipodal points.

When $d \ge n$ there are an infinite number of directions $P$ that give complete data piling. If we restrict ourselves to the $n$ dimensional subspace generated by the data there are still an infinite number of directions that give complete data piling  \cite{ahn2010mdp}; within this subspace $P$ forms a great circle of directions.  Theorem \ref{thm:affine_piling_directions} says that if we further restrict ourselves to the $n -1$ dimensional affine hull of the data there is only a single direction of complete data piling and this direction is the maximal data piling direction. The aforementioned great circle of directions intersects the subspace parallel to the affine hull of the data at two points (i.e. a single direction).

Note Equation \ref{eq:mdp} shows $\mathbf{w}_{mdp}$ is a linear combination of the data and Theorem \ref{thm:affine_piling_directions} shows furthermore that $\mathbf{w}_{mdp}$ an affine direction. Finally, Theorem \ref{thm:hm_mdp} also characterizes the stronger condition when the MDP is a convex classifier (see Section \ref{ss:md})  i.e. when the MDP direction points between the convex hulls of the two classes  ($\mathbf{w}_{mdp} \in C$).

\subsection{nu-SVM and the Reduced Convex Hull} \label{ss:nu_svm}
A number of papers look at an alternative formulation of the SVM optimization problem (so called nu-SVM). These papers give an interesting, geometric perspective that characterizes soft margin SVM in terms of hard margin SVM (see citations in Section \ref{ss:lit_review}). 

Recall the convex hull of a set of points is given by $H(\{\mathbf{x}_i \}_{i=1}^n) := \left \{ \sum_{i=1}^m \lambda_i \mathbf{x}_i | \sum_{i=1}^n \lambda_i = 1, \lambda_i \ge 0  \right \}$. Suppose we decrease the upper bound on the coefficients such that $\lambda_i \le c$ for some $c \ge 0$. Define the \textit{reduced convex hull} (RCH) as
$$R_{c}(\{\mathbf{x}_i \}_{i=1}^n) := \left\{ \sum_{i=1}^n \lambda_i \mathbf{x}_i | \sum_{i=1}^n \lambda_i = 1, \lambda_i \le c \right \}$$
Note $R_c \subseteq H$, $R_c = H \iff c = 1$ and $c = \frac{1}{n} \iff R_c = \{ \frac{1}{n} \sum_{i=1}^{n} \mathbf{x}_i \}$ (i.e. a single point). Also note that, $R_c$ is not necessarily a dilation of $H$ e.g. see Figure 5 from \cite{bennett2000duality} for an example. Furthermore, define $E_c$ to be the set of \textit{extreme points} of $R_c$ (the RCH of a finite set of points is a polytope and the extreme points are the vertices of this polytope). 

Similarly to  Definition \ref{def:cvx_dir} of the convex directions for two classes, we define the set of \textit{reduced convex directions}, $RC_c$ 
\begin{definition}
\label{def:rch}
Let $0 \le c \le \min \left(\frac{1}{n_+}, \frac{1}{n_-} \right)$ and let $RC_c$ denote the set of all vectors associated with the directions that go between the $c$ reduced convex hulls convex hulls of the two classes i.e.
\end{definition}
$$RC_c = \{ a\left( \mathbf{c}_+ - \mathbf{c}_-  \right)| a \in \mathbb{R}, a \neq 0, \text{ and } \mathbf{c}_j \in R_c(\{\mathbf{x}_i \}_{i \in I_j}), j=\pm \}.$$
Similarly, let $ERC_c$ denote the set of extreme points of $RC_c$ (where the points are marked by their respective class labels). Note that even if the convex hulls of the two classes intersect, there (usually\footnote{If, for example, the class means are identical the RCH formulation may breakdown.}) exists a $c' \ge 0$ such that the $c'$ reduced convex hulls of the two classes do not intersect.

The nu-SVM literature shows that for every $C$, there exists a $c \ge 0$ that soft margin SVM direction with tuning parameter $C$ is equivalent to the hard margin SVM direction of the extreme points of the $c-$reduce convex hull of the data ($ERC_c$) which are a subset of the convex hull of the original data.

We point this geometric insight out because it gives similar geometric insights into SVM as our paper. Furthermore, the RCH formulation connects soft margin SVM to the maximal data piling direction; in particular, soft margin SVM is the MDP of the extreme points of the RCH.

\subsection{Relations Between SVM and Other Classifiers}\label{ss:dis_svm_other_classifiers}

We have shown SVM can be exactly or approximately equivalent to the mean difference or maximal data piling direction (or possibly cropped versions of these two classifiers). When the data are balanced and $C$ is sufficiently small, SVM becomes exactly the mean difference. When the data are unbalanced, SVM becomes a cropped version of the mean difference. Hard margin SVM is always the maximal data piling direction of the support vectors meaning it can be viewed as a cropped MDP. We gave conditions for when hard margin SVM is exactly the MDP of the full dataset.

These results are mathematically interesting i.e. they give conditions when a quadratic optimization problem reduces (exactly or approximately) to a problem which has a closed form solution with a simple geometric interpretation. By carefully studying how this behavior depends on the tuning parameter we give a number of insights into tuning SVM (see Section \ref{s:applications}).

Furthermore, these insights can be directly relevant to the data analyst. For example, the analyst may learn something about the data when they encounter scenarios in which SVM is either exactly or approximately equivalent to one of these simple classifiers. In scientific applications using SVM, the data analyst may want to know more about why cross-validation selects a given tuning parameter.

Our results help both practitioners and researchers transfer intuition from the MD and MDP classifiers to SVM and vice versa. The mean difference classifier is widely used (especially if one takes the data transformation perspective from Section \ref{ss:data_transform}) and a lot is known about when it works well and doesn't (e.g. if the two classes are homoskedastic point clouds). While the MDP is an active topic of research, as discussed in \cite{miao2015class}, we understand some cases when the MDP works well and doesn't. 

Finally, the results in this paper raise the question: how much performance gain does SVM achieve over more simple classifiers? For example, for a particular application it could be the case that the mean difference plus some combination of simple data transformation, robust mean estimation, and/or kernels would achieve a very similar test set error rate as SVM. This question is important to practitioners because more simple models are often favored for reasons of interpretability, computation, robustness, etc.

An interesting follow up question for researchers is to empirically compare SVM to a variety of mean difference and maximal data piling like classifiers for a large number of datasets. We suspect that in some cases, the more simple classifiers will perform very similarly to SVM and in other cases SVM will truly beat out these more simple classifiers. Finally, we recommend that practitioners keep track of at least the MD (and possibly MDP in high dimensions) when fitting SVM.

%%%%%%%%%%%%%%%%%%%%%%%%%%%%%%%%%%%%%%%%%%%%%%%%%%%%%%%%

% Acknowledgements should go at the end, before appendices and references

\acks{This research was supported in part by the National Science Foundation under Grant No. 1633074.}

% Manual newpage inserted to improve layout of sample file - not
% needed in general before appendices/bibliography.

\newpage

\appendix
\section*{Appendix A.}
In this section we prove Theorem  \ref{thm:affine_piling_directions}. Online supplementary material including code to reproduce the figures in this paper, proofs that were omitted for brevity and simulations can be found at: \url{https://github.com/idc9/svm_geometry}.

\begin{proof} \textbf{of Theorem \ref{thm:affine_piling_directions}}

We first prove the existence and uniqueness of complete data piling directions $P$ in the affine hull of the data. We then show that this unique, affine data piling direction is in fact the direction of maximal data piling.

Recall we assume that $d \ge n -1$ and the data are in general position. Let the set of affine directions $A$ be given as follows
$$A = \{  \mathbf{a}_1 - \mathbf{a}_2  |  \mathbf{a}_j \in \text{aff}(\{\mathbf{x}_i \}_1^n), j=1, 2 \}.$$
Note that $A$ is the $n-1$ dimensional subspace parallel to the affine space $\text{aff}(\{\mathbf{x}_i \}_1^n)$ generated by the data  i.e. $A$ contains the origin. 

We first show that without loss of generality $d = n -1$. Note that both $A$ and $P$ are invariant to a fixed translation of the data. Therefore, we may translate the data so that $0 \in \text{aff}(\{\mathbf{x}_i\}_{1}^n)$ (e.g. translate by the mean of the data). The data now span an $n-1$ dimensional subspace since the affine hull of the data now contains the origin. Furthermore, $\text{span}(\{\mathbf{x}\}_{1}^n)= \text{aff}(\{\mathbf{x}\}_{1}^n) = A$. Thus without loss of generality we may consider the data to in fact be $n-1$ dimensional (i.e. $d = n -1$). 

We are now looking for a vector $\mathbf{v} \in A$ that gives complete data piling. Note by the above discussion and assumption we have $A = \mathbb{R}^d$. This means we are looking for $\mathbf{v} \in  \mathbb{R}^d$ and $a, b \in \mathbb{R}$ with $a \neq 0$ satisfying the following $n$ linear equations
$$\mathbf{x}_i^T \mathbf{v} = a y_i + b \text{ for } i = 1, \dots, n.$$ 
Since the magnitude of $\mathbf{v}$ is arbitrary we fix $a = 1$ without loss of generality. We now have
$$\mathbf{x}_i^T \mathbf{v} = y_i + b \text{ for } i = 1, \dots, n$$ 
which can be written in matrix form as
\begin{equation}
\label{eq:dp_sum}
X\mathbf{v} + b\mathbf{1}_n = \mathbf{y}
\end{equation}
where $X \in \mathbb{R}^{n \times d}$ is the data matrix whose rows are the data vectors $\mathbf{x}_i$ and $\mathbf{y} \in \mathbb{R}^n$ is the vector of class labels. This is a system of $n$ equations in $\mathbf{R}^{d + 1}$ which can be seen by appending 1 onto the end of each $\mathbf{x}_i$ i.e. $\tilde{\mathbf{x}}_i = (\mathbf{x}_i, 1) \in \mathbb{R}^{d +1}$ and letting $\mathbf{w} = (\mathbf{v}, b)$. Then Equation \ref{eq:dp_sum} becomes
\begin{equation}
\label{eq:dp_matrix}
\widetilde{X}\mathbf{w} = \mathbf{y}
\end{equation}
where $\widetilde{X} \in \mathbb{R}^{n \times d+1}$ is the appended data matrix. 

Recall that we assumed $d = n-1$ so Equation \ref{eq:dp_matrix} is a system of $n$ equations in $\mathbb{R}^n$. Further recall that the data are in general position meaning that the $n$ data points are affine independent in the $n-1$ dimensional subspace of the data. Affine independence is equivalent to linear independence of $\{(\mathbf{x}_i, 1)\}_1^n$. Therefore the matrix $\widetilde{X} \in \mathbb{R}^{n \times n}$ has full rank and Equation \ref{eq:dp_matrix} always has a solution, $\mathbf{v}^*$, and this solution is unique. 

Existence of a solution to Equation \ref{eq:dp_matrix} shows that $P \cap A \neq \emptyset$. Uniqueness of the solution to Equation \ref{eq:dp_matrix} shows that this intersection $P \cap A$ can have only one direction of which $\mathbf{v}^*$ is a representative element. 

We now show that $\mathbf{v}^*$ is in fact the maximal data piling direction. We no longer assume that $d = n -1$. 

We first construct an orthonormal basis $\{\mathbf{t}_i\}_1^d$ of $\mathbb{R}^d$as follows. Let the first $n -1$ basis vectors $\mathbf{t}_1, \dots, \mathbf{t}_{n-1}$ span $A$. Let $\mathbf{t}_{n}$ be orthogonal to $A$ but in the span of the data $\{\mathbf{x}_i\}_1^n$ (recall the data span an $n$ dimensional space while the affine hull of the data is $n-1$ dimensional). Let the remaining $d - n +1$ basis vectors be orthogonal to $A$ and the span of the data. 

We show that the vector $\mathbf{t}_{n}$ projects every data point onto a single point i.e. $\mathbf{x}_{i}^T\mathbf{t}_{n} = c$ for each $i=1, \dots n$ and some $c \in \mathbb{R}$. Suppose we translate $\text{aff}(\{\mathbf{x}_i\}_{1}^n)$ along $\mathbf{t}_{n}$ until the origin lies in the affine hull of the translated data. In particular, the data now span an $n-1$ dimensional subspace that is orthogonal to $\mathbf{t}_{n}$ (where as before they spanned an $n$ dimensional subspace). We now have that for some $c \in \mathbb{R}$
$$\mathbf{t}_{n}^T (\mathbf{x}_i +c \mathbf{t}_{n})= 0 \text{ for each } i= 1, \dots, n$$
$$\mathbf{t}_{n}^T \mathbf{x}_i = c \text{ for each } i= 1, \dots, n$$
since $\mathbf{t}_{n}$ is unit norm.

Let $\mathbf{v} \in \mathbb{R}^d$ be a representative vector of the direction in the affine hull of the data that gives complete data piling (given above). Suppose $\mathbf{v}$ has unit norm and is oriented such that 
$$\mathbf{v}^T \mathbf{x}_i = a y_i + b$$
for some $a, b \in \mathbb{R}$ with $a >0$ (note fixing $a >0$ eliminates the antipodal symmetry of data piling vectors).

We now show that $\mathbf{v}$ is in fact the maximal data piling direction. Let $\mathbf{w} \in \mathbb{R}^d$ be another vector with unit norm that gives complete data piling (i.e. $\mathbf{w} \in P$). In particular, there exists $a_v, a_w, b_v, b_w  \in \mathbb{R}$ with $a_v, a_w > 0$ such that
$$\mathbf{v}^T \mathbf{x}_i = a_v y_i + b_v \text{ for each } i = 1, \dots, n.$$
$$\mathbf{w}^T \mathbf{x}_i = a_w y_i + b_w \text{ for each } i = 1, \dots, n.$$

Assume for the sake of contradiction that $\mathbf{w}$ projects the data possibly further apart than $\mathbf{v}$ does. In particular assume that $a_w \ge a_v$.

Since $\{\mathbf{t}_i\}_1^d$ is a basis we can write
$$\mathbf{w} = \sum_{i=1}^d \alpha_i \mathbf{t}_i.$$
Next compute the dot products with the data. For any $j = 1, \dots, n$, 
$$\mathbf{w} ^T \mathbf{x}_j =  \left(\sum_{i=1}^{n-1} \alpha_i \mathbf{t}_i \right)^T \mathbf{x}_j + \alpha_n \mathbf{t}_n^T\mathbf{x}_j +  \sum_{i=n+1}^d \alpha_i \mathbf{t}_i^T \mathbf{x}_j.$$ 
Recall the basis vectors $\mathbf{t}_{n+1}, \dots, \mathbf{t}_d$ are orthogonal to the data points so the third term in the sum is zero. Furthermore, the dot product of $\mathbf{t}_n$ with each data point is a constant. Thus we now have
$$\mathbf{w}^T \mathbf{x}_j =  \left(\sum_{i=1}^{n-1} \alpha_i \mathbf{t}_i \right)^T \mathbf{x}_j  + \alpha_n c, \text{ for all } j=1, \dots, n.$$
Thus we can see the vector
$$\mathbf{w}' = \sum_{i=1}^{n-1} \alpha_i \mathbf{t}_i $$
also gives complete data piling. However this vector lies in $A$ since it is a linear combination of the first $n-1$ basis vectors. We have shown that there is only one direction in $A$ with complete data piling thus $\sum_{i=1}^{n-1} \alpha_i \mathbf{t}_i \propto \mathbf{v}$. In particular, for some $\alpha > 0$
$$\sum_{i=1}^{n-1} \alpha_i \mathbf{t}_i = \alpha \mathbf{v}.$$
So we now have
$$\mathbf{w}'  = \alpha \mathbf{v} + \alpha_n \mathbf{t}_n.$$

Recall $||\mathbf{v}|| = || \mathbf{w}|| = 1$ and $\mathbf{t}_n$ is orthogonal to $\mathbf{v}$ by construction. Therefore $\alpha^2 + \alpha_n^2 = 1$. In particular if $\alpha_n >0$ then $\alpha < 1$. 

Let $\mathbf{x}_+$ and $\mathbf{x}_-$ be any point from the positive and negative class respectively. By construction we have
$$\mathbf{v}^T (\mathbf{x}_+ - \mathbf{x}_-) = a_v.$$
$$\mathbf{w}^T (\mathbf{x}_+ - \mathbf{x}_-) = a_w.$$
However expanding this last line we get
$$\mathbf{w}^T (\mathbf{x}_+ - \mathbf{x}_-) = ( \alpha \mathbf{v} + \alpha_n \mathbf{t}_n)^T (\mathbf{x}_+ - \mathbf{x}_-) $$
$$\mathbf{w}^T (\mathbf{x}_+ - \mathbf{x}_-) = \alpha \mathbf{v}^T (\mathbf{x}_+ - \mathbf{x}_-) + \alpha_n \mathbf{t}_n^T (\mathbf{x}_+ - \mathbf{x}_-).$$
But $\mathbf{t}_n^T \mathbf{x}_+  = \mathbf{t}_n^T \mathbf{x}_- = c$ so the last term is zero.  Thus we now have
$$\mathbf{w}^T (\mathbf{x}_+ - \mathbf{x}_-) = \alpha a_v.$$
Thus
$$ \alpha a_v = a_w.$$
However unless $\mathbf{w} = \mathbf{v}$ (so $\alpha_n = 0$) we have $0< \alpha <1$. Therefore $a_w < a_v$ contradicting the assumption that $a_w \ge a_v$. Therefore $ \mathbf{v}$ is the maximal data piling direction.
\end{proof}

%%%%%%%%%%%%%%%%%%%%%%%%%%%%%%%%%%%%%%%%%%%%%%%%%%%%%%%%%%%%

\section*{Appendix B.} \label{app:hm_mdp_linprog}
Theorem \ref{thm:hm_mdp} gives a geometric characterization when the set of convex directions intersects the set of complete data piling directions. We can also characterize this event through a linear program. 

An alternative way of deciding if $C \cap P = \emptyset$ and computing the intersection if it exists is through the following linear program (proof of Theorem \ref{bp_lp_equiv} is a straightforward exercise in linear programming).
\begin{theorem}
\label{bp_lp_equiv}
$C \cap P \neq \emptyset$ if and only if there is a solution to the following linear program

\begin{equation}
\label{eq:bp_lp}
\begin{aligned}
& \underset{\alpha \in \mathbb{R}^{n_+},\beta \in \mathbb{R}^{n_-}, \mathbf{v} \in \mathbb{R}^d, b \in \mathbb{R}}{\text{minimize}} & & 1 \\
& \text{subject to} & &   X\mathbf{v} + \mathbf{1}_n b = \mathbf{y} \\
& & & \sum_{i \in I_+} \alpha_i \mathbf{x}_i  - \sum_{i \in I_-} \beta_i \mathbf{x}_i  = \mathbf{v} \\
& & &  \sum_{i \in I_+} \alpha_i  = 1 \\
& & & \sum_{i \in I_-} \beta_i  = 1\\ 
& & &\alpha_i, \beta_i \ge 0 \text{ for } i = 1, \dots, n.
\end{aligned}
\end{equation}
In the case a solution $\mathbf{v}$ exists then $\mathbf{v} \in C \cap P$. 
\end{theorem}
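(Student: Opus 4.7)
The plan is to prove the equivalence by treating the LP as a pure feasibility problem (the objective is constant at $1$) and verifying the two directions separately, each time by unpacking the LP constraints against the set definitions of $C$ and $P$.

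For the direction ``LP feasible $\Rightarrow C \cap P \neq \emptyset$'': given a feasible tuple $(\alpha, \beta, \mathbf{v}, b)$, constraints (2)--(5) directly exhibit $\mathbf{v}$ as $\mathbf{c}_+ - \mathbf{c}_-$ with $\mathbf{c}_\pm \in \mathrm{conv}(\{\mathbf{x}_i\}_{i \in I_\pm})$, placing $\mathbf{v} \in C$ with scalar $a=1$. Constraint (1) rewritten coordinate-wise becomes $\mathbf{v}^T \mathbf{x}_i = y_i - b$ for all $i$, which matches Definition~\ref{def:cdp_directions} of complete data piling with coefficient $a=1$ and offset $-b$, so $\mathbf{v} \in P$. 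I would close this direction by noting $\mathbf{v} \neq 0$: otherwise constraint (1) would reduce to $b\mathbf{1}_n = \mathbf{y}$, impossible since $\mathbf{y}$ has entries of both signs.

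For the converse ``$C \cap P \neq \emptyset \Rightarrow$ LP feasible'', I would invoke Theorem~\ref{thm:hm_mdp}: whenever $P \cap C \neq \emptyset$ the hard-margin SVM direction $\mathbf{w}_{hm\text{-}svm}$ spans the intersection and exhibits complete data piling. Together with KKT complementary slackness (Section~\ref{ss:hm_kkt}) this forces every training point to lie on a marginal hyperplane, so $\mathbf{w}_{hm\text{-}svm}^T \mathbf{x}_i + b_{svm} = y_i$ for every $i$, and constraint (1) is met by $(\mathbf{w}_{hm\text{-}svm}, b_{svm})$. The KKT representation $\mathbf{w}_{hm\text{-}svm} = \sum_{i \in I_+} \alpha_i \mathbf{x}_i - \sum_{i \in I_-} \alpha_i \mathbf{x}_i$ with $\sum_{I_+}\alpha_i = \sum_{I_-}\alpha_i = A$ then supplies the convex-combination shape demanded by constraints (2)--(5) after normalizing the dual weights to sum to one.

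The main obstacle I expect is reconciling the two implicit scalings baked into the LP: constraint (1) pins the piling coefficient to $a=1$ through the right-hand side $\mathbf{y}$, while constraints (3)--(4) pin the class-wise weights to sum to one. The HM-SVM primal--dual scaling need not automatically satisfy both simultaneously, so the careful step is to rescale $(\mathbf{v},b,\alpha,\beta)$ jointly---using that any non-zero scalar multiple of a direction in $C \cap P$ remains in $C \cap P$---so that both normalizations hold. Once this bookkeeping is settled, the remaining verification that the constructed tuple satisfies constraint (2) and the non-negativity of $\alpha_i, \beta_i$ is routine, completing the equivalence.
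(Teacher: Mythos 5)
The paper offers no proof to compare against---it dismisses this as ``a straightforward exercise in linear programming''---so your attempt has to stand on its own. Your forward direction is correct: a feasible tuple exhibits $\mathbf{v}$ as an exact difference of convex-hull points (so $\mathbf{v}\in C$) and constraint (1) gives $\mathbf{v}^T\mathbf{x}_i = y_i - b$, i.e.\ complete data piling with coefficient $1$, and your observation that $b\mathbf{1}_n=\mathbf{y}$ is impossible rules out degeneracy. The gap is in the converse, and you have actually put your finger on exactly the right difficulty---the LP pins the scale of $\mathbf{v}$ twice---but your claim that a joint rescaling of $(\mathbf{v},b,\alpha,\beta)$ settles the bookkeeping is false. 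Rescaling $\mathbf{v}$ by $t$ forces $\alpha,\beta$ to rescale by $t$ to preserve constraint (2), which destroys $\sum_{I_+}\alpha_i=\sum_{I_-}\beta_i=1$, and it turns the right-hand side of constraint (1) into $t\mathbf{y}$; the feasible set admits no free scaling at all. Quantitatively: constraints (2)--(5) force $\mathbf{v}=\mathbf{c}_+-\mathbf{c}_-$ with $\mathbf{c}_\pm$ in the two convex hulls, and if such a $\mathbf{v}$ satisfies $\mathbf{v}^T\mathbf{x}_i=\tilde a y_i+\tilde b$ then $\mathbf{v}^T\mathbf{c}_\pm=\pm\tilde a+\tilde b$, hence $\|\mathbf{v}\|^2=2\tilde a$. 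Constraint (1) demands $\tilde a=1$, i.e.\ $\|\mathbf{v}\|^2=2$, and nothing about $C\cap P\neq\emptyset$ supplies a representative of that particular length.

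This is not merely a presentational issue: the equivalence fails as stated. Take $\mathbf{x}_1=(0,0)$ with $y_1=-1$ and $\mathbf{x}_2=(1,0)$, $\mathbf{x}_3=(1,1)$ with $y_2=y_3=+1$ (general position, separable, $d=n-1$). Here $P$ is the single direction spanned by $(1,0)$, and $(1,0)=(1,0)-(0,0)\in C$, so $C\cap P\neq\emptyset$; but every difference of convex-hull points has the form $(1,t)$ with $t\in[0,1]$, while constraint (1) forces $\mathbf{v}=(2,0)$, so the LP is infeasible. Consequently your converse cannot be completed by any amount of bookkeeping. The repair is to reintroduce the scale freedom that Lemma \ref{bp_attracts_svm} uses implicitly: either replace the first constraint by $X\mathbf{v}+\mathbf{1}_nb=a\mathbf{y}$ with $a$ an additional free variable, or relax the normalizations to $\sum_{i\in I_+}\alpha_i=\sum_{i\in I_-}\beta_i=s$ for a free $s\ge 0$ (either way the program stays linear). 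With that extra variable your construction of a feasible point from an element of $C\cap P$ goes through by direct unpacking of the definitions---the detour through Theorem \ref{thm:hm_mdp} and the KKT conditions is unnecessary and only imports the separability and $d\ge n-1$ hypotheses of Section \ref{s:hmsvm-hd} that the statement does not otherwise need.
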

The vector $\mathbf{1}_n \in \mathbb{R}^n$ is the vector of ones, $X$ is the $\mathbb{R}^{n \times d}$ data matrix and $\mathbf{y} \in \mathbb{R}^n$ is the vector of class labels. The first constraint says $\mathbf{v}$ must be a complete data piling direction, $\mathbf{v} \in P$. The remaining constraints say $\mathbf{v}$ must be a convex direction, $\mathbf{v} \in C$.

Note that solving this linear program is at least as hard as solving the original SVM quadratic program therefore Theorem \ref{bp_lp_equiv} is not of immediate computational interest. This theorem, however, does give an alternate mathematical description $C \cap P \neq \emptyset$ which may be of theoretical interest.

\vskip 0.2in
\bibliography{svm_refs}

\end{document}